\DeclareMathOperator{\sech}{sech}
\newcommand{\vx}{\bm{x}}
\newcommand{\vu}{\bm{u}}
\newcommand{\vp}{\bm{p}}
\newcommand{\vn}{\bm{n}}
\newcommand{\vc}{\bm{c}}
\newcommand{\vw}{\bm{w}}
\newcommand{\vb}{\bm{b}}
\newcommand{\vg}{\bm{\sigma}}
\newcommand*\diff{\mathop{}\!\mathrm{d}}
\newcommand*\Diff[1]{\mathop{}\!\mathrm{d}^{#1}}
\newcommand{\modify}[1]{{#1}}
\theoremstyle{plain}
\newtheorem{theorem}{Theorem}[section]
\newtheorem{lemma}[theorem]{Lemma}
\theoremstyle{definition}
\newtheorem{assumption}[theorem]{Assumption}
\theoremstyle{remark}
\title{A Unified Hard-Constraint Framework for\\ Solving Geometrically Complex PDEs}
\author{%
  Songming Liu$^{1}$,~
  Zhongkai Hao$^{1}$,~ 
  Chengyang Ying$^{1}$,~
  Hang Su$^{1,2}$\thanks{Corresponding author}, ~
  Jun Zhu$^{1,2*}$, ~
  Ze Cheng$^{3}$
   \\
  $^{1}$Dept. of Comp. Sci. and Tech., Institute for AI, THBI Lab, BNRist Center, \\ Tsinghua-Bosch Joint ML Center, Tsinghua University\\
  $^{2}$Peng Cheng Laboratory; Pazhou Laboratory (Huangpu), Guangzhou, China\\
  $^{3}$Bosch Center for Artificial Intelligence\\
  \texttt{csuastt@gmail.com}
}
\begin{document}

\maketitle

\begin{abstract}
  We present a unified hard-constraint framework for solving geometrically complex PDEs with neural networks, where the most commonly used Dirichlet, Neumann, and Robin boundary conditions (BCs) are considered. Specifically, we first introduce the ``extra fields'' from the mixed finite element method to reformulate the PDEs so as to equivalently transform the three types of BCs into linear equations. Based on the reformulation, we derive the general solutions of the BCs analytically, which are employed to construct an ansatz that automatically satisfies the BCs. With such a framework, we can train the neural networks without adding extra loss terms and thus efficiently handle geometrically \modify{complex} PDEs, alleviating the unbalanced competition between the loss terms corresponding to the BCs and PDEs. We theoretically demonstrate that the ``extra fields'' can stabilize the training process. Experimental results on real-world geometrically complex PDEs showcase the \modify{effectiveness} of our method compared with state-of-the-art baselines.
\end{abstract}

\section{Introduction}

Many fundamental problems in science and engineering (e.g., \cite{batchelor2000introduction,majda2003introduction,schiesser2014computational}) are characterized by partial differential equations (PDEs) with the solution constrained by boundary conditions (BCs) that are derived from the physical system of the problem. Among all types of BCs, Dirichlet, Neumann, and Robin are the most commonly used \citep{strauss2007partial}. Figure~\ref{fig:three_type_bc} gives an illustrative example of these three types of BCs. Furthermore, in practical problems, physical systems can be very geometrically complex (\modify{where the geometry of the definition domain is irregular or has complex structures,} e.g., a lithium-ion battery \citep{jeon2011thermal}, a heat sink \citep{wu2017design}, etc), leading to a large number of BCs. How to solve such PDEs has become a challenging problem shared by both scientific and industrial communities.

\modify{The field of solving PDEs with neural networks has a history of more than 20 years \citep{dissanayake1994neural, bar2019unsupervised, sun2019solving, esmaeilzadeh2020meshfreeflownet, van2021optimally}}. Such methods are intrinsically mesh-free and therefore can handle high-dimensional as well as geometrically complex problems more efficiently compared with traditional mesh-based methods, like the finite element method (FEM). Physical-informed neural network (PINN) \citep{raissi2019physics} is one of the most influential works, where the neural network is trained in the way of taking the residuals of both PDEs and BCs as multiple terms of the loss function. \modify{Although there are many wonderful improvements such as DPM \cite{kim2020dpm}, PINNs still face serious challenges as discussed in the paper} \citep{krishnapriyan2021characterizing}. Some theoretical works \citep{wang2021understanding,wang2022and} point out that there exists an unbalanced competition between the terms of PDEs and BCs, limiting the application of PINNs to geometrically complex problems. To address this issue, some researchers \citep{berg2018unified,sun2020surrogate,lu2021physics} have tried to embed BCs into the ansatz. Some of them \citep{leake2020deep, schiassi2021extreme} follow the pipeline of the Theory of Connections \citep{mortari2017theory}, while others \citep{yu2018deep,kharazmi2019variational,liao2019deep} have considered solving the equivalent variational form of the PDEs. In this way, the neural networks can automatically satisfy the BCs and no longer require adding corresponding loss terms. Nevertheless, these methods are only applicable to specific BCs (e.g., Dirichlet BCs, homogeneous BCs, etc) or geometrically simple PDEs. The key challenge is that the equation forms of the Neumann and Robin BCs have no analytical solutions in general and are thus difficult to be embedded into the ansatz.

\begin{figure}[t!]
\begin{center}
\centerline{\includegraphics[width=.9\columnwidth]{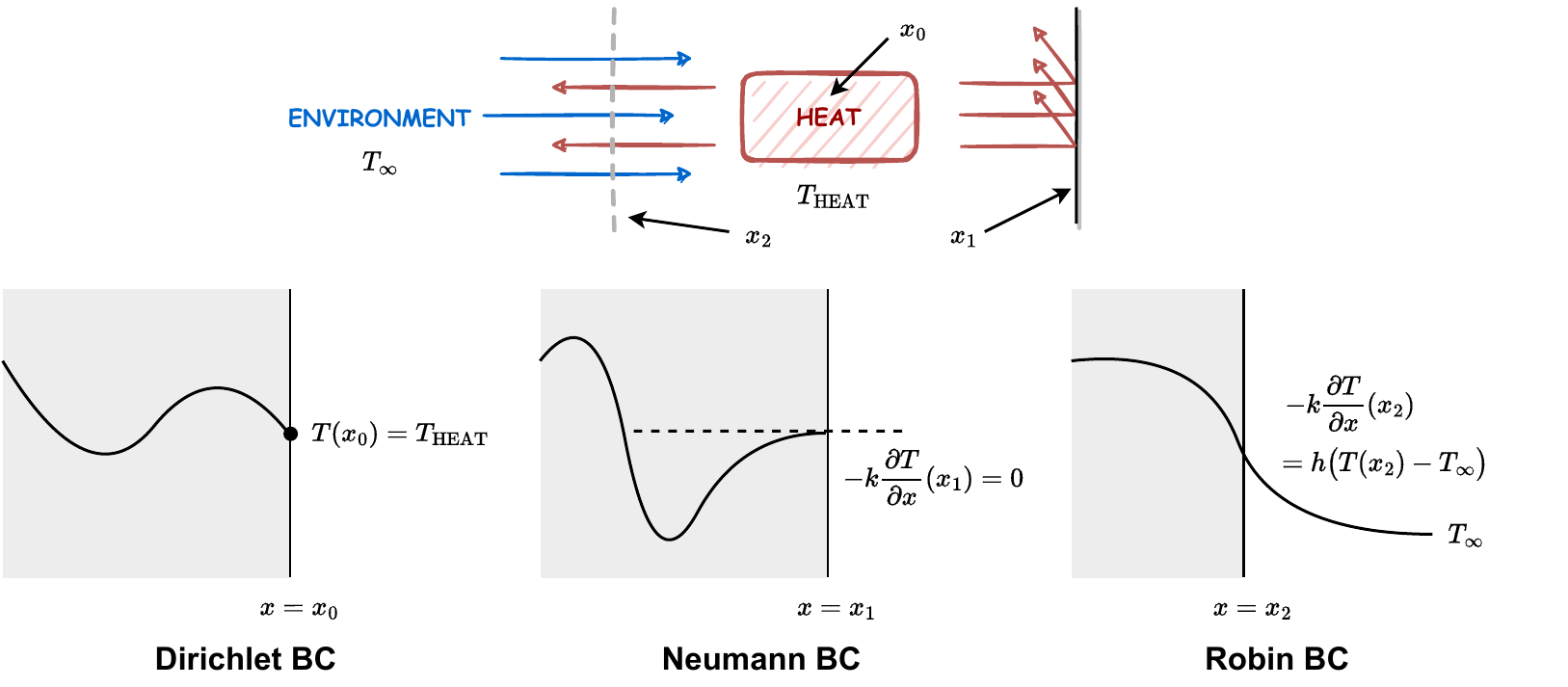}}
\caption{\textbf{An illustration on three types of BCs.} We give an example from heat transfer, where $T=T(x)$ is the temperature, $k$ is the thermal conductivity, and $h$ is the heat transfer coefficient. \textbf{(1)} The Dirichlet BC specifies the value of the solution at the boundary. Here we assume a constant temperature $T_{\mathrm{HEAT}}$ at the heat source ($x=x_0$). \textbf{(2)} The Neumann BC specifies the value of the derivative at the boundary. We assume that the right wall ($x=x_2$) is adiabatic and impose a zero normal derivative. \textbf{(3)} The Robin BC is a combination of the first two. And we use it to describe the heat convection between the heat source and the environment ($T_{\infty}$) at the surface ($x=x_2$).}
\label{fig:three_type_bc}
\end{center}
\vskip -0.3in
\end{figure}

In this paper, we propose a unified hard-constraint framework for all the three most commonly used BCs (i.e., Dirichlet, Neumann, and Robin BCs). With this framework (see Figure~\ref{fig:pipeline} for an illustration), we are able to construct an ansatz that automatically satisfies the three types of BCs. Therefore, we can train the model without the losses of these BCs, which alleviates the unbalanced competition and significantly improves the performance of solving geometrically complex PDEs. Specifically, we first introduce the \textit{extra fields} from the mixed finite element method \citep{malkus1978mixed,brink1996some}. This technique substitutes the gradient of a physical quantity with new variables, allowing the BCs to be reformulated as linear equations. Based on this reformulation, we derive a general continuous solution of the BCs of simple form, overcoming the challenge that the original BCs cannot be solved analytically. Using the general solutions obtained, we summarize a paradigm for constructing the hard-constraint ansatz under time-dependent, multi-boundary, and high-dimensional cases. Besides, in Section~\ref{sec:theo}, we demonstrate that the technique of \textit{extra fields} can improve the stability of the training process.

We empirically demonstrate the effectiveness of our method through three parts of experiments. Firstly, we show the potency of our method in solving geometrically complex PDEs through two numerical experiments from real-world physical systems of a battery pack and an airfoil. And our framework achieves a supreme performance compared with advanced baselines, including the learning rate annealing \modify{methods} \citep{wang2021understanding}, \modify{domain decomposition-based methods} \citep{jagtap2020extended, moseley2021finite}, and existing hard-constraint methods \citep{sheng2021pfnn, sheng2022pfnn}. Second, we select a high-dimensional problem to demonstrate that our framework can be well applied to \modify{high-dimensional} cases. Finally, we study the impact of the \textit{extra fields} as well as some hyper-parameters and verify our theoretical results in Section~\ref{sec:theo}.

To sum up, we make the following contributions: \begin{itemize}
    \item We introduce the \textit{extra fields} to reformulate the PDEs, and theoretically demonstrate that our reformulation can effectively reduce the instability of the training process.
    \item We propose a unified hard-constraint framework for Dirichlet, Neumann, and Robin BCs, \modify{alleviating} the unbalanced competition between losses in physics-informed learning.
    \item Our method has superior performance over state-of-the-art baselines on solving geometrically complex PDEs, as validated by numerical experiments in real-world physical problems.
\end{itemize}

\begin{figure}[t!]
\begin{center}
\centerline{\includegraphics[width=.9\columnwidth]{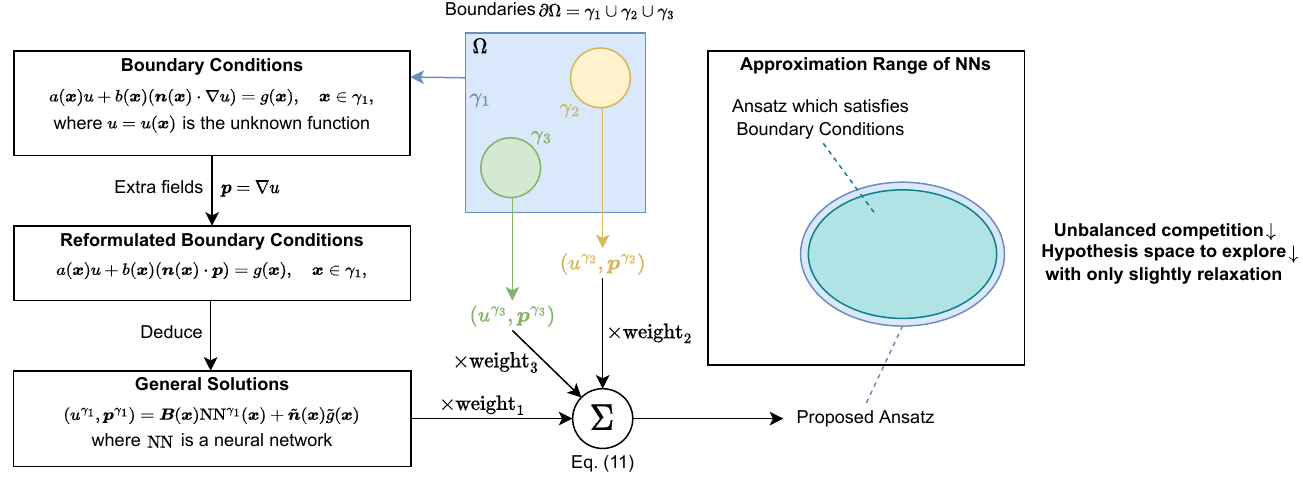}}
\caption{\textbf{A pipeline of the proposed method.} In this paper, we consider PDEs with multiple boundary conditions (BCs) of Dirichlet, Neumann, and Robin. We first introduce the \textit{extra fields} to reformulate the BCs as linear equations whose general solutions are deduced. Then, we aggregate the general solutions for each boundary to obtain our ansatz via Eq.~\eqref{eq_unified_hc}. Since the ansatz automatically satisfies the BCs, we alleviate the unbalanced competition and reduce invalid hypothesis space.}
\label{fig:pipeline}
\end{center}
\vskip -0.3in
\end{figure}

\section{Background}
\subsection{Physics-Informed Neural Networks (PINNs)}
\label{sec:pinn}
We consider the following Laplace's equation as a motivating example:
\begin{subequations}
\label{eq_toy_example}
\begin{align}
    \Delta u(x_1,x_2) &= 0, &&\modify{x_1\in(0,1],x_2\in[0,1]}, \label{eq_pde_toy_example}\\
    \modify{u(x_1,x_2)} &= g(x_2), &&\modify{x_1=0,x_2\in[0,1]}, \label{eq_bc_toy_example}
\end{align}
\end{subequations}
where $g\colon \mathbb{R}\rightarrow \mathbb{R}$ is a known fixed function, Eq.~\eqref{eq_pde_toy_example} gives the form of the PDE, and Eq.~\eqref{eq_bc_toy_example} is a Dirichlet boundary condition (BC). A solution to the above problem is a solution to Eq.~\eqref{eq_pde_toy_example} which also satisfies Eq.~\eqref{eq_bc_toy_example}. 

Physics-informed neural networks (PINNs)~\citep{raissi2019physics} employ a neural network $\mathrm{NN}(x_1,x_2;\bm{\theta})$ to approximate the solution, i.e., $\hat{u}(x_1,x_2;\bm{\theta})=\mathrm{NN}(x_1,x_2;\bm{\theta})\approx u(x_1,x_2)$, where $\bm{\theta}$ denotes the trainable parameters of the network. And we learn the parameters $\bm{\theta}$ by minimizing the following loss function:
\begin{equation}
    \mathcal{L}(\bm{\theta})=\mathcal{L}_{\mathcal{F}}(\bm{\theta}) + \mathcal{L}_{\mathcal{B}}(\bm{\theta})\triangleq \frac{1}{N_f}\sum_{i=1}^{N_f} \left| \Delta \hat{u}(x^{(i)}_{f,1},x^{(i)}_{f,2};\bm{\theta}) \right|^2 + \frac{1}{N_b}\sum_{i=1}^{N_b} \left| \hat{u}(0,x^{(i)}_{b,2};\bm{\theta}) - g(x^{(i)}_{b,2}) \right|^2,
    \label{eq:toy_example_loss}
\end{equation}
where $\mathcal{L}_{\mathcal{F}}$ is the term restricting $\hat{u}$ to satisfy the PDE (Eq.~\eqref{eq_pde_toy_example}) while $\mathcal{L}_{\mathcal{B}}$ is the one for the BC (Eq.~\eqref{eq_bc_toy_example}), $\{\vx^{(i)}_f=(x^{(i)}_{f,1},x^{(i)}_{f,2})\}_{i=1}^{N_f}$ is a set of $N_f$ collocation points sampled in $[0,1]^2$, and $\{\vx^{(i)}_b=(0,x^{(i)}_{b,2})\}_{i=1}^{N_b}$ is a set of $N_b$ boundary points sampled in $x_1=0\land x_2\in[0,1]$.

PINNs have a wide range of applications, including heat \citep{cai2021physics}, flow \citep{mao2020physics}, and atmosphere \citep{zhang2021spatiotemporal}. However, PINNs are struggling with some issues on the performance \citep{krishnapriyan2021characterizing}. Previous analysis~\citep{wang2021understanding,wang2022and} has demonstrated that the convergence of $\mathcal{L}_{\mathcal{F}}$ can be significantly faster than that of $\mathcal{L}_{\mathcal{B}}$. This pathology may lead to nonphysical solutions which do not satisfy the BCs or initial conditions (ICs). Moreover, for geometrically complex PDEs where the number of BCs is large, this problem is exacerbated and can seriously affect accuracy, as supported by our experimental results in Table~\ref{tab:pack}. 

\subsection{Hard-Constraint Methods}
One potential approach to overcome this pathology is to embed the BCs into the ansatz in a way that any instance from the ansatz can automatically satisfy the BCs, as utilized by previous works \citep{berg2018unified, pang2019fpinns,sun2020surrogate, wang2021cenn, sheng2021pfnn, lu2021physics, sheng2022pfnn}. We note that the loss terms corresponding to the BCs are no longer needed, and thus the above pathology is alleviated. These methods are called \textit{hard-constraint methods}, and they share a similar formula of the ansatz as:
\begin{equation}
\label{eq_general_formula_hc}
    \hat u(\vx;\bm{\theta})=u^{\partial \Omega}(\vx)+ l^{\partial \Omega}(\vx) \mathrm{NN}(\vx;\bm{\theta}) ,
\end{equation}
where $\vx$ is the coordinate, $\Omega$ is the domain of interest, $u^{\partial \Omega}(\vx)$ is the general solution at the boundary $\partial \Omega$, and $l^{\partial \Omega}(\vx)$ is an extended distance function which satisfies:
\begin{equation}\label{eq_ext_dist}
l^{\partial \Omega}(\vx)=
    \begin{cases}
    0& \mathrm{if}\ \vx\in\partial\Omega,\\
    >0& \mathrm{otherwise}.
    \end{cases}
\end{equation}

In the case of Eq.~\eqref{eq_toy_example} (where $\vx=(x_1,x_2)$, $\Omega=[0,1]^2$), the general solution is exactly $g(x_2)$, and we can use the following ansatz (which automatically satisfies the BC in Eq.~\eqref{eq_bc_toy_example}):
\begin{equation}
    \hat u(x_1,x_2;\bm{\theta})= g(x_2) + x_1 \mathrm{NN}(x_1,x_2;\bm{\theta}).
\end{equation}

However, it is hard to directly extend this method to more general cases of Robin BCs (see Eq.~\eqref{eq:general_bcs}), since we cannot obtain the general solution $u^{\partial \Omega}(\vx)$ analytically. Existing attempts  are either mesh-dependent \citep{gao2021phygeonet, zhao2021physics}, which are time-consuming for high-dimensional and geometrically complex PDEs, or ad hoc methods for specific (geometrically simple) physical systems \citep{rao2021physics}. It is still lacking a unified hard-constraint framework for both geometrically complex PDEs and the most commonly used Dirichlet, Neumann, and Robin BCs.

\section{Methodology}
We first introduce the problem setup of geometrically complex PDEs considered in this paper and then reformulate the PDEs via the \textit{extra fields}, followed by presenting our unified hard-constraint framework for embedding Dirichlet, Neumann, and Robin BCs into the ansatz.

\subsection{Problem Setup}
We consider a physical system governed by the following PDEs defined on a geometrically complex domain: $\Omega\subset\mathbb{R}^d$
\begin{equation}
    \mathcal{F}[\vu(\vx)]=\bm{0},\qquad\vx=(x_1,\dots,x_d)\in\Omega,
\label{eq:genearl_PDE}
\end{equation}
where $\mathcal{F}=(\mathcal{F}_1,\dots,\mathcal{F}_N)$ includes $N$ PDE operators which map $\vu$ to a function of $\vx$, $\vu$ and $\vu$'s derivatives. Here, $\vu(\vx)=(u_1(\vx),\dots,u_n(\vx))$ is the unknown solution, which represents physical quantities of interest. For each $u_j, j=1,\dots,n$, we impose suitable boundary conditions (BCs) as:
\begin{equation}
    a_{j,i}(\vx)u_j+b_{j,i}(\vx)\big( \vn_{j,i}(\vx)\cdot\nabla u_j \big) = g_{j,i}(\vx),\quad\vx\in\gamma_{j,i},\quad \forall i=1,\dots,m_j,
\label{eq:general_bcs}
\end{equation}
where $\{\gamma_{j,i}\}_{i=1}^{m_j}$ are subsets of the boundary $\partial\Omega$ whose closures are disjoint, $a_{j,i}, b_{j,i}\colon \gamma_{j,i} \rightarrow \mathbb{R}$ satisfy that $a_{j,i}^2(\vx)+b^2_{j,i}(\vx)\neq 0, \forall \vx\in\gamma_{j,i}$, $\vn_{j,i}\colon \gamma_{j,i} \rightarrow \mathbb{R}^d$ is the (outward facing) unit normal of $\gamma_{j,i}$ at each location, and $g_{j,i}\colon \gamma_{j,i} \rightarrow \mathbb{R}$. It is noted that Eq.~\eqref{eq:general_bcs} represents a Dirichlet BC if $a_{j, i}\equiv 1,b_{j, i}\equiv0$, a Neumann BC if $a_{j, i}\equiv 0,b_{j, i}\equiv 1$, and a Robin BC otherwise. In the following, we drop some of the subscripts in Eq.~\eqref{eq:general_bcs} for clarity \footnote{We simplify $\gamma_{j, i}, a_{j, i}(\vx), b_{j, i}(\vx), \vn_{j, i}(\vx), g_{j, i}(\vx)$ to $\gamma_{i}, a_{i}(\vx), b_{i}(\vx), \vn(\vx), g_{i}(\vx)$.}.

For such geometrically complex PDEs, if we directly resort to PINNs (see Section~\ref{sec:pinn}), there would be a difficult multi-task learning with at least $(\sum_{j=1}^n m_j + N)$ terms in the loss function. As discussed in the previous analyses \citep{wang2021understanding, wang2022and}, it will severely affect the convergence of the training due to the unbalanced competition between those loss terms. Hence, in this paper, we will discuss how to embed the BCs into the ansatz, where every instance automatically satisfies the BCs. However, it is infeasible to directly follow the pipeline of \textit{hard-constraint methods} (see Eq.~\eqref{eq_general_formula_hc}) since Eq.~\eqref{eq:general_bcs} does not have a general solution of analytical form. Therefore, a new approach is needed to address this intractable problem.

\subsection{Reformulating PDEs via Extra Fields}
\label{sec:extra_fields}
In this subsection, we present the general solutions of the BCs, which will be used to construct the hard-constraint ansatz subsequently. We first introduce the \textit{extra fields} from the mixed finite element method \citep{malkus1978mixed,brink1996some} to equivalently reformulate the PDEs. Let $\vp_j(\vx)=(p_{j1}(\vx),\dots,p_{jd}(\vx))=\nabla u_j,j=1,\dots,n$. We substitute them into Eq.~\eqref{eq:genearl_PDE} and Eq.~\eqref{eq:general_bcs} to obtain the equivalent PDEs:
\begin{subequations}
\label{eq:pde_after_extra_field}
\begin{align}
    \tilde{\mathcal{F}}[\vu(\vx),\vp_1(\vx),\dots,\vp_n(\vx)]&=\bm{0},&&\vx\in\Omega,\\
    \vp_j(\vx)&=\nabla u_j,&&\vx\in\Omega\cup\partial\Omega,&&\forall j=1,\dots,n,
    \label{eq:balance_eq_after_extra_field}
\end{align}
\end{subequations}
where $(\vu(\vx),\vp_1(\vx),\dots,\vp_n(\vx))$ is the solution to the new PDEs, $\tilde{\mathcal{F}}=(\tilde{\mathcal{F}}_1,\dots,\tilde{\mathcal{F}}_N)$ are the PDE operators after the reformulation. And for $j=1,\dots,n$, we have the corresponding BCs:
\begin{equation}
    a_i(\vx)u_j+b_i(\vx)\big( \vn(\vx)\cdot\vp_j(\vx) \big) = g_i(\vx),\qquad\vx\in\gamma_i,\qquad \forall i=1,\dots,m_j.
    \label{eq:bc_after_extra_field}
\end{equation}

Now, we can see that Eq.~\eqref{eq:general_bcs} has been transformed into linear equations with respect to $(u_j, \vp_j)$, which are much easier for us to derive general solutions. Hereinafter, we denote $(u_j,\vp_j)$ by $\tilde{\vp}_j$, and the general solution to the BC at $\gamma_i$ by $\tilde{\vp}_j^{\gamma_i}=(u_j^{\gamma_i}, \vp_j^{\gamma_i})$. Next, we will discuss how to 
obtain $\tilde{\vp}_j^{\gamma_i}$.


To obtain the general solution of Eq. \eqref{eq:bc_after_extra_field}, the first step is to find a basis $\bm{B}(\vx)$ of the null space (whose dimension is $d$), which should include $d$ vectors. However, we must emphasize that $\bm{B}(\vx)$ should be carefully chosen. Since Eq.~\eqref{eq:bc_after_extra_field} is parameterized by $\vx$, for any $\vx\in\gamma_i$, $\bm{B}(\vx)$ should always be a basis of the null space, that is, its columns cannot degenerate into linearly dependent vectors (otherwise it will not be able to represent all possible solutions). An example of an inadmissible $\bm{B}(\vx)$ is given in Appendix~\ref{sec_a1}.

Generally, for any dimension $d\in\mathbb{N}^+$, \modify{we believe it is non-trivial to find a simple expression for the basis}. Instead, we prefer to find $(d+1)$ vectors in the null space, $d$ of which are linearly independent (that way, $\bm{B}(\vx)\in\mathbb{R}^{(d+1) \times (d+1)}$). \modify{We now directly present our construction of the general solution while leaving a detailed derivation in Appendix~\ref{sec_a3}.}
\begin{equation} \label{eq_hc_robin}
    \tilde{\vp}_j^{\gamma_i}\modify{(\vx;\bm{\theta}_j^{\gamma_i})}=\bm{B}(\vx)\modify{\mathrm{NN}_j^{\gamma_i}(\vx;\bm{\theta}_j^{\gamma_i})} + \tilde{\vn}(\vx)\tilde{g}_i(\vx),
\end{equation}
where $\tilde{\vn}=(a_i,b_i\vn)\big/\sqrt{a_i^2+b_i^2}$, $\tilde{g}_i=g_i\big/\sqrt{a_i^2+b_i^2}$, $\modify{\mathrm{NN}_j^{\gamma_i}}:\mathbb{R}^{d}\to\mathbb{R}^{d+1}$ is a neural network\modify{ with trainable parameters $\bm{\theta}_j^{\gamma_i}$}, and $\bm{B}(\vx)=\bm{I}_{d+1}-\tilde{\vn}(\vx){\tilde{\vn}(\vx)}^\top$ is the basis we have found (precisely, it is a set of vectors that always contain a basis of the null space for any $\vx\in\gamma_i$). Incidentally, in the case of $d=1$ or $d=2$, we can find a simpler expression for $\bm{B}$ \modify{(see Appendix~\ref{sec_a2})}. We note that there is no restriction for the architecture of neural networks used and MLPs are chosen as default.

\subsection{A Unified Hard-Constraint Framework}\label{sec:unified_hc}
With the parameterization of a neural network, Eq.~\eqref{eq_hc_robin} can represent any function defined on $\gamma_i$, as long as the function satisfies the BC (see Eq.~\eqref{eq:bc_after_extra_field}). Since our problem domain contains multiple boundaries, we need to combine the general solutions corresponding to each boundary $\gamma_i$ to achieve an overall approximation. Hence, we construct our ansatz (for each $1\le j \le n$ \emph{separately}) as follows:
\begin{equation} \label{eq_unified_hc}
    (\hat{u}_j,\hat{\vp}_j) = l^{\partial \Omega}(\vx) \modify{\mathrm{NN}^{\mathrm{main}}_j(\vx;\bm{\theta}^{\mathrm{main}}_j)} + \sum_{i=1}^{m_j} \exp{\big[-\alpha_i l^{\gamma_i}(\vx)\big]}\tilde{\vp}_j^{\gamma_i}(\vx\modify{;\bm{\theta}_j^{\gamma_i}}),\quad \forall j=1,\dots,n,
\end{equation}
where $\modify{\mathrm{NN}^{\mathrm{main}}_j}:\mathbb{R}^{d}\to\mathbb{R}^{d+1}$ \modify{is the main neural network with trainable parameters $\bm{\theta}^{\mathrm{main}}_j$}, $l^{\partial \Omega}, l^{\gamma_i},i=1,\dots,m_j$ are \modify{continuous} extended distance functions (similar to Eq.~\eqref{eq_ext_dist}), and $\alpha_i~(i=1,\dots,m_j)$ are determined by:
\begin{equation}\label{eq_determine_alpha}
\alpha_i = \frac{\beta_s}{\min_{\vx \in \partial\Omega \setminus \gamma_i} l^{\gamma_i}(\vx) } ,
\end{equation}
where $\beta_s\in\mathbb{R}$ is a hyper-parameter of the ``hardness'' in the spatial domain. \modify{In Eq.~\eqref{eq_unified_hc}, we utilize extended distance functions to ``divide'' the problem domain into several parts, where $\{\tilde{\vp}_j^{\gamma_i}\}_{i=1}^{m_j}$ is responsible for the approximation on the boundaries while $\mathrm{NN}^{\mathrm{main}}_j$ are responsible for internal. Furthermore, Eq.~\eqref{eq_determine_alpha} ensures that the importance of  $\tilde{\vp}_j^{\gamma_i}$ decays to $e^{-\beta_s}$ on the nearest boundary from $\gamma_i$, so that $\tilde{\vp}_j^{\gamma_i}$ does not appear on other boundaries. We provide a theoretical guarantee 
for the correctness and approximation ability of Eq.~\eqref{eq_unified_hc} in Appendix~\ref{sec_a4}.} Besides, if $a_i$, $b_i$, $\vn$ or $g_i$ are only defined at $\gamma_i$, we can extend their definition to $\Omega\cup\partial\Omega$ using interpolation techniques or neural networks (see Appendix~\ref{sec_a5}). An extension of our framework to the temporal domain is discussed in Appendix~\ref{sec_a6}. 


Finally, we can train our model with the following loss function:
\begin{equation}
\begin{aligned}
    \mathcal{L}&= \frac{1}{N_f}\sum_{k=1}^{N_f}\sum_{j=1}^N \big| \tilde{\mathcal{F}}_j[\hat{\vu}(\vx^{(k)}),\hat{\vp}_1(\vx^{(k)}),\dots,\hat{\vp}_n(\vx^{(k)})] \big|^2 \\
    &+ \frac{1}{N_f}\sum_{k=1}^{N_f}\sum_{j=1}^n \big\Vert \hat{\vp}_j(\vx^{(k)}) - \nabla\hat{u}_j(\vx^{(k)}) \big\Vert_2^2 ,
    \label{eq:hc_loss}
\end{aligned}
\end{equation}
where $\hat{\vu}=(\hat{u}_1,\dots,\hat{u}_n)$, $\{\hat{u}_j,\hat{\vp}_j\}_{j=1}^n$ is defined in Eq.~\eqref{eq_unified_hc} and $\{\vx^{(k)}\}_{k=1}^{N_f}$ is a set of collocation points sampled in $\Omega$. \modify{For neatness, we omit the trainable parameters of neural networks here. We note that} Eq.~\eqref{eq:hc_loss} measures the discrepancy of both the PDEs (i.e., $\tilde{\mathcal{F}}_1,\dots,\tilde{\mathcal{F}}_N$) and the equilibrium equations introduced by the \textit{extra fields} (i.e., Eq.~\eqref{eq:balance_eq_after_extra_field}) at $N_f$ collocation points.

According to Eq.~\eqref{eq:hc_loss}, we have now successfully embedded BCs into the ansatz, and no longer need to take the residuals of BCs as extra terms in the loss function. That is, our ansatz strictly conforms to BCs throughout the training process, greatly reducing the possibility of generating nonphysical solutions. Nevertheless, this comes at the cost of introducing $(nd)$ additional equilibrium equations (see Eq.~\eqref{eq:balance_eq_after_extra_field}). But in many physical systems, especially those with complex geometries, the number of BCs ($\mathrm{cnt}(\mathrm{BCs})$) is far larger than $(nd)$ (e.g., $n=3$, $d=2$, $\mathrm{cnt}(\mathrm{BCs})=1260$ for a classical physical system, a heat exchanger \citep{dubovsky2011analytical}). So we may actually reduce the number of loss terms by orders of magnitude ($\Delta\mathrm{cnt}(\mathrm{losses}) = n d - \mathrm{cnt}(\mathrm{BCs}) \ll 0$), alleviating the unbalanced competition between loss terms.

\section{Theoretical Analysis} \label{sec:theo}


In Section~\ref{sec:extra_fields}, we introduce the \textit{extra fields} and reformulate the PDEs (from Eq.~\eqref{eq:genearl_PDE} and Eq.~\eqref{eq:general_bcs} to Eq.~\eqref{eq:pde_after_extra_field} and Eq.~\eqref{eq:bc_after_extra_field}). To further analyze the impact of this reformulation, we consider the following abstraction of 1D PDEs (Eq.~\eqref{eq_theo}) and the one after the reformulation (Eq.~\eqref{eq_theo_extra_fields}):
\begin{subequations}
\begin{align}
    L\big[ u, \frac{\diff{u}}{\diff{x}}, \frac{\Diff{2}{u}}{\diff{x}^2},\cdots,\frac{\Diff{n}{u}}{\diff{x}^n} \big] = f(x),\qquad x \in \Omega,\label{eq_theo}\\
    L\big[ u, p_1, p_2, \cdots, p_{n-1}] 
    = f(x),\qquad x \in \Omega,\label{eq_theo_extra_fields}
\end{align}
\end{subequations}
where $L$ is an operator, $f$ is a source function, and $p_{i+1}= \diff{p_i}/\diff{x}, p_1 = \diff{u}/\diff{x}$ are the introduced extra fields. From the above equations, we can find that the orders of derivatives in the PDEs are reduced. Intuitively, as the PDEs are included in the loss function (see Eq.~\eqref{eq:toy_example_loss}), lower derivatives result in less numerical sensitivity and thus stabilize the training process. 

\begin{figure}[!b]
    \centering
    \subfigure[A 2D battery pack]{\label{fig:geom_heat}\includegraphics[width=.47\textwidth]{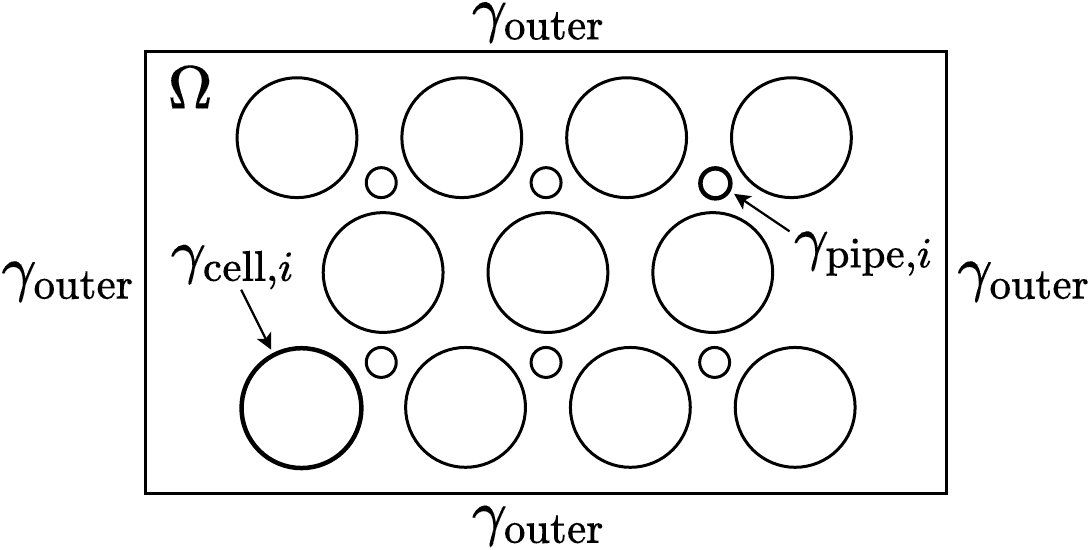}}
    \hspace{1em}
    \subfigure[An airfoil]{\label{fig:geom_airfoil}\includegraphics[width=.47\textwidth]{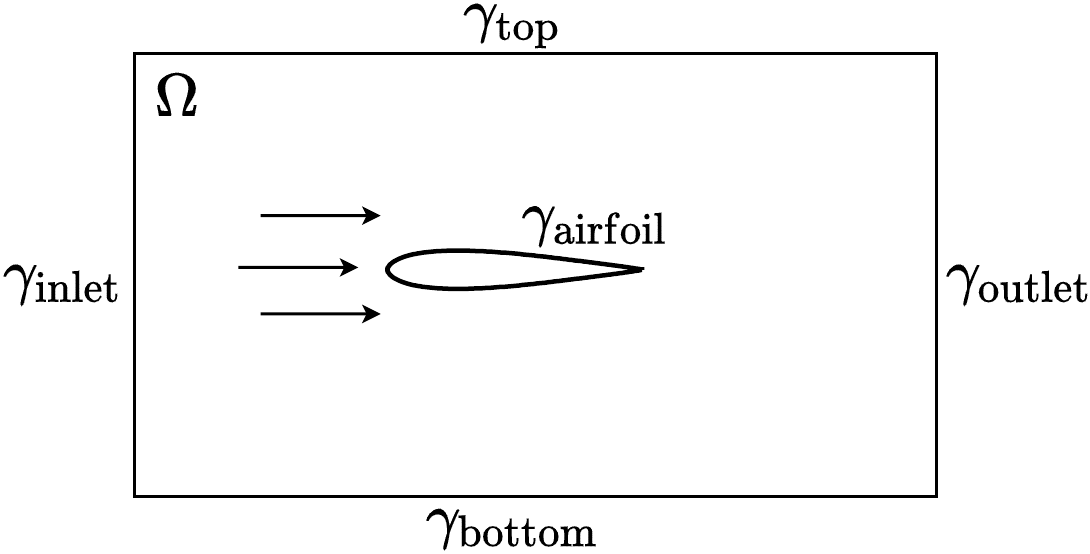}}
    \vspace{-0.1in}
    \caption{An illustration of the geometries of the problems in Section~\ref{sec:exp_heat} and Section~\ref{sec:exp_ns}.}
    \vspace{-0.1in}
\end{figure}

To formally explain this mechanism, without loss of generality, we consider a simple case (with proper Dirichlet BCs) where $L[\cdots]=\Delta u$, $f(x)=-a^2\sin{ax}$. And our ansatz is a single-layer neural network of \modify{width} $K$, i.e., $\hat{u}=\vc^\top \modify{\sigma}(\vw x+\vb)$, $\hat{p}=\vc_{p}^\top \modify{\sigma}(\vw x+\vb)$, where $\vc,\vw,\vb,\vc_{p}\in\mathbb{R}^K$, $\modify{\sigma}$ is an element-wise activation function (for simplicity, we take $\modify{\sigma}$ as $\tanh$). More details on the PDEs are given in Appendix~\ref{sec_a7_1}. Next, we focus on the gradients of the loss terms of the PDEs $\mathcal{F}$, since that of the BC stays the same during the reformulation. We state the following theorem.
\begin{theorem}[Bounds for the Gradients of the PDE Loss Terms] \label{theo:universal}
Let $\bm{\theta}=(\vc,\vw,\vb)$, $\tilde{\bm{\theta}}=(\vc,\vw,\vb,\vc_p)$, and $\mathcal{L}_{\mathcal{F}}$ as well as $\tilde{\mathcal{L}}_{\mathcal{F}}$ be the loss terms corresponding to the original and transformed PDEs, respectively. We have the following bounds:
\begin{subequations}\label{eq:theo_grad}
\begin{align}
    \Big|\big(\nabla_{\bm{\theta}}\mathcal{L}_{\mathcal{F}}\big)^\top\Big| &=  \mathcal{O}\big(|\vc|^\top\vw^2 + a^2\big)\cdot\big(\vw^2, |\vc| \circ |\vw|\circ( |\vw| + \bm{1} ), |\vc| \circ \vw^2\big), \label{eq_grad}\\
    \Big|\big({\nabla_{\tilde{\bm{\theta}}}\tilde{\mathcal{L}}_{\mathcal{F}}}\big)^\top\Big| &=  \mathcal{O}\big(\lVert\vc_p\rVert_1 + \max(|\vc|,|\vc_p|)^\top|\vw| + a^2\big)\cdot \big(
    |\vw|, \max(|\vc|,|\vc_p|)\circ\max(|\vw|, \bm{1}),\nonumber\\
    &\qquad\qquad\qquad\qquad\qquad\max(|\vc|,|\vc_p|)\circ\max(|\vw|,\bm{1}), \max(|\vw|,\bm{1})
    \big), \label{eq_grad_transformed}
\end{align}
\end{subequations}
where $\circ$ is the element-wise multiplication, $\bm{1}$ is an all-ones vector, $\max(\cdot)$ is the element-wise maximum of vectors, and operations on vectors ($\lvert\cdot\rvert$, $(\cdot)^2$, etc) are element-wise operations.
\end{theorem}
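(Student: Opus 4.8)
The plan is to compute $\nabla_{\bm\theta}\mathcal{L}_{\mathcal{F}}$ and $\nabla_{\tilde{\bm\theta}}\tilde{\mathcal{L}}_{\mathcal{F}}$ explicitly for the single-layer ansatz and then read off the bounds by tracking the $\vc,\vw,\vb$ (and $\vc_p$) dependence of each factor. First I would write the PDE residuals. In the original formulation, $\mathcal{L}_{\mathcal{F}}$ is an average over collocation points of $|\Delta\hat u - f|^2 = |\hat u'' + a^2\sin ax|^2$, and since $\hat u = \vc^\top\sigma(\vw x+\vb)$ with $\sigma=\tanh$, we have $\hat u'' = \vc^\top\big(\vw^2\circ\sigma''(\vw x+\vb)\big)$, where $\sigma'' = -2\tanh\,\sigma' = -2\tanh(1-\tanh^2)$ is bounded (together with $\sigma,\sigma',\sigma'''$) by an absolute constant on all of $\mathbb R$. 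Hence the residual $r := \hat u'' + a^2\sin ax$ satisfies $|r| = \mathcal O(|\vc|^\top\vw^2 + a^2)$, which is exactly the first scalar factor in \eqref{eq_grad}. Then $\nabla_{\bm\theta}\mathcal{L}_{\mathcal{F}} = \frac{2}{N_f}\sum_k r_k \nabla_{\bm\theta} r_k$, so I just need to bound each block of $\nabla_{\bm\theta} r_k = \nabla_{\bm\theta}\hat u''$: differentiating $\hat u'' = \sum_\ell c_\ell w_\ell^2 \sigma''(w_\ell x + b_\ell)$ gives $\partial/\partial c_\ell = w_\ell^2\sigma''(\cdot)$, $\partial/\partial w_\ell = c_\ell(2w_\ell\sigma'' + w_\ell^2 x\,\sigma''')$, and $\partial/\partial b_\ell = c_\ell w_\ell^2\sigma'''$; using boundedness of $\sigma'',\sigma'''$ and absorbing the bounded collocation coordinate $x$, these three blocks are $\mathcal O(\vw^2)$, $\mathcal O(|\vc|\circ|\vw|\circ(|\vw|+\bm 1))$, and $\mathcal O(|\vc|\circ\vw^2)$ respectively, matching the vector factor in \eqref{eq_grad}.

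Next I would repeat the computation for the transformed system. Here the relevant losses are the reformulated PDE $\tilde{\mathcal{F}}$, which for $L[\cdots]=\Delta u$ becomes the first-order relation involving $p_1$ and its derivative (so $\tilde{\mathcal L}_{\mathcal F}$ averages $|\hat p_1' + a^2\sin ax|^2$) together with the equilibrium term $\|\hat p_1 - \hat u'\|^2$; with $\hat u = \vc^\top\sigma(\vw x+\vb)$ and $\hat p = \hat p_1 = \vc_p^\top\sigma(\vw x+\vb)$ (sharing $\vw,\vb$ as in the stated ansatz), the derivatives appearing are now only $\hat p' = \vc_p^\top(\vw\circ\sigma')$ and $\hat u' = \vc^\top(\vw\circ\sigma')$ — one order lower. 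The residuals are therefore $\mathcal O(\|\vc_p\|_1 + \max(|\vc|,|\vc_p|)^\top|\vw| + a^2)$ (the $\ell_1$ term and the $a^2$ coming from the PDE residual bound $|\hat p' + a^2\sin ax| = \mathcal O(|\vc_p|^\top|\vw| + a^2)$, and the equilibrium residual $|\hat p_1 - \hat u'| = \mathcal O(\|\vc_p\|_1 + |\vc|^\top|\vw|)$; taking the common upper bound with $\max(|\vc|,|\vc_p|)$ subsumes both). Differentiating these first-order expressions with respect to each of the four blocks $\vc,\vw,\vb,\vc_p$ and again bounding $\sigma',\sigma''$ and $x$ by constants yields the four vector blocks $|\vw|,\ \max(|\vc|,|\vc_p|)\circ\max(|\vw|,\bm1),\ \max(|\vc|,|\vc_p|)\circ\max(|\vw|,\bm1),\ \max(|\vw|,\bm1)$ of \eqref{eq_grad_transformed}. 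The $\max(|\vw|,\bm 1)$ rather than $|\vw|+\bm 1$ is just a cosmetic tightening since both derivatives-of-$\sigma$ factors and the constant-$1$ contributions combine into a single $\max(|\vw|,\bm 1)$ up to constants.

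The routine but slightly delicate bookkeeping is the following: after expanding $\nabla\mathcal L_{\mathcal F} = \frac{2}{N_f}\sum_k r_k\nabla r_k$, the product $r_k\,\nabla r_k$ is a product of two vectors/scalars each bounded as above, and one must argue that averaging over $k$ does not change the order (true because each summand obeys the bound uniformly in $x_k\in\Omega$ for $\Omega$ bounded), and that cross terms like $x\sigma'''$ are $\mathcal O(\sigma''')$ only because $\Omega$ is compact — so the hidden constants depend on $\mathrm{diam}(\Omega)$ but not on $\bm\theta$ or $a$. I would also record that $\sigma = \tanh$ and all its derivatives up to order three are bounded by absolute constants on $\mathbb R$, which is what lets every $\sigma^{(k)}(\vw x + \vb)$ factor be replaced by $\mathcal O(1)$ regardless of $\vw,\vb$; this is the one place where the choice $\sigma = \tanh$ (as opposed to an unbounded activation like ReLU powers) is used, and it should be stated as a hypothesis of the computation.

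\textbf{Main obstacle.} There is no deep obstacle — the result is an exercise in the chain rule plus boundedness of $\tanh$ and its derivatives — but the genuinely fiddly part is getting the \emph{exact} vector forms on the right-hand sides to match, rather than some looser but equivalent bound. In particular, one must be careful that (i) in the original system the $\vw$-block genuinely carries the extra additive $\bm 1$ (from the $2w_\ell\sigma''$ term beside the $w_\ell^2 x\sigma'''$ term) giving $|\vw|\circ(|\vw|+\bm 1)$, whereas in the transformed system the lower derivative order removes one power of $\vw$ everywhere, and (ii) the scalar prefactors must be bundled correctly so that the original one is quadratic in $\vw$ ($|\vc|^\top\vw^2$) while the transformed one is only linear ($\max(|\vc|,|\vc_p|)^\top|\vw|$) plus the new $\|\vc_p\|_1$ term — this linear-versus-quadratic gap in $\vw$ is the whole point, since it is what formalizes the claim that the extra fields reduce derivative accumulation and hence stabilize training. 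Everything else reduces to collecting $\mathcal O(1)$ constants.
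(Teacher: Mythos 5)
Your proposal is correct and follows essentially the same route as the paper's proof in Appendix~\ref{sec_a7_2}: explicit differentiation of the single-layer $\tanh$ ansatz, replacing $\sigma$, $\sigma'$, $\sigma''$, $\sigma'''$ and the bounded coordinate $x$ by $\mathcal{O}(1)$ constants, and multiplying the residual bound by the per-block derivative bounds. The only cosmetic difference is that the paper works with the continuous integral form of the loss and inserts a Cauchy--Schwarz step before bounding each factor, whereas you bound the integrand (or summand) pointwise and average; both yield the identical vector-valued bounds.
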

The proof is deferred to Appendix~\ref{sec_a7_2}. We can find that $(\nabla_{\bm{\theta}}\mathcal{L}_{\mathcal{F}})^\top$ has a high-order relationship with the \modify{trainable} parameters $\bm{\theta}$. This often makes training unstable, as the gradients $(\nabla_{\bm{\theta}}\mathcal{L}_{\mathcal{F}})^\top$ may rapidly explode or vanish when $\bm{\theta}$ are large or small. After the reformulation, $({\nabla_{\tilde{\bm{\theta}}}\tilde{\mathcal{L}}_{\mathcal{F}}})^\top$ is associated with the parameters in a lower order, and some terms are controlled by $1$, alleviating this issue of the instability. We note that the improvement in stability comes from lower derivatives in the PDEs, and this mechanism does not depend on the specific form of the PDEs. In Section~\ref{sec:exp_abla_extra}, we empirically show that this mechanism is general, even for nonlinear PDEs.

\section{Experiments}\label{sec:exp}
In this section, we will numerically showcase the effectiveness of the proposed framework on geometrically complex PDEs, which is also the main focus of this paper. To this end, we test our framework on two real-world problems (see Section~\ref{sec:exp_heat} and Section~\ref{sec:exp_ns}) from a 2D battery pack and an airfoil, respectively. Besides, to validate that our framework is applicable to high-dimensional cases, an \modify{additional} high-dimension problem is considered in Section~\ref{sec:exp_high}, followed by an ablation study in Section~\ref{sec:exp_abla}. We refer to Appendix~\ref{sec_a8} for experimental details.

\subsection{Experiment Setup}\label{sec:exp_setup}
\paragraph{Evaluation}
In our experiments, we utilize mean absolute error (MAE) and mean absolute percentage error (MAPE) to measure the discrepancy between the prediction and the ground truth. In Section~\ref{sec:exp_ns}, we replace MAPE with weighted mean absolute percentage error (WMAPE) to avoid the ``division
by zero'', since a considerable part of the ground truth values are very close to zero, 
\begin{equation}
    \mathrm{WMAPE} = \frac{\sum_{i=1}^n|\hat{y}_i-y_i|}{\sum_{i=1}^n|y_i|},
\end{equation}
where $n$ is the number of testing samples, $\hat{y}_i$ is the prediction and $y_i$ is the ground truth.

\paragraph{Baselines}
We consider the following baselines in subsequent experiments:
\begin{itemize}
    \item \textbf{PINN}: original implementation of the PINN \citep{raissi2019physics}.
    \item \textbf{PINN-LA \& PINN-LA-2}: PINN with learning rate annealing algorithm to address the unbalanced competition \citep{wang2021understanding} and a variant we modified for numerical stability.
    \item \textbf{xPINN\& FBPINN}: two representative works for solving geometrically complex PDEs via non-overlapping \citep{jagtap2020extended} and overlapping \citep{moseley2021finite} domain decomposition.
    \item \textbf{PFNN \& PFNN-2}: a representative hard-constraint method based on the variational formulation of PDEs \citep{sheng2021pfnn} and its latest variant incorporating domain decomposition \citep{sheng2022pfnn}.
\end{itemize}

\begin{table}[!b]
    \caption{Experimental results of the simulation of a 2D battery pack}
    \centering
    \begin{small}
    \begin{tabular}{lllllllll}
    \toprule
    & \multicolumn{4}{c}{MAE of $T$} & \multicolumn{4}{c}{MAPE of $T$}\\
    \cmidrule(r){2-5}
    \cmidrule(r){6-9}
    & $t=0$     & $t=0.5$     & $t=1$ & average & $t=0$     & $t=0.5$     & $t=1$ & average \\
    \midrule
    PINN & $0.1283$     & $0.0457$ & $0.0287$ & $0.0539$ & $128.21\%$ & $11.65\%$  & $4.47\%$ & $24.82\%$\\
    PINN-LA & $0.0918$     & $0.0652$  & $0.0621$ & $0.0661$ & $91.72\%$ & $19.13\%$  & $11.96\%$  & $27.06\%$\\
    PINN-LA-2 & $0.1062$     & $0.0321$  & $\textbf{0.0211}$ & $0.0402$ & $106.05\%$ & $8.94\%$  & $4.09\%$  & $19.76\%$\\
    FBPINN & $0.0704$     & $0.0293$& $0.0249$ & $0.0343$ & $70.33\%$ & $8.13\%$  & $5.87\%$ & $14.74\%$\\
    xPINN & $0.2230$     & $0.1295$  & $0.1515$ & $0.1454$ & $222.83\%$ & $30.28\%$  & $20.25\%$  & $54.70\%$\\
    PFNN & $\textbf{0.0000}$     & $0.3036$  & $0.4308$ & $0.2758$ & $\textbf{0.02\%}$ & $79.64\%$  & $84.60\%$  & $68.29\%$\\
    PFNN-2 & $\textbf{0.0000}$     & $0.3462$  & $0.5474$ & $0.3215$ & $\textbf{0.02\%}$ & $66.06\%$  & $90.21\%$  & $59.62\%$\\
    \midrule
    HC & $\textbf{0.0000}$ & $\textbf{0.0246}$  & $0.0225$ & $\textbf{0.0221}$ & $\textbf{0.02\%}$ & $\textbf{5.38\%}$  & $\textbf{3.77\%}$ & $\textbf{5.10\%}$\\
    \bottomrule
    \end{tabular}
    \end{small}
    \label{tab:pack}
\end{table}

\subsection{Simulation of a 2D battery pack (Heat Equation)} \label{sec:exp_heat}
To emphasize the capability of the proposed hard-constraint framework (HC) to handle geometrically complex cases, we first consider modeling thermal dynamics of a 2D battery pack \citep{smyshlyaev2011pde}, which is governed by the heat equation (see Appendix~\ref{sec_a8_1}), where $\vx\in\Omega, t\in[0,1]$ are the spatial and temporal coordinates, respectively, $T(\vx,t)$ is the temperature over time. The geometry of the problem $\Omega$ is shown in Figure~\ref{fig:geom_heat}, where $\gamma_{\mathrm{cell},i}$ is the cell and $\gamma_{\mathrm{pipe},i}$ is the cooling pipe.


The baselines are trained with $N_f=\num{8192}$ collocation points, $N_b=512$ boundary points, and $N_i=512$ initial points (an additional $N_s=512$ points on the interfaces between subdomains are required for the xPINN), while the HC is trained with $N_f=\num{8192}$ collocation points. All the models are trained for $\num{5000}$ Adam~\cite{kingma2015adam} iterations (with a learning rate scheduler) followed by an L-BFGS optimization until convergence and tested with $N_t=\num{146487}$ points evaluated by the FEM. The testing results are given in Table~\ref{tab:pack} (where ``average'' means averaging over $t\in[0,1]$, the same below). We find that the accuracy of the HC is significantly higher than baselines almost all the time, especially at $t=0$. This is because the ansatz of the HC always satisfies the BCs and ICs during the training process, preventing the approximations from violating physical constraints at the boundaries. Besides, we also provide an empirical analysis of convergence in Appendix~\ref{sec_a9} and results of 5-run parallel experiments (where problems in all three cases are revisited) in Appendix~\ref{sec_a10}.

\subsection{Simulation of an Airfoil (Navier-Stokes Equations)} \label{sec:exp_ns}
In the next experiment, we consider the challenging benchmark PDEs in computational fluid dynamics, the 2D stationary incompressible Navier-Stokes equations, in the context of simulating the airflow around a real-world airfoil (\texttt{w1015.dat}) from the UIUC airfoil coordinates database (an open airfoil database) \citep{uiucairfoil}. Specifically, the airfoil is represented by $\num{240}$ anchor points and the governing equation can be found in Appendix~\ref{sec_a8_2}, where $\vu(\vx) = (u_1(\vx), u_2(\vx))$, $p(\vx)$ are the velocity and pressure of the fluid, respectively, and the geometry of the problem is shown in Figure~\ref{fig:geom_airfoil}.

The baselines (PFNN and PFNN-2 are not applicable to Navier-Stokes Equations) are trained with $N_f=\num{10000}$ collocation points and $N_b=2048$ boundary points (additional $N_s=2048$ points for the xPINN), while the HC is trained with $N_f=\num{10000}$ collocation points. All the models are trained for $\num{5000}$ Adam iterations followed by an L-BFGS optimization until convergence and tested with $N_t=\num{13651}$ points evaluated by the FEM. According to the results in Table~\ref{tab:airfoil}, the HC has more significant advantages than the previous experiment. This is because complex nonlinear PDEs are more sensitive to the BCs, and the failure at the BCs can cause the approximation to deviate significantly from the true solution. In addition, the WMAPE of $u_2$ is relatively large for all methods, because the true values of $u_2$ are close to zero, amplifying the effect of noises.

\begin{table}[!t]
    \caption{Experimental results of the simulation of an airfoil}
    \centering
    \begin{small}
    \begin{tabular}{lllllll}
    \toprule
    & \multicolumn{3}{c}{MAE} & \multicolumn{3}{c}{WMAPE}\\
    \cmidrule(r){2-4}
    \cmidrule(r){5-7}
    & $u_1$     & $u_2$     & $p$ & $u_1$     & $u_2$     & $p$ \\
    \midrule
    PINN & $0.4682$     & $0.0697$ & $0.3883$ & $0.5924$  & $1.1979$ & $0.3539$\\
    PINN-LA & $0.4018$     & $0.0595$& $0.2652$ & $0.5084$  & $1.0225$ & $0.2418$\\
    PINN-LA-2 & $0.5047$     & $0.0659$  & $0.2765$ & $0.6385$  & $1.1325$ & $0.2521$\\
    FBPINN & $0.4058$     & $0.0563$& $0.2665$ & $0.5134$  & $0.9676$ & $0.2429$\\
    xPINN & $0.7188$     & $0.0583$& $1.1708$ & $0.9095$  & $1.0029$ & $1.0672$\\
    \midrule
    HC & $\textbf{0.2689}$ & $\textbf{0.0435}$  & $\textbf{0.2032}$ & $\textbf{0.3402}$  & $\textbf{0.7474}$ & $\textbf{0.1852}$\\
    \bottomrule
    \end{tabular}
    \end{small}
    \label{tab:airfoil}
    \vspace{-0.1in}
\end{table}

\begin{table}[!b]
    \vspace{-0.1in}
    \caption{Experimental results of the high-dimensional heat equation}
    \centering
    \begin{small}
    \begin{tabular}{lllllllll}
    \toprule
    & \multicolumn{4}{c}{MAE of $u$} & \multicolumn{4}{c}{MAPE of $u$}\\
    \cmidrule(r){2-5}
    \cmidrule(r){6-9}
    & $t=0$     & $t=0.5$     & $t=1$ & average & $t=0$     & $t=0.5$     & $t=1$ & average \\
    \midrule
    PINN & $0.0219$     & $0.0428$ & $0.1687$ & $0.0582$ & $1.43\%$ & $1.70\%$  & $4.04\%$ & $1.99\%$\\
    PINN-LA & $0.0085$     & $0.0149$  & $0.0727$ & $0.0235$ & $0.55\%$ & $0.59\%$  & $1.73\%$  & $0.78\%$\\
    PINN-LA-2 & $0.0122$     & $0.0274$  & $0.1495$ & $0.0466$ & $0.79\%$ & $1.08\%$  & $3.57\%$  & $1.49\%$\\
    PFNN & $\textbf{0.0000}$     & $0.1253$  & $0.3367$ & $0.1425$ & $\textbf{0.00\%}$ & $5.02\%$  & $8.19\%$  & $4.64\%$\\
    \midrule
    HC & $\textbf{0.0000}$ & $\textbf{0.0029}$  & $\textbf{0.0043}$ & $\textbf{0.0026}$ & $\textbf{0.00\%}$ & $\textbf{0.12\%}$  & $\textbf{0.11\%}$ & $\textbf{0.10\%}$\\
    \bottomrule
    \end{tabular}
    \end{small}
    \label{tab:high-d}
\end{table}

\subsection{High-Dimensional Heat Equation} \label{sec:exp_high}
To demonstrate that our framework can generalize to high-dimensional cases, we consider a problem of time-dependent high-dimensional PDEs defined in $\Omega\times[0,1]$ (see Appendix~\ref{sec_a8_3}), where $\Omega$ is a unit ball in $\mathbb{R}^{10}$, and $u$ is the quantity of interest.

Here, we do not consider the baselines of the xPINN, FBPINN, and PFNN-2, because the method of domain decomposition is not suitable for high-dimensional cases. And the other baselines are trained with $N_f=1000$ collocation points, $N_b=100$ boundary points, and $N_i=100$ initial points, while the HC is trained with $N_f=1000$ collocation points. All the models are trained for $\num{5000}$ Adam iterations followed by an L-BFGS optimization until convergence and tested with $N_t=\num{10000}$ points evaluated by the analytical solution to the PDEs. We refer to Table~\ref{tab:high-d} for the testing results. It can be seen that the HC achieves impressive performance in high-dimensional cases. This illustrates the effectiveness of Eq.~\eqref{eq_hc_robin} and the fact that our model can be directly applied to high-dimensional PDEs, not just geometrically complex PDEs.

\subsection{Ablation Study} \label{sec:exp_abla}
In this subsection, we are interested in the impact of the ``extra fields'' and the hyper-parameters of ``hardness'' (i.e., $\beta_s$ and $\beta_t$, where $\beta_t$ is the ``hardness'' in the temporal domain, refer to Appendix~\ref{sec_a6}).

\subsubsection{Extra fields} \label{sec:exp_abla_extra}
We test the vanilla PINN on Poisson's equation and nonlinear Schrödinger equation, where the control variable is whether to use the ``extra fields'', and the hard constraint method is not included (see Appendix~\ref{sec_a8_4}). We vary the network architecture and report the ratio of the moving variance (MovVar) of $\overline{|\nabla_{\bm{\theta}}\mathcal{L}_{\mathcal{F}}|}$ 
to that of $\overline{|{\nabla_{\tilde{\bm{\theta}}}\tilde{\mathcal{L}}_{\mathcal{F}}}|}$ (the one with the ``extra fields'') at each iteration (see Figure~\ref{fig:abla_ext}). In both PDEs, we find that models with the ``extra fields'' achieve smaller gradient oscillations during training, which is consistent with our theoretical results in Section~\ref{sec:theo}. This is because the ``extra fields'' reduce the orders of the derivatives, which in turn avoids vanishing or exploding gradients.

\begin{figure}[t]
    \centering
    \subfigure[Poisson's equation]{\label{fig:abla_poission}\includegraphics[width=.40\textwidth]{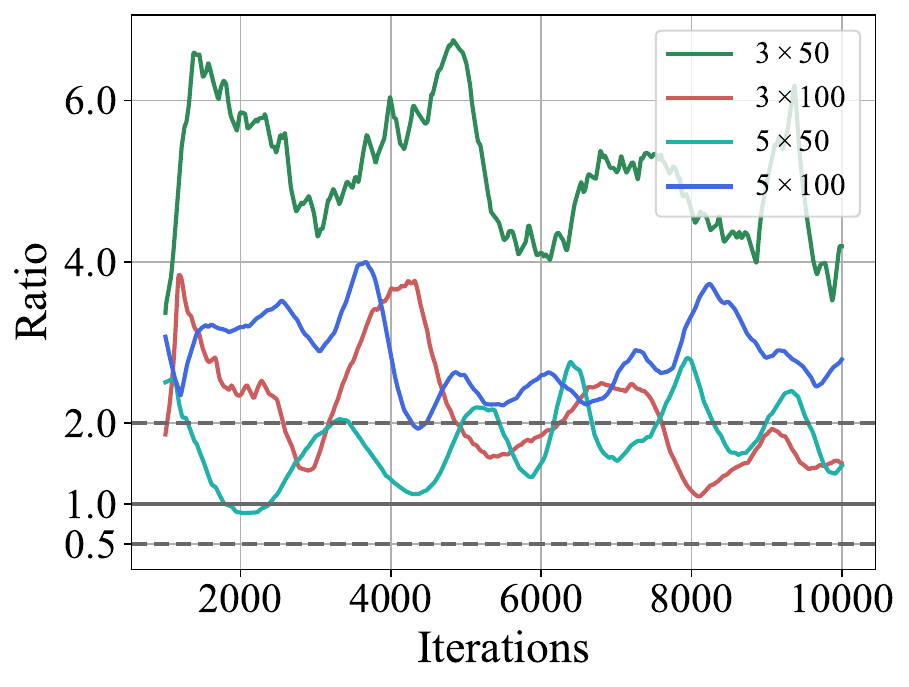}}
    \hspace{1em}
    \subfigure[Schrödinger equation]{\label{fig:abla_ns}\includegraphics[width=.40\textwidth]{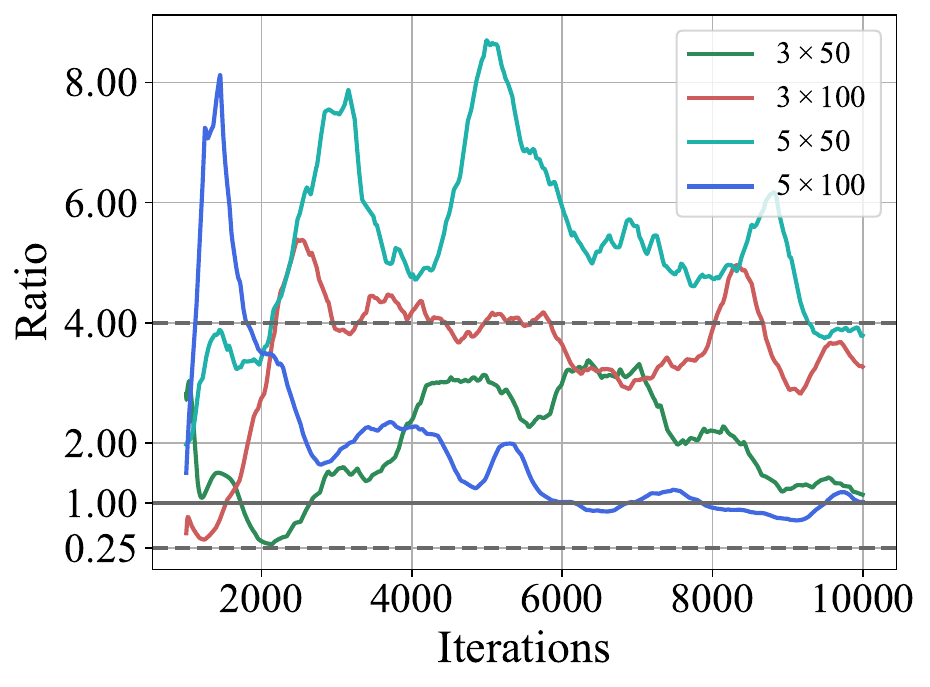}}
    \vspace{-0.1in}
    \caption{The iteration history of $\mathrm{MovVar}(\overline{|\nabla_{\bm{\theta}}\mathcal{L}_{\mathcal{F}}|})\Big/ \mathrm{MovVar}(\overline{|{\nabla_{\tilde{\bm{\theta}}}\tilde{\mathcal{L}}_{\mathcal{F}}}|})$.}
    \label{fig:abla_ext}
    \vspace{-0.2in}
\end{figure}

\begin{table}[!b]
    \caption{The MAE / MAPE of $T$ on different $\beta_s$ and $\beta_t$}
    \centering
    \begin{small}
    \begin{tabular}{lllll}
    \toprule
    & $\beta_t=1$     & $\beta_t=2$     & $\beta_t=5$ & $\beta_t=10$  \\
    \midrule
    $\beta_s=1$ & $0.3492$ / $48.21\%$     & $0.3539$ / $48.56\%$      & $0.3226$ / $44.64\%$  & $0.2889$ / $40.69\%$  \\
    $\beta_s=2$ & $0.2800$ / $40.30\%$     & $0.1670$ / $26.16\%$      & $0.2140$ / $31.72\%$  & $0.1619$ / $25.20\%$  \\
    $\beta_s=5$ & $0.1878$ / $28.68\%$     & $0.1195$ / $19.68\%$      & $0.0542$ / $10.35\%$  & $\textbf{0.0221}$ / $\textbf{5.10\%}$  \\
    $\beta_s=10$ & $0.1896$ / $29.15\%$     & $0.1104$ / $18.70\%$      & $0.0517$ / $10.56\%$  & $0.0329$ / $8.15\%$  \\
    \bottomrule
    \end{tabular}
    \end{small}
    \label{tab:abla}
\end{table}

\subsubsection{Hyper-parameters of Hardness}
We repeated the experiment in Section~\ref{sec:exp_heat} with different combinations of the values of $\beta_s$ and $\beta_t$, and all other settings stay the same. Table~\ref{tab:abla} gives the testing results (where the MAE and MAPE are the average results over $t\in[0,1]$). In general, the accuracy increases as the values of $\beta_s$ and $\beta_t$ become larger. The reason is that ``harder'' $\beta_s$ and $\beta_t$ cause the BCs to be better satisfied, which in turn makes the approximations more compliant with physical constraints. However, too large $\beta_s$ and $\beta_t$ (we emphasize that they are both on the exponential, see Eq.~\eqref{eq_unified_hc}, \eqref{eq_determine_alpha}, and \eqref{eq:hc_time_2}) can lead to numerical instability, which may be related to performance degradation.

\section{Conclusion}\label{sec:conclusion}
In this paper, we develop a unified hard-constraint framework for solving geometrically complex PDEs. With the help of ``extra fields'', we reformulate the PDEs and find the general solutions of the Dirichlet, Neumann, and Robin BCs. Based on this derivation, we propose a general formula of the hard-constraint ansatz which is applicable to time-dependent, multi-boundary, and high-dimensional cases. Besides, our theoretical analysis demonstrates that reformulation is helpful for training stability. Extensive experiments show that our method can achieve state-of-the-art performance in real-world geometrically complex as well as high-dimension PDEs, and our theoretical results are universal to general PDEs. One of the limitations is that here we only consider the three most commonly used BCs and future works may include extending this framework to more general BCs. 

\section*{Acknowledgements}
This work was supported by the National Key Research and Development Program of China (2017YFA0700904,
2020AAA0106000, 2020AAA0104304, 2020AAA0106302), NSFC Projects (Nos. 62061136001, 62076145, 62076147, U19B2034, U1811461, U19A2081, 61972224), Beijing NSF Project (No. JQ19016), BNRist (BNR2022RC01006), Tsinghua Institute for Guo Qiang, and the High Performance Computing Center, Tsinghua University. J.Z is also supported by the XPlorer Prize.



\section*{Checklist}


\begin{enumerate}

\item For all authors...
\begin{enumerate}
  \item Do the main claims made in the abstract and introduction accurately reflect the paper's contributions and scope?
    \answerYes{}
  \item Did you describe the limitations of your work?
    \answerYes{See Section~\ref{sec:conclusion}.}
  \item Did you discuss any potential negative societal impacts of your work?
    \answerYes{See Appendix~\ref{sec_a11}.}
  \item Have you read the ethics review guidelines and ensured that your paper conforms to them?
    \answerYes{}
\end{enumerate}

\item If you are including theoretical results...
\begin{enumerate}
  \item Did you state the full set of assumptions of all theoretical results?
    \answerYes{See Section~\ref{sec:theo}, \modify{Appendix~\ref{sec_a3}, \ref{sec_a4}, and \ref{sec_a7}.}}
	\item Did you include complete proofs of all theoretical results?
    \answerYes{See \modify{Appendix~\ref{sec_a3}, \ref{sec_a4}, and \ref{sec_a7}}.}
\end{enumerate}

\item If you ran experiments...
\begin{enumerate}
  \item Did you include the code, data, and instructions needed to reproduce the main experimental results (either in the supplemental material or as a URL)?
    \answerYes{\modify{See the supplementary material.}}
  \item Did you specify all the training details (e.g., data splits, hyperparameters, how they were chosen)?
    \answerYes{See Appendix~\ref{sec_a8}.}
	\item Did you report error bars (e.g., with respect to the random seed after running experiments multiple times)?
    \answerYes{\modify{See Appendix~\ref{sec_a10}.}}
	\item Did you include the total amount of compute and the type of resources used (e.g., type of GPUs, internal cluster, or cloud provider)?
    \answerYes{See Appendix~\ref{sec_a8}.}
\end{enumerate}

\item If you are using existing assets (e.g., code, data, models) or curating/releasing new assets...
\begin{enumerate}
  \item If your work uses existing assets, did you cite the creators?
    \answerYes{See Section~\ref{sec:exp_ns}.}
  \item Did you mention the license of the assets?
    \answerNA{}
  \item Did you include any new assets either in the supplemental material or as a URL?
    \answerNA{}
  \item Did you discuss whether and how consent was obtained from people whose data you're using/curating?
    \answerYes{See Section~\ref{sec:exp_ns}.}
  \item Did you discuss whether the data you are using/curating contains personally identifiable information or offensive content?
    \answerNA{No such information in the data.}
\end{enumerate}

\item If you used crowdsourcing or conducted research with human subjects...
\begin{enumerate}
  \item Did you include the full text of instructions given to participants and screenshots, if applicable?
    \answerNA{}
  \item Did you describe any potential participant risks, with links to Institutional Review Board (IRB) approvals, if applicable?
    \answerNA{}
  \item Did you include the estimated hourly wage paid to participants and the total amount spent on participant compensation?
    \answerNA{}
\end{enumerate}

\end{enumerate}


\appendix

\def\theequation{A\arabic{equation}}
\def\thefigure{A\arabic{figure}}

\section{Appendix}


\subsection{A Counter Example of a Basis of the Null Space}\label{sec_a1}

We consider a special case of Eq.~(9), where $a_i(\vx)\equiv 0$, $b_i(\vx)\equiv 1$, $g_i(\vx)\equiv 0$, and the dimension is $d=3$, where $\vn(\vx) = (n_1(\vx), n_2(\vx), n_3(\vx))$.
\begin{equation}
    n_1(\vx)p_{j1}(\vx) + n_2(\vx)p_{j2}(\vx) + n_3(\vx)p_{j3}(\vx) = 0,\qquad\vx\in\gamma_i,\qquad \forall i=1,\dots,m_j.
\end{equation}

And a counter example of $\bm{B}(\vx)$ is given by
\begin{equation}
\label{eq:couter_example}
\begin{aligned}
    \bm{B}(\vx) &= [\bm{\beta}_1(\vx), \bm{\beta}_2(\vx)],\\
    \bm{\beta}_1(\vx)  &= [n_2(\vx), -n_1(\vx), 0]^\top,\\
    \bm{\beta}_2(\vx)  &= [n_3(\vx), 0, -n_1(\vx)]^\top.
\end{aligned}
\end{equation}
One could verify that the above formula of $\bm{B}(\vx)$ is a basis of the null space, if $n_1(\vx)\neq 0, \forall \vx \in \gamma_i$. For a special case where $\gamma_i$ is a plane parallel to the x-axis, however, we have $n_1(\vx)\equiv 0, \forall \vx \in \gamma_i$. In this case, $\bm{\beta}_1(\vx), \bm{\beta}_2(\vx)$ are no longer linearly independent and cannot represent all possible solutions to $(p_{j1}(\vx),p_{j2}(\vx),p_{j3}(\vx))$. Therefore, Eq.~\eqref{eq:couter_example} is not an admissible choice for $\bm{B}(\vx)$.

\subsection{A Basis of the Null Space in Low Dimensions}\label{sec_a2}
Let $\tilde{\vn}=(a_i,b_i\vn)\big/\sqrt{a_i^2+b_i^2}$, $\tilde{g}_i=g_i\big/\sqrt{a_i^2+b_i^2}$, and $\tilde{\vp}_j=(u_j,\vp_j)$. Eq.~(9) is equivalent to
\begin{equation}\label{eq:general_bc_equiv}
    \tilde{\vn}(\vx)\cdot\tilde{\vp}_j(\vx) = \tilde{g}_i(\vx),\qquad\vx\in\gamma_i,\qquad \forall i=1,\dots,m_j.
\end{equation}

For $d=1$, we can rewrite Eq.~\eqref{eq:general_bc_equiv} as (the dimension of $\tilde{\vp}_j$ is $d+1$)
\begin{equation} 
    \tilde{n}_1(\vx)\tilde{p}_{j1}(\vx) + \tilde{n}_2(\vx)\tilde{p}_{j2}(\vx) = \tilde{g}_i(\vx),\qquad\vx\in\gamma_i,\qquad \forall i=1,\dots,m_j.
\end{equation}
And we can find that the following basis is an acceptable one
\begin{equation}\label{eq:accept_example_2}
    \bm{B}(\vx)=[\tilde{n}_2(\vx),-\tilde{n}_1(\vx)]^\top,
\end{equation}
since $\bm{B}(\vx)=0\Leftrightarrow \tilde{\vn}(\vx)=0$, and the latter contradicts the fact that $\tilde{\vn}\cdot\tilde{\vn}=1$. Then, we can use $\bm{B}$ to construct the general solution $\tilde{\vp}_j^{\gamma_i}$ under $d=1$.

And for $d=2$, Eq.~\eqref{eq:general_bc_equiv} becomes
\begin{equation}
    \tilde{n}_1(\vx)\tilde{p}_{j1}(\vx) + \tilde{n}_2(\vx)\tilde{p}_{j2}(\vx) + \tilde{n}_3(\vx)\tilde{p}_{j3}(\vx)= \tilde{g}_i(\vx),\qquad\vx\in\gamma_i,\qquad \forall i=1,\dots,m_j.
\end{equation}
An acceptable $\bm{B}(\vx)$ is given by
\begin{equation}
\label{eq:accept_example}
\begin{aligned}
    \bm{B}(\vx) &= [\bm{\beta}_1(\vx), \bm{\beta}_2(\vx), \bm{\beta}_3(\vx)],\\
    \bm{\beta}_1(\vx)  &= [0, \tilde{n}_3(\vx), -\tilde{n}_2(\vx)]^\top,\\
    \bm{\beta}_2(\vx)  &= [-\tilde{n}_3(\vx), 0, \tilde{n}_1(\vx)]^\top,\\
    \bm{\beta}_3(\vx)  &= [\tilde{n}_2(\vx), -\tilde{n}_1(\vx), 0]^\top.
\end{aligned}
\end{equation}
We note that $\bm{\beta}_1(\vx), \bm{\beta}_2(\vx), \bm{\beta}_3(\vx)$ live in the null space and $\mathrm{rank}(\bm{B}(\vx))=2$. So $\bm{B}(\vx)$ contains a basis in the null space, which can be used to construct the general solution $\tilde{\vp}_j^{\gamma_i}$ under $d=2$.

\subsection{Explanation for the General Solution}\label{sec_a3}
We first show how to find an admissible expression of $\bm{B}(\vx)$ in arbitrary dimensions with respect to Eq.~\eqref{eq:general_bc_equiv} which is equivalent to original formulation of the BC (see Eq.~(9)). We perform a Gram–Schmidt orthogonalization of $\tilde{\vn}$ (whose dimension is $d+1$) on each vector in the standard basis to get
\begin{equation}\label{eq_basis}
    \bm{\beta}_k(\vx) = \bm{e}_k - \frac{\bm{e}_k\cdot\tilde{\vn}(\vx)}{\tilde{\vn}(\vx)\cdot\tilde{\vn}(\vx)}\tilde{\vn}(\vx)=\bm{e}_k-\big(\bm{e}_k\cdot\tilde{\vn}(\vx)\big)\tilde{\vn}(\vx),\qquad k=1,\dots,d+1,
\end{equation}
where $[\bm{e}_1,\dots,\bm{e}_{d+1}]=\bm{I}_{d+1}$, and obviously all $\bm{\beta}_k(\vx),k=1,\dots,d+1,$ are in the $\mathrm{Null}(\tilde{\vn}^\top)$. We set $\bm{B}(\vx)=[\bm{\beta}_1(\vx),\dots,\bm{\beta}_{d+1}(\vx)]=\bm{I}_{d+1}-\tilde{\vn}(\vx)\tilde{\vn}(\vx)^\top$. Furthermore, we can prove that $\mathrm{rank}(\bm{B}(\vx))=d,\forall \vx \in\gamma_i$ (see Lemma~\ref{lemma_a_1}). Therefore, for $\forall\vx\in\gamma_i$, $\bm{B}(\vx)$ always contains a basis of $\mathrm{Null}(\tilde{\vn}^\top)$, and we consider such a $\bm{B}(x)$ to be an ideal choice for the general solution $\tilde{\vp}_j^{\gamma_i}$. 

\begin{lemma}\label{lemma_a_1}
$\mathrm{rank}(\bm{B}(\vx))=d$ holds for all $\vx \in\gamma_i$, where $\bm{B}(\vx)=\bm{I}_{d+1}-\tilde{\vn}(\vx)\tilde{\vn}(\vx)^\top$.
\end{lemma}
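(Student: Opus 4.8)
The plan is to analyze the matrix $\bm{B}(\vx) = \bm{I}_{d+1} - \tilde{\vn}(\vx)\tilde{\vn}(\vx)^\top$ directly as the orthogonal projector onto the hyperplane $\mathrm{Null}(\tilde{\vn}^\top)$ in $\mathbb{R}^{d+1}$. The key structural fact I would use first is that $\tilde{\vn}(\vx)$ is a \emph{unit} vector for every $\vx\in\gamma_i$: by definition $\tilde{\vn} = (a_i, b_i\vn)/\sqrt{a_i^2+b_i^2}$, and since $\vn$ is the unit outward normal we have $\lVert\tilde{\vn}\rVert_2^2 = (a_i^2 + b_i^2\lVert\vn\rVert_2^2)/(a_i^2+b_i^2) = 1$; moreover the denominator never vanishes because $a_i^2(\vx)+b_i^2(\vx)\neq 0$ on $\gamma_i$ by assumption, so $\tilde{\vn}(\vx)$ is well-defined and nonzero throughout $\gamma_i$.

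Given that $\tilde{\vn}$ is a unit vector, I would compute the eigenstructure of $\bm{B}(\vx)$ explicitly. First, $\bm{B}(\vx)\tilde{\vn}(\vx) = \tilde{\vn}(\vx) - \tilde{\vn}(\vx)(\tilde{\vn}(\vx)^\top\tilde{\vn}(\vx)) = \tilde{\vn}(\vx) - \tilde{\vn}(\vx) = \bm{0}$, so $\tilde{\vn}(\vx)$ is an eigenvector with eigenvalue $0$. Second, for any vector $\vv$ orthogonal to $\tilde{\vn}(\vx)$ we have $\bm{B}(\vx)\vv = \vv - \tilde{\vn}(\vx)(\tilde{\vn}(\vx)^\top\vv) = \vv$, so the orthogonal complement of $\mathrm{span}(\tilde{\vn}(\vx))$, which has dimension $d$, is the eigenspace for eigenvalue $1$. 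Hence $\bm{B}(\vx)$ has eigenvalue $1$ with multiplicity $d$ and eigenvalue $0$ with multiplicity $1$, giving $\mathrm{rank}(\bm{B}(\vx)) = d$. Equivalently, one can argue via the rank-one update: $\mathrm{rank}(\tilde{\vn}\tilde{\vn}^\top) = 1$ since $\tilde{\vn}\neq\bm{0}$, and subtracting it from the identity drops the rank by exactly one because $\tilde{\vn}$ lies in the range of the rank-one term — but the eigenvalue computation is cleaner and self-contained.

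The argument is uniform in $\vx$: the only input used is $\lVert\tilde{\vn}(\vx)\rVert_2 = 1$, which holds for \emph{all} $\vx\in\gamma_i$, so the rank is exactly $d$ at every point of $\gamma_i$ with no degeneracy — this is precisely the property that the counterexample in Appendix~\ref{sec_a1} fails and that makes $\bm{B}(\vx)$ an admissible choice. There is no real obstacle here; the only point requiring a word of care is verifying that $\tilde{\vn}$ is genuinely unit-norm and well-defined, which is exactly where the hypothesis $a_i^2(\vx)+b_i^2(\vx)\neq 0$ on $\gamma_i$ is invoked. Everything else is the standard spectral description of an orthogonal projector.
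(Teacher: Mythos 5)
Your proof is correct, but it takes a genuinely different route from the paper's. You identify $\bm{B}(\vx)$ as the orthogonal projector onto the hyperplane orthogonal to $\tilde{\vn}(\vx)$ and read off its full spectrum: eigenvalue $0$ on $\mathrm{span}(\tilde{\vn})$ and eigenvalue $1$ on the $d$-dimensional orthogonal complement, whence the rank is $d$. The paper instead splits the claim into two inequalities. The upper bound $\mathrm{rank}(\bm{B})\le d$ comes from $\bm{B}\tilde{\vn}=\bm{0}$, exactly as in your computation; but the lower bound is obtained by observing that $\bm{H}=\bm{I}_{d+1}-2\tilde{\vn}\tilde{\vn}^\top$ is a Householder matrix, hence orthogonal and invertible, and then applying rank subadditivity, $d+1=\mathrm{rank}(\bm{H})\le\mathrm{rank}(\bm{I}_{d+1}-\tilde{\vn}\tilde{\vn}^\top)+\mathrm{rank}(\tilde{\vn}\tilde{\vn}^\top)=\mathrm{rank}(\bm{B})+1$. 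Your spectral argument is the more standard and self-contained one, and it yields strictly more information (the eigenvalues, not just the rank, which is the data later invoked when $\bm{B}$ is orthogonally diagonalized in Appendix~A.4); the paper's Householder trick avoids any explicit discussion of eigenspaces at the cost of being slightly more indirect. You also make explicit a point the paper leaves implicit: that $\tilde{\vn}$ is well-defined and unit-norm everywhere on $\gamma_i$ precisely because $a_i^2+b_i^2\neq 0$ there and $\vn$ is a unit normal. Both proofs rest on that fact, so spelling it out is a worthwhile addition.
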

\begin{proof}
For all $\vx\in\gamma_i$, we have known that $\bm{B} = \bm{I}_{d+1}-\tilde{\vn}\tilde{\vn}^\top$, where $\tilde{\vn}\cdot\tilde{\vn}=\tilde{\vn}^\top\tilde{\vn}=1$, and
\begin{equation}
    \bm{B}\tilde{\vn}=(\bm{I}_{d+1}-\tilde{\vn}\tilde{\vn}^\top)\tilde{\vn}=\tilde{\vn}-\tilde{\vn}\tilde{\vn}^\top\tilde{\vn}=\tilde{\vn}-\tilde{\vn}=\bm{0}.
\end{equation}
Hence, $\mathrm{rank}(\bm{B})\leq d$. Besides, we notice that $\bm{H}=\bm{I}_{d+1}-2\tilde{\vn}\tilde{\vn}^\top$ is a Householder matrix, which is an invertible matrix, since
\begin{equation}
    \bm{H}^\top \bm{H}=(\bm{I}_{d+1}-2\tilde{\vn}\tilde{\vn}^\top)^2=\bm{I}_{d+1}-4\tilde{\vn}\tilde{\vn}^\top+4\tilde{\vn}\tilde{\vn}^\top\tilde{\vn}\tilde{\vn}^\top=\bm{I}_{d+1}-4\tilde{\vn}\tilde{\vn}^\top+4\tilde{\vn}\tilde{\vn}^\top=\bm{I}_{d+1}.
\end{equation}
So $\mathrm{rank}(\bm{H})=d+1$, and we have
\begin{equation}
    d+1=\mathrm{rank}(\bm{I}_{d+1}-2\tilde{\vn}\tilde{\vn}^\top)\leq \mathrm{rank}(\bm{I}_{d+1}-\tilde{\vn}\tilde{\vn}^\top) + \mathrm{rank}(\tilde{\vn}\tilde{\vn}^\top)=\mathrm{rank}(\bm{B})+1,
\end{equation}
which can deduce $d\leq\mathrm{rank}(\bm{B})$. Therefore, $\mathrm{rank}(\bm{B})=d$. 
\end{proof}

Finally, we show that the general solution in Eq.~(10) satisfies the BC in Eq.~\eqref{eq:general_bc_equiv}.
\begin{equation}
\begin{aligned}
    \tilde{\vn}(\vx) \cdot \tilde{\vp}_j^{\gamma_i}(\vx) &= \tilde{\vn}(\vx) \cdot \bm{B}(\vx) \mathrm{NN}_j^{\gamma_i}(\vx) + \tilde{\vn}(\vx) \cdot \tilde{\vn}(\vx) \tilde{g}_i(\vx)\\
    &= \tilde{\vn}(\vx) \cdot \left( \bm{I}_{d+1}-\tilde{\vn}(\vx)\tilde{\vn}(\vx)^\top \right) \mathrm{NN}_j^{\gamma_i}(\vx) + \tilde{g}_i(\vx)\\
    &= \tilde{g}_i(\vx),
\end{aligned}
\end{equation}
where we omit the trainable parameters for simplicity. Besides, the discussion of $\bm{B}(\vx)$ in low-dimensional cases (i.e., $d=1$ and $d=2$, see Appendix~\ref{sec_a3}) is similar, and we will leave it to the reader.

\subsection{Theoretical Guarantee of the Constructed Ansatz}\label{sec_a4}

In Appendix~\ref{sec_a3}, we have demonstrated that $\bm{B}(\vx)$ contains a basis of the null space of the BC for $\forall\vx \in \gamma_i$ and the general solution in Eq.~(10) satisfies the corresponding BC. In this subsection, we will show that our constructed ansatz in Eq.~(11) is theoretically correct. We first prove that the ansatz in Eq.~(11) satisfies all the BCs under the following assumptions.
\begin{assumption}\label{ass_a_0}
The problem domain $\Omega$ is bounded.
\end{assumption}
\begin{assumption}\label{ass_a_1}
The shortest distance between $\gamma_1,\dots,\gamma_{m_j}$ is greater than zero for $j=1,\dots,n$.
\end{assumption}
\begin{assumption}\label{ass_a_2}
All the extended distance functions $l^{\partial\Omega},l^{\gamma_i},i=1,\dots,m_j$ are continuous and satisfy that $\min_{\vx \in \partial\Omega \setminus \gamma_i} l^{\gamma_i}(\vx) \geq C_i$, $\forall \vx\in\gamma_i$, $i=1,\dots,m_j$ for $j=1,\dots,n$, where $C_i$ is a positive constant.
\end{assumption}

\begin{theorem}
$\forall \epsilon > 0$, there exists $\beta^0_s\in \mathbb{R}$, such that
\begin{equation}
    \left| \tilde{\vn} (\vx) \cdot (\hat{u}_j, \hat{\vp}_j) - \tilde{g}_i(\vx) \right| < \epsilon,
\end{equation}
holds for all $\beta_s > \beta^0_s$, $\vx\in \gamma_i$, $i=1,\dots,m_j$, $j=1,\dots,n$, where $\tilde{\vn}=(a_i,b_i\vn)\big/\sqrt{a_i^2+b_i^2}$, $\tilde{g}_i=g_i\big/\sqrt{a_i^2+b_i^2}$, and $\bm{B}(\vx)=\bm{I}_{d+1}-\tilde{\vn}(\vx)\tilde{\vn}(\vx)^\top$.
\end{theorem}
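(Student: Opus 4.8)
The plan is to evaluate the ansatz \eqref{eq_unified_hc} at a generic boundary point, note that the summand anchored at $\gamma_i$ reproduces the datum $\tilde{g}_i$ exactly, and show that every other contribution decays like $e^{-\beta_s}$.

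First I would fix $j\in\{1,\dots,n\}$, an index $i\le m_j$, and a point $\vx\in\gamma_i$, and write $\tilde{\vn}=(a_i,b_i\vn)/\sqrt{a_i^2+b_i^2}$ (the quantity appearing in the statement). Since $\gamma_i\subset\partial\Omega$, the defining property \eqref{eq_ext_dist} gives $l^{\partial\Omega}(\vx)=0$, so the $\mathrm{NN}_{\mathrm{main}}$ term of \eqref{eq_unified_hc} vanishes; likewise $l^{\gamma_i}(\vx)=0$, hence $\exp[-\alpha_i l^{\gamma_i}(\vx)]=1$. Separating the $i'=i$ summand,
\begin{equation}
\begin{aligned}
\tilde{\vn}(\vx)\cdot(\hat u_j,\hat{\vp}_j)(\vx)-\tilde{g}_i(\vx)
&=\big(\tilde{\vn}(\vx)\cdot\tilde{\vp}_j^{\gamma_i}(\vx)-\tilde{g}_i(\vx)\big)\\
&\quad+\sum_{i'\neq i}\exp\!\big[-\alpha_{i'}l^{\gamma_{i'}}(\vx)\big]\;\tilde{\vn}(\vx)\cdot\tilde{\vp}_j^{\gamma_{i'}}(\vx).
\end{aligned}
\end{equation}
By the computation in Appendix~\ref{sec_a3} (which only uses $\bm B=\bm I_{d+1}-\tilde{\vn}\tilde{\vn}^\top$ and $\tilde{\vn}^\top\tilde{\vn}=1$), the first bracket is identically zero, so the error reduces to the cross-term sum.

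Next I would bound each cross term. By Assumption~\ref{ass_a_1} (equivalently, the disjointness of the closures of the $\gamma$'s), $\vx\in\gamma_i$ forces $\vx\in\partial\Omega\setminus\gamma_{i'}$ for every $i'\neq i$, so $l^{\gamma_{i'}}(\vx)\ge\min_{\vy\in\partial\Omega\setminus\gamma_{i'}}l^{\gamma_{i'}}(\vy)$; substituting into the definition \eqref{eq_determine_alpha} of $\alpha_{i'}$ gives $\alpha_{i'}l^{\gamma_{i'}}(\vx)\ge\beta_s$, hence $\exp[-\alpha_{i'}l^{\gamma_{i'}}(\vx)]\le e^{-\beta_s}$ once $\beta^0_s\ge 0$ (which keeps $\alpha_{i'}>0$ via Assumption~\ref{ass_a_2}). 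For the remaining factor, $\lVert\tilde{\vn}(\vx)\rVert=1$, so Cauchy--Schwarz gives $|\tilde{\vn}(\vx)\cdot\tilde{\vp}_j^{\gamma_{i'}}(\vx)|\le\lVert\tilde{\vp}_j^{\gamma_{i'}}(\vx)\rVert$; since $\Omega$ is bounded (Assumption~\ref{ass_a_0}) its closure $\overline{\Omega}$ is compact, and $\tilde{\vp}_j^{\gamma_{i'}}$ — built from the MLP $\mathrm{NN}_j^{\gamma_{i'}}$ and the (extended, continuous) data $a_{i'},b_{i'},\vn,g_{i'}$, see Appendix~\ref{sec_a5} — is continuous there, hence bounded by some $M<\infty$ that does not depend on $\beta_s$. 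Therefore the error is at most $(m_j-1)M e^{-\beta_s}$, and it suffices to take $\beta^0_s=\max\big(0,\ \ln(\max_{j}(m_j-1)M/\epsilon)\big)$, which works uniformly over $\vx\in\gamma_i$, $i\le m_j$, $j\le n$.

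The step I expect to need the most care is this uniform bound $M$: it depends on the off-$\gamma_{i'}$ extensions of $a_{i'},b_{i'},\vn,g_{i'}$ being continuous on $\overline{\Omega}$ (so that $\tilde{\vn}$, $\tilde{g}_{i'}$, and hence $\tilde{\vp}_j^{\gamma_{i'}}$ are continuous on a compact set) and on the fact that $\tilde{\vp}_j^{\gamma_{i'}}$ carries its own parameters $\bm\theta_j^{\gamma_{i'}}$ and so does not vary with $\beta_s$. Everything else is the triangle inequality together with the exponential estimate above.
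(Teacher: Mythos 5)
Your proposal is correct and follows essentially the same route as the paper's proof: evaluate the ansatz on $\gamma_i$, use $l^{\partial\Omega}(\vx)=0$ to kill the main network term, observe that the $i$-th summand reproduces $\tilde{g}_i$ exactly (via the identity $\tilde{\vn}\cdot\tilde{\vp}_j^{\gamma_i}=\tilde{g}_i$ from Appendix~\ref{sec_a3}), and make the remaining cross-terms small by driving $\beta_s$ up. If anything, you are more explicit than the paper about the two points it leaves implicit — the estimate $\alpha_{i'}l^{\gamma_{i'}}(\vx)\ge\beta_s$ coming from Eq.~\eqref{eq_determine_alpha} and the $\beta_s$-independent uniform bound $M$ on $\lVert\tilde{\vp}_j^{\gamma_{i'}}\rVert$ by continuity on the compact closure — so the extra care you flag is warranted but not a deviation.
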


\begin{proof}
For any $\vx\in \gamma_i$, we have $l^{\partial \Omega}(\vx) = 0$ according to the definition of the extended distance functions. Thus, $(\hat{u}_j, \hat{\vp}_j)$ is now equal to
\begin{equation}
    (\hat{u}_j,\hat{\vp}_j) = \sum_{k=1}^{m_j} \exp{\big[-\alpha_k l^{\gamma_k}(\vx)\big]}\tilde{\vp}_j^{\gamma_k}(\vx),
\end{equation}
where we omit the trainable parameters. Then, according to Assumptions~\ref{ass_a_0} $\sim$ \ref{ass_a_2}, we can choose a sufficiently large $\beta^i_s$ (see Eq.~(12) for the relationship between $\alpha_i$ and $\beta_s$), such that
\begin{equation}
\begin{aligned}
    \left| \tilde{\vn} (\vx) \cdot (\hat{u}_j, \hat{\vp}_j) - \tilde{g}_i(\vx) \right| &= \left| \tilde{\vn} (\vx) \cdot \left( \sum_{k=1}^{m_j} \exp{\big[-\alpha_k l^{\gamma_k}(\vx)\big]}\tilde{\vp}_j^{\gamma_k}(\vx) \right) - \tilde{g}_i(\vx) \right|\\
    &< \left| \tilde{\vn} (\vx) \cdot \exp{\big[-\alpha_i l^{\gamma_i}(\vx)\big]}\tilde{\vp}_j^{\gamma_i}(\vx) - \tilde{g}_i(\vx) \right| \\
    &\qquad+ \left| \tilde{\vn} (\vx) \cdot \sum_{k\neq i} \exp{\big[-\alpha_k l^{\gamma_k}(\vx)\big]}\tilde{\vp}_j^{\gamma_k}(\vx) \right|\\
    &\leq \left| \tilde{\vn} (\vx) \cdot \tilde{\vp}_j^{\gamma_i}(\vx) - \tilde{g}_i(\vx) \right| + \left| \sum_{k\neq i} \exp{\big[-\alpha_k l^{\gamma_k}(\vx)\big]}\tilde{\vp}_j^{\gamma_k}(\vx) \right|\\
    &= \left| \sum_{k\neq i} \exp{\big[-\alpha_k l^{\gamma_k}(\vx)\big]}\tilde{\vp}_j^{\gamma_k}(\vx) \right|\\
    &< \epsilon,
\end{aligned}
\end{equation}
where we note that $l^{\gamma_i}(\vx)=0$ and $l^{\gamma_k}(\vx)>0,\forall k\neq i$ for all $\vx\in\gamma_i$. Let $\beta^0_s = \max \{\beta^i_s \mid i=1,\dots,m_j\}$, then according to the arbitrariness of $j$, we conclude that the theorem holds.
\end{proof}

Next, we will prove that our ansatz can approximate the solution to the PDEs with arbitrarily low errors under the following assumptions in addition to Assumptions~\ref{ass_a_0} $\sim$ \ref{ass_a_2}.

\begin{assumption}\label{ass_a_3}
The solution to the PDEs $\vu(\vx)$ is unique, bounded, and at least first-order continuous by element.
\end{assumption}

\begin{assumption}\label{ass_a_4}
$a_i(\vx)$, $b_i(\vx)$, and $g_i(\vx)$ are continuous (hence $\tilde{g}_i(\vx)$ is continuous, too) in $\gamma_i$ for $i=1,\dots,m_j$, $j=1,\dots,n$.
\end{assumption}

\begin{assumption}\label{ass_a_5}
Since $\bm{B}(\vx)=\bm{I}_{d+1}-\tilde{\vn}(\vx)\tilde{\vn}(\vx)^\top$ is a real symmetric matrix, we can perform an orthogonal diagonalization $\bm{B}(\vx) = \bm{P}(\vx)^\top \Lambda(\vx) \bm{P}(\vx)$, where $\Lambda(\vx)=\mathrm{diag}(\lambda_1(\vx),\dots,\lambda_d(\vx), 0)$, $\lambda_1(\vx)>\cdots>\lambda_d(\vx)>0$. We assume that $\tilde{\vn}(\vx)$, $\bm{P}(\vx)$, and $\Lambda(\vx)$ are piece-wise continuous by element in $\gamma_i$ for $i=1,\dots,m_j$, $j=1,\dots,n$.
\end{assumption}

To begin with, we prove this lemma.
\begin{lemma}\label{lemma_a_6}
$\forall \epsilon > 0$, there exists $\bm{\theta}^{\gamma_i}_j\in {\Theta}^{\gamma_i}_j$, such that
\begin{equation}
    \left \|  \tilde{\vp}^{\gamma_i}_j(\vx; \bm{\theta}^{\gamma_i}_j)-\bm{q}(\vx) \right\|_1 < \epsilon,
\end{equation}
holds for all $\vx\in \gamma_i$ if $\bm{q}(\vx)$ is continuous in $\gamma_i$ and satisfies the BC (i.e., $\tilde{\vn}(\vx)\cdot \bm{q}(\vx)=\tilde{g}_i(\vx),\forall \vx \in \gamma_i$), where ${\Theta}^{\gamma_i}_j$ is the parameter space of the neural network ${\mathrm{NN}}^{\gamma_i}_j$, $\| \cdot \|_1$ is the 1-norm of matrices (operator norm), and $\tilde{\vp}^{\gamma_i}_j$ as well as $\bm{q}$ are both of dimension $d+1$. The above conclusion holds for all $i=1,\dots,m_j$, $j=1,\dots,n$.
\end{lemma}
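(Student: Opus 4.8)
The plan is to reduce the statement to the universal approximation theorem, exploiting the fact that the construction in Eq.~\eqref{eq_hc_robin} reproduces any admissible target \emph{exactly} when the network is set equal to that target. Recall $\tilde{\vp}^{\gamma_i}_j(\vx;\bm{\theta}^{\gamma_i}_j)=\bm{B}(\vx)\mathrm{NN}^{\gamma_i}_j(\vx;\bm{\theta}^{\gamma_i}_j)+\tilde{\vn}(\vx)\tilde{g}_i(\vx)$ with $\bm{B}(\vx)=\bm{I}_{d+1}-\tilde{\vn}(\vx)\tilde{\vn}(\vx)^\top$, which (since $\tilde{\vn}^\top\tilde{\vn}=1$) is the orthogonal projection onto $\mathrm{Null}(\tilde{\vn}^\top)$. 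First I would observe that, because $\bm{q}$ satisfies the BC, i.e.\ $\tilde{\vn}(\vx)\cdot\bm{q}(\vx)=\tilde{g}_i(\vx)$,
\[
  \bm{B}(\vx)\bm{q}(\vx)+\tilde{\vn}(\vx)\tilde{g}_i(\vx)=\bm{q}(\vx)-\big(\tilde{\vn}(\vx)\cdot\bm{q}(\vx)\big)\tilde{\vn}(\vx)+\tilde{g}_i(\vx)\tilde{\vn}(\vx)=\bm{q}(\vx),
\]
so it only remains to choose $\mathrm{NN}^{\gamma_i}_j$ uniformly close to $\bm{q}$ on $\gamma_i$ and to control how $\bm{B}$ propagates the resulting error.

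For the approximation step, I would first fix the domain: by Assumption~\ref{ass_a_0} the set $\overline{\gamma_i}$ is closed and bounded, hence compact (the $\overline{\gamma_k}$ are moreover pairwise disjoint by Assumption~\ref{ass_a_1}), so on this compact patch $\bm{q}$ is uniformly continuous. Since $\mathrm{NN}^{\gamma_i}_j:\mathbb{R}^d\to\mathbb{R}^{d+1}$ is an MLP with the non-polynomial activation $\tanh$, the universal approximation theorem (applied coordinate-by-coordinate to the $d+1$ outputs) yields, for every $\delta>0$, a parameter $\bm{\theta}^{\gamma_i}_j\in\Theta^{\gamma_i}_j$ with
\[
  \big\lVert \mathrm{NN}^{\gamma_i}_j(\vx;\bm{\theta}^{\gamma_i}_j)-\bm{q}(\vx) \big\rVert_1 < \delta \qquad \text{for all } \vx\in\gamma_i .
\]

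Finally I would propagate this bound: subtracting the identity above gives $\tilde{\vp}^{\gamma_i}_j(\vx;\bm{\theta}^{\gamma_i}_j)-\bm{q}(\vx)=\bm{B}(\vx)\big(\mathrm{NN}^{\gamma_i}_j(\vx;\bm{\theta}^{\gamma_i}_j)-\bm{q}(\vx)\big)$, and since every entry of $\bm{B}(\vx)$ has absolute value at most $1$ ($\tilde{\vn}$ being a unit vector), the induced matrix $1$-norm satisfies $\lVert\bm{B}(\vx)\rVert_1\le d+1$ uniformly in $\vx$. Hence $\lVert \tilde{\vp}^{\gamma_i}_j(\vx;\bm{\theta}^{\gamma_i}_j)-\bm{q}(\vx)\rVert_1\le(d+1)\,\delta$ for all $\vx\in\gamma_i$, and choosing $\delta=\epsilon/(d+1)$ closes the argument; the conclusion for all $i$ and $j$ is immediate since everything is done for an arbitrary fixed patch $\gamma_i$.

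I expect the only genuine subtlety to be the regularity and compactness bookkeeping around $\gamma_i$: invoking the universal approximation theorem in the uniform norm needs $\overline{\gamma_i}$ compact and $\bm{q}$ continuous up to the boundary of the patch, rather than merely continuous on the relatively open set $\gamma_i$, and it also implicitly uses continuity of $\tilde{\vn}$, $\tilde{g}_i$ and $\bm{B}$ — exactly the content of Assumptions~\ref{ass_a_0}--\ref{ass_a_1} together with the standing continuity hypotheses in Assumptions~\ref{ass_a_4}--\ref{ass_a_5}. Everything else is a routine combination of universal approximation with the algebraic observation that $\bm{B}$ is idempotent and fixes $\bm{q}-\tilde{g}_i\tilde{\vn}\in\mathrm{Null}(\tilde{\vn}^\top)$.
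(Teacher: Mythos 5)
Your proof is correct, and it takes a genuinely simpler route than the paper's. Both arguments reduce to writing the error as $\bm{B}(\vx)\bigl(\mathrm{NN}^{\gamma_i}_j(\vx;\bm{\theta}^{\gamma_i}_j)-\bm{r}(\vx)\bigr)$ for some admissible target $\bm{r}$ with $\bm{q}=\bm{B}\bm{r}+\tilde{\vn}\tilde{g}_i$, bounding $\lVert\bm{B}\rVert_1$ by a dimension-dependent constant, and invoking universal approximation. The difference is in the choice of $\bm{r}$: the paper constructs a particular $\bm{r}$ via the orthogonal diagonalization $\bm{B}=\bm{P}^\top\Lambda\bm{P}$ (normalizing so that the last component of $\bm{P}\bm{r}$ vanishes and inverting $\Lambda$ on its range), which requires Assumption~\ref{ass_a_5} and only yields a \emph{piece-wise} continuous target, forcing them to cite a universal approximation result for discontinuous functions. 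You instead observe that $\bm{B}$ is the orthogonal projection onto $\mathrm{Null}(\tilde{\vn}^\top)$ and that $\bm{q}-\tilde{\vn}\tilde{g}_i$ already lies in that null space, so $\bm{B}\bm{q}+\tilde{\vn}\tilde{g}_i=\bm{q}$ and one may simply take $\bm{r}=\bm{q}$, which is continuous by hypothesis. This eliminates the dependence on Assumption~\ref{ass_a_5}, replaces the piecewise-continuity bookkeeping with the classical uniform approximation theorem on a compact set, and even sharpens the norm constant from $d+2$ to $d+1$ (immaterial, but tidy). Your closing caveat about needing $\overline{\gamma_i}$ compact and $\bm{q}$ continuous up to the closure is a legitimate point that the paper's proof glosses over as well; it does not affect the validity of either argument under the stated assumptions.
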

\begin{proof}
From Eq.~\eqref{eq_basis} and Lemma~\ref{lemma_a_1}, we know that $\bm{B}(\vx)$ contain a basis of $\mathrm{Null}(\tilde{\vn}^\top)$. Since $\bm{q}(\vx)$ satifies the BC, we can represent it as
\begin{equation}\label{eq_represent_q}
    \bm{q}(\vx) = \bm{B}(\vx) \bm{r}(\vx) + \tilde{\vn}(\vx) \tilde{g}_i(\vx).
\end{equation}
Then we will show that there exists a piece-wise continuous choice of $\bm{r}(\vx)$. We rewrite Eq.~\eqref{eq_represent_q} as
\begin{equation}\label{eq_represent_q_2}
    \bm{q}(\vx) =  \bm{P}(\vx)^\top \Lambda(\vx) \bm{P}(\vx) \bm{r}(\vx) + \tilde{\vn}(\vx) \tilde{g}_i(\vx).
\end{equation}
Since $\bm{B}(\vx)$ has $d+1$ column vectors, which is greater than the dimension of $\mathrm{Null}(\tilde{\vn}^\top)$ (i.e., $d$), the choice of $\bm{q}(\vx)$ is not unique. We can choose a particular $\bm{q}(\vx)$ which satisfies that the last element of $\bm{P}(\vx)\bm{r}(\vx)$ is zero (i.e., $\bm{P}(\vx)\bm{r}(\vx)=[\dots,0]^\top$). Next, we continue with the equivalent transformation of Eq.~\eqref{eq_represent_q_2}.
\begin{equation}
\begin{aligned}
    &&  \bm{P}(\vx)^\top \Lambda(\vx) \bm{P}(\vx) \bm{r}(\vx) + \tilde{\vn}(\vx) \tilde{g}_i(\vx)&=\bm{q}(\vx) ,\\
    \Longleftrightarrow&& \quad   \bm{P}(\vx)^\top \Lambda(\vx) \bm{P}(\vx) \bm{r}(\vx) &= \bm{q}(\vx)- \tilde{\vn}(\vx) \tilde{g}_i(\vx),\\
    \Longleftrightarrow&& \quad   \Lambda(\vx) \bm{P}(\vx) \bm{r}(\vx) &= \bm{P}(\vx)\left(\bm{q}(\vx)- \tilde{\vn}(\vx) \tilde{g}_i(\vx)\right) ,\\
    \Longleftrightarrow&& \quad  \mathrm{diag}(1,\dots,1,0)  \bm{P}(\vx) \bm{r}(\vx) &= \Lambda'(\vx) \bm{P}(\vx)\left(\bm{q}(\vx)- \tilde{\vn}(\vx) \tilde{g}_i(\vx)\right),\\
    \Longleftrightarrow&& \quad   \bm{P}(\vx) \bm{r}(\vx) &= \Lambda'(\vx) \bm{P}(\vx)\left(\bm{q}(\vx)- \tilde{\vn}(\vx) \tilde{g}_i(\vx)\right),
\end{aligned}
\end{equation}
where $\Lambda'(\vx)=\mathrm{diag}(1/\lambda_1(\vx),\dots,1/\lambda_d(\vx), 0)$. The last equivalence holds because the last element of $\bm{P}(\vx) \bm{r}(\vx)$ is always zero. From Assumption~\ref{ass_a_4} and \ref{ass_a_5}, combining the above formula, we have that $\bm{P}(\vx) \bm{r}(\vx)$ is piece-wise continuous by element. Noticing that $\bm{r}(\vx) = \bm{P}(\vx)^\top \bm{P}(\vx) \bm{r}(\vx)$, we know that the $\bm{r}(\vx)$ we chosen is also piece-wise continuous by element.

We notice that
\begin{equation}
\begin{aligned}
    \left \|  \bm{B}(\vx) \right\|_1 &= \left \| \bm{I}_{d+1}-\tilde{\vn}(\vx)\tilde{\vn}(\vx)^\top  \right\|_1\\
    &\le \left \| \bm{I}_{d+1}  \right\|_1 + \left \| \tilde{\vn}(\vx)\tilde{\vn}(\vx)^\top  \right\|_1\\
    &\le 1 + d + 1\\
    &= d + 2.
\end{aligned}
\end{equation}
According to the Universal Approximation of neural networks \cite{llanas2008constructive}, $\forall\epsilon>0$, there exists $\bm{\theta}^{\gamma_i}_j\in {\Theta}^{\gamma_i}_j$, such that
\begin{equation}
    \left \| \mathrm{NN}^{\gamma_i}_j(\bm{x};\bm{\theta}^{\gamma_i}_j) - \bm{r}(\vx) \right\|_1 < \frac{\epsilon}{d+2},
\end{equation}
holds for all $\vx\in \gamma_i$. Therefore,
\begin{equation}
\begin{aligned}
    \left \|  \tilde{\vp}^{\gamma_i}_j(\vx; \bm{\theta}^{\gamma_i}_j)-\bm{q}(\vx) \right\|_1 &=  \left \| \bm{B}(\vx)\left(\mathrm{NN}^{\gamma_i}_j(\bm{x};\bm{\theta}^{\gamma_i}_j) - \bm{r}(\vx) \right)\right\|_1\\
    &\le \left \|  \bm{B}(\vx) \right\|_1  \left \| \mathrm{NN}^{\gamma_i}_j(\bm{x};\bm{\theta}^{\gamma_i}_j) - \bm{r}(\vx) \right\|_1\\
    &< \epsilon.
\end{aligned}
\end{equation}
According to the arbitrariness of $i$ and $j$, we conclude that the lemma holds.
\end{proof}

Finally, we state the following theorem.
\begin{theorem}
$\forall \epsilon > 0$, there exists $\beta_s\in\mathbb{R}$, $\bm{\theta}^{\mathrm{main}}_j\in\Theta^{\mathrm{main}}_j$, $\bm{\theta}^{\gamma_i}_j\in {\Theta}^{\gamma_i}_j,i=1,\dots,m_j$, such that
\begin{equation}\label{eq_final_theo}
    \left \| (\hat{u}_j,\hat{\vp}_j)- (u_j, \nabla u_j) \right\|_1 < \epsilon,
\end{equation}
holds for all $\vx\in \Omega\cup\partial\Omega$, $j=1,\dots,n$, where ${\Theta}_{*}$ is the parameter space of the corresponding neural network, $\| \cdot \|_1$ is the 1-norm. The ground truth solution to the PDEs is $\vu(\vx)=(u_1(\vx),\dots,u_n(\vx))$.
\end{theorem}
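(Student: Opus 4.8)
The plan is to combine the boundary-satisfaction theorem already proved (which, given a sufficiently large $\beta_s$, makes the ansatz match the BCs to within $\epsilon$) with Lemma~\ref{lemma_a_6} (universal approximation of the per-boundary pieces $\tilde{\vp}_j^{\gamma_i}$ among continuous functions satisfying the BC) and the standard universal approximation theorem applied to $\mathrm{NN}_{\mathrm{main}}$. Concretely, fix $j$. Since $\vu$ is (element-wise) first-order continuous by Assumption~\ref{ass_a_3}, the target pair $(u_j,\nabla u_j)$ is a well-defined (piece-wise) continuous function on $\Omega\cup\partial\Omega$. First I would pick a smooth (or continuous) extension $\bm{q}^{\gamma_i}(\vx)$ of $(u_j,\nabla u_j)\big|_{\gamma_i}$ to all of $\Omega\cup\partial\Omega$ that still satisfies the BC exactly on $\gamma_i$ — on $\gamma_i$ the true solution already satisfies $\tilde{\vn}\cdot(u_j,\nabla u_j)=\tilde g_i$, so by Assumption~\ref{ass_a_4} and a partition-of-unity/Tietze-type extension such a $\bm{q}^{\gamma_i}$ exists. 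Apply Lemma~\ref{lemma_a_6} to get $\bm{\theta}_j^{\gamma_i}$ with $\|\tilde{\vp}_j^{\gamma_i}(\cdot;\bm{\theta}_j^{\gamma_i})-\bm{q}^{\gamma_i}\|_1<\delta$ on $\gamma_i$; since $\bm{q}^{\gamma_i}$ is continuous and $\gamma_i$ has a positive distance from the other boundaries (Assumption~\ref{ass_a_1}), a compactness/continuity argument extends this closeness estimate to a neighborhood of $\gamma_i$.

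Next I would handle the interior. On the region where all the weights $\exp[-\alpha_i l^{\gamma_i}(\vx)]$ are small, the ansatz reduces essentially to $l^{\partial\Omega}(\vx)\,\mathrm{NN}_{\mathrm{main}}(\vx;\bm{\theta}_{\mathrm{main}})$. Since $l^{\partial\Omega}$ is continuous, bounded away from zero on compact interior sets (Assumption~\ref{ass_a_0}, \ref{ass_a_2}) and vanishes only on $\partial\Omega$, and $(u_j,\nabla u_j)$ is continuous and bounded (Assumption~\ref{ass_a_3}), the function $\vx\mapsto \big((u_j,\nabla u_j)-\sum_i \exp[-\alpha_i l^{\gamma_i}]\tilde{\vp}_j^{\gamma_i}\big)/l^{\partial\Omega}$ is continuous on the relevant compact interior subdomain; by the universal approximation theorem there is $\bm{\theta}_{\mathrm{main}}$ making $\mathrm{NN}_{\mathrm{main}}$ approximate it, so that after multiplying back by $l^{\partial\Omega}$ the total ansatz is within $\epsilon$ of $(u_j,\nabla u_j)$ there. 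Finally I would glue the two regimes: partition $\Omega\cup\partial\Omega$ into the union of small neighborhoods $U_i$ of each $\gamma_i$ (where the $\gamma_i$-piece dominates and the exponential decay of the other pieces, governed by Eq.~\eqref{eq_determine_alpha}, is controlled by choosing $\beta_s$ large) and the remaining compact interior set $K$ (handled by $\mathrm{NN}_{\mathrm{main}}$); on each $U_i$ use the per-boundary estimate plus the smallness of the $l^{\partial\Omega}\mathrm{NN}_{\mathrm{main}}$ term there, and on $K$ use the interior estimate plus the smallness of the boundary pieces. Taking $\beta_s$ large enough and then the network parameters by the two approximation theorems yields the bound \eqref{eq_final_theo} uniformly, and arbitrariness of $j$ finishes it.

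The main obstacle I anticipate is the \emph{simultaneous} control on the overlap region: once $\beta_s$ is chosen to make the boundary pieces decay fast enough away from their own $\gamma_i$, the weight $\exp[-\alpha_i l^{\gamma_i}(\vx)]$ can also be appreciably less than $1$ \emph{on} $\gamma_i$'s neighborhood interior points, so the $\gamma_i$-piece alone need not approximate $(u_j,\nabla u_j)$ there — one must let $\mathrm{NN}_{\mathrm{main}}$ pick up the deficit, but $\mathrm{NN}_{\mathrm{main}}$ is multiplied by $l^{\partial\Omega}$ which is tiny near $\partial\Omega$. Reconciling ``the correction needed is $O(1)$ but the available knob is $O(l^{\partial\Omega})$'' near the boundary is the delicate point; it forces the partition $\{U_i\}\cup K$ and the choice of the extensions $\bm{q}^{\gamma_i}$ to be made carefully (e.g. the $U_i$ must shrink with $\epsilon$, and one exploits that on $\partial\Omega$ the weights sum to essentially the indicator of $\gamma_i$, so no correction is needed exactly on the boundary). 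I would also need to be careful that the $\tilde{\vp}_j^{\gamma_i}$ are only \emph{piece-wise} continuous (Assumption~\ref{ass_a_5}), so the universal approximation invoked must be the version valid for piece-wise continuous targets, and the gluing estimates should be stated in the $1$-norm consistently with Lemma~\ref{lemma_a_6}.
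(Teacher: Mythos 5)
Your proposal follows essentially the same route as the paper's proof: use Lemma~\ref{lemma_a_6} to choose the boundary networks $\bm{\theta}_j^{\gamma_i}$, take $\beta_s$ large so that the weights $\exp[-\alpha_k l^{\gamma_k}]$ isolate each $\tilde{\vp}_j^{\gamma_i}$ on its own boundary, and then invoke universal approximation for $\mathrm{NN}_{\mathrm{main}}$ applied to the quotient $\big((u_j,\nabla u_j)-\sum_i\exp[-\alpha_i l^{\gamma_i}]\tilde{\vp}_j^{\gamma_i}\big)/l^{\partial\Omega}$, multiplying back by $l^{\partial\Omega}\le C$ at the end. The one substantive difference is that the paper applies universal approximation to that quotient directly on all of $\Omega$ without comment, whereas you correctly flag that the quotient need not be bounded (let alone continuous) up to $\partial\Omega$: on $\gamma_i$ the numerator is only $O(\delta)+O(e^{-\beta_s})$ rather than zero while the denominator vanishes, and on the thin annulus where $\exp[-\alpha_i l^{\gamma_i}]$ has decayed to, say, $1/2$ the deficit is $O(1)$ while $l^{\partial\Omega}$ is small. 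Your partition into shrinking boundary neighborhoods $U_i$ plus a compact interior $K$, with the gluing estimates done in the $1$-norm, is the honest way to close this, and is a genuine refinement of (rather than a departure from) the paper's argument; the cost is only the extra bookkeeping of choosing the extensions $\bm{q}^{\gamma_i}$ and the sets $U_i$ in terms of $\epsilon$ and $\beta_s$.
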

\begin{proof}
For $\vx\in\gamma_i$, $i=1,\dots,m_j$, $(u_j, \nabla u_j)$ is continuous (according to Assumption~\ref{ass_a_3}) and satisfies the BC (which the solution needs to meet). From Lemma~\ref{lemma_a_6}, the definition of $(\hat{u}_j,\hat{\vp}_j)$ in Eq.~(11), and Assumptions~\ref{ass_a_0} $\sim$ \ref{ass_a_2}, we can find $\bm{\theta}^{\gamma_i}_j\in {\Theta}^{\gamma_i}_j$ and a large enough $\beta_s^i$ such that Eq.~\eqref{eq_final_theo} holds for all $\vx\in\gamma_i$.

Then we fix $\beta_s = \max \{\beta^i_s \mid i=1,\dots,m_j\}$ and $\bm{\theta}^{\gamma_i}_j\in {\Theta}^{\gamma_i}_j, i=1,\dots,m_j$ (which are what we determined for $\vx\in\gamma_i,i=1,\dots,m_j$). From Assumption~\ref{ass_a_0}, we have $|l^{\partial \Omega}(\vx)| < C,\forall \vx\in\Omega$, where $C$ is a positive constant. For $\vx\in\Omega$, according to the Universal Approximation Theorem of neural networks \cite{chen1995universal}, there exists $\bm{\theta}^{\mathrm{main}}_j \in \Theta^{\mathrm{main}}_j$ satisfying that
\begin{equation}
    \left \| \frac{(u_j, \nabla u_j) - \sum_{i=1}^{m_j} \exp{\big[-\alpha_i l^{\gamma_i}(\vx)\big]}\tilde{\vp}_j^{\gamma_i}(\vx;\bm{\theta}_j^{\gamma_i})}{l^{\partial\Omega}(\vx)} - \mathrm{NN}^{\mathrm{main}}_j(\vx;\bm{\theta}^{\mathrm{main}}_j)\right\|_1 < \frac{\epsilon}{C}.
\end{equation}
Therefore,
\begin{equation}
\begin{aligned}
    &\left \| (\hat{u}_j,\hat{\vp}_j)- (u_j, \nabla u_j) \right\|_1\\
    =& \bigg \| (u_j, \nabla u_j) - \sum_{i=1}^{m_j} \exp{\big[-\alpha_i l^{\gamma_i}(\vx)\big]}\tilde{\vp}_j^{\gamma_i}(\vx;\bm{\theta}_j^{\gamma_i}) 
    - l^{\partial\Omega}(\vx)\mathrm{NN}^{\mathrm{main}}_j(\vx;\bm{\theta}^{\mathrm{main}}_j) \bigg\|_1\\
    =& \bigg \| \frac{(u_j, \nabla u_j) - \sum_{i=1}^{m_j} \exp{\big[-\alpha_i l^{\gamma_i}(\vx)\big]}\tilde{\vp}_j^{\gamma_i}(\vx;\bm{\theta}_j^{\gamma_i})}{l^{\partial\Omega}(\vx)} 
    - \mathrm{NN}^{\mathrm{main}}_j(\vx;\bm{\theta}^{\mathrm{main}}_j)\bigg\|_1 \left| l^{\partial \Omega}(\vx) \right|\\
    <& \frac{\epsilon}{C}\cdot C = \epsilon.
\end{aligned}
\end{equation}
According to the arbitrariness of $j$, we have proven this theorem.
\end{proof}

Besides, we note that it is easy to extend the above theorems to time-dependent cases (the ansatz is given in Appendix~\ref{sec_a6}), which will not be discussed separately here.

\subsection{Extension of the Parameter Functions in the BCs}\label{sec_a5}
In Eq.~(9), it is noted that $a_i$, $b_i$, $\vn$ or $g_i$ may be only defined at $\gamma_i$. But they are included in our ansatz (see Eq.~(10) and Eq.~(11)), which is defined in $\Omega\cup\partial\Omega$. So we need to extend their definition smoothly to $\Omega\cup\partial\Omega$, using interpolation or approximation via neural networks. We consider the airfoil boundary (i.e, $\gamma_{\mathrm{airfoil}}$) in Section~5.3 as a motivating example. 

\begin{figure}[t!]
    \centering
    \includegraphics[width=0.8\linewidth]{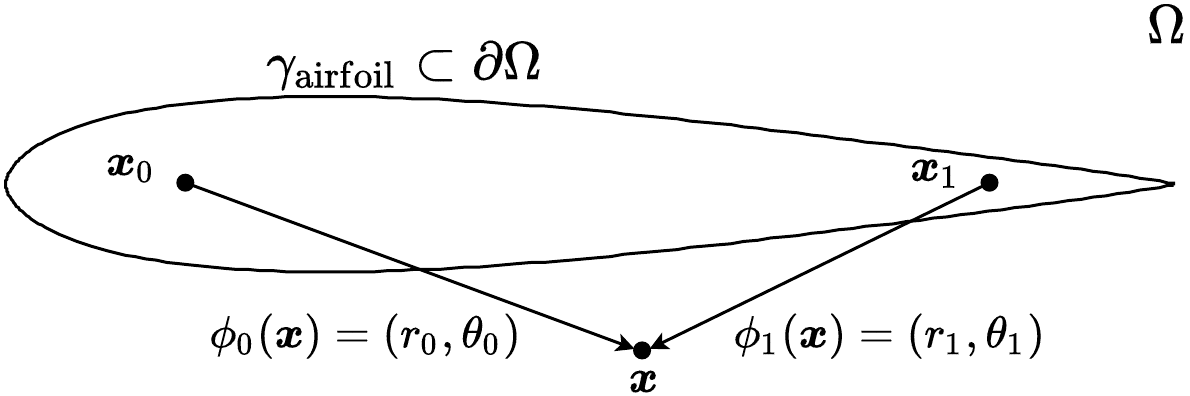}
    \vspace{-0.1in}
    \caption{Illustration of the extension from $\gamma_{\mathrm{airfoil}}$ to $\Omega\cup\partial\Omega$.}
    \label{fig:airfoil_extension}
    \vspace{-0.1in}
\end{figure}

Supposing $f(\vx)$ is only defined in $\gamma_{\mathrm{airfoil}}$, our task is to extend its definition to $\Omega\cup\partial\Omega$. As shown in Figure~\ref{fig:airfoil_extension}, we first place two reference points (i.e., $\vx_0$ and $\vx_1$) on the front and rear half of the airfoil. For any $\vx\in\Omega\cup\partial\Omega$, it can be expressed as polar coordinates with respect to $\vx_0$ and $\vx_1$, respectively. We concatenate the two polar coordinates to form a new space. We next perform interpolation and approximation under the new space. This is because in the new space, we can better characterize the shape of the airfoil. It is true that there are many ways for coordinate transformations, not limited to the example here. 

As for the interpolation, we can sample several points at the $\gamma_{\mathrm{airfoil}}$ to obtain the dataset $\{ ((\theta_0^{(i)}, \theta_1^{(i)}), f^{(i)}) \}_{i=1}^N$. For any $\vx\in\Omega\cup\partial\Omega$, we generate the corresponding extended $f(\vx)$ by interpolating in the dataset. The interpolation method used here depends on the smoothness requirements of the ansatz. In addition, the number of reference points can also be changed, and in experiments, we found that only one reference point is enough.

Approximation via neural networks is a general method that does not require manual design. In this case, we can sample several points at the $\gamma_{\mathrm{airfoil}}$ to construct our dataset $\{ ((\phi_0(\vx^{(i)}), \phi_1(\vx^{(i)})), f^{(i)}) \}_{i=1}^N$, followed by training a neural network on the dataset, i.e. $\mathrm{NN}(\phi_0(\vx^{(i)}), \phi_1(\vx^{(i)}))\approx f^{(i)}$. For any $\vx\in\Omega\cup\partial\Omega$, we take $\mathrm{NN}(\phi_0(\vx), \phi_1(\vx))$ as the corresponding extended $f(\vx)$. We can also train the neural network in the original space. However, experimental results show that training in the new space can achieve better results. The reason may be that the complex geometry become smoother and easier to learn in the new space.

It is worth noting that, in addition to the cases mentioned above, the extended distance functions $l(\vx)$ (here we omit the superscript and see Eq.~(4) for its definition) may also need to be handled similarly. Because for the complex geometry, the distance function can be very complex and we may want to replace it with a cheap surrogate model. The methods are similar, including approximating the distance function with a neural network, or constructing splines function \citep{sheng2021pfnn}.

\subsection{The Hard-Constraint Framework for Time-dependent PDEs}\label{sec_a6}
In this section, we consider the following time-dependent PDEs
\begin{equation}
    \mathcal{F}[\vu(\vx, t)]=\bm{0},\qquad\vx=(x_1,\dots,x_d)\in\Omega,t\in(0,T],
\end{equation}
where $t$ is the temporal coordinate, and the other notations are the same as those in Section~3.1. For each $u_j, j=1,\dots,n$, we pose suitable boundary conditions (BCs)
\begin{equation}
    a_i(\vx, t)u_j+b_i(\vx,t)\big( \vn(\vx)\cdot\nabla u_j \big) = g_i(\vx,t),\quad\vx\in\gamma_i,t\in(0,T],\quad \forall i=1,\dots,m_j,
\end{equation}
and an initial condition (IC)
\begin{equation}
    u_j(\vx, 0) = f_j(\vx),\quad\vx\in\Omega.
\end{equation}

Following the pipeline described in Section~3.2, we can find the general solution $\tilde{\vp}_j^{\gamma_i}(\vx,t)$ as
\begin{equation}
    \tilde{\vp}_j^{\gamma_i}=\bm{B}(\vx, t)\mathrm{NN}_j^{\gamma_i}(\vx, t) + \tilde{\vn}(\vx,t)\tilde{g}_i(\vx,t),
\end{equation}
where $\tilde{\vn}=(a_i,b_i\vn)\big/\sqrt{a_i^2+b_i^2}$, $\tilde{g}_i=g_i\big/\sqrt{a_i^2+b_i^2}$, $\mathrm{NN}_j^{\gamma_i}:\mathbb{R}^{d+1}\to\mathbb{R}^{d+1}$ is a neural network, and $\bm{B}(\vx, t)=\bm{I}_{d+1}-\tilde{\vn}(\vx, t){\tilde{\vn}(\vx,t)}^\top$. And we omit the trainable parameters of neural networks for neatness. 

Finally, we can construct our ansatz $(\hat{u}_j,\hat{\vp}_j)$ as
\begin{subequations}\label{eq_time_ansatz}
\begin{align}
    ({u}_j^\dagger,\hat{\vp}_j) &= l^{\partial \Omega}(\vx) \mathrm{NN}^{\mathrm{main}}_j(\vx,t) + \sum_{i=1}^{m_j} \exp{\big[-\alpha_i l^{\gamma_i}(\vx)\big]}\tilde{\vp}_j^{\gamma_i}(\vx,t),&& \forall j=1,\dots,n\label{eq:hc_time},\\
    \hat{u}_j &= {u}_j^\dagger(\vx, t) \big(1-\exp{[-\beta_t t]}\big) + f_j(\vx) \exp{[-\beta_t t]},&& \forall j=1,\dots,n,\label{eq:hc_time_2}
\end{align}
\end{subequations}
where ${u}_j^\dagger$ is an intermediate variable that incorporates hard constraints in spatial dimensions, $\mathrm{NN}^{\mathrm{main}}_j:\mathbb{R}^{d+1}\to\mathbb{R}^{d+1}$ is the main neural network, $l^{\partial \Omega}, l^{\gamma_i},i=1,\dots,m_j$ are extended distance functions (see Eq.~(4)), $\alpha_i~(i=1,\dots,m_j)$ is determined in Eq.~(12), and $\beta_t\in\mathbb{R}$ is a hyper-parameter of the ``hardness'' in the temporal domain. 

\subsection{Supplements to the Theoretical Analysis}\label{sec_a7}
In this section, we first give some supplements to the problem setting in Section~4. Then we present the proof of Theorem~4.1. Finally, we will characterize the mechanism described in Section~4 with another tool, the condition number.

\subsubsection{Supplements to the Problem Setting}\label{sec_a7_1}
As mentioned in Section~4, we consider the following 1D Poisson's equation
\begin{subequations}
\label{eq:theo_possion}
\begin{align}
    \Delta u(x)&=-a^2\sin{ax},&&x\in(0,2\pi),\label{eq:theo_possion_1}\\
    u(x)&=0,&&x=0\lor x=2\pi,
\end{align}
\end{subequations}
where $a\in\mathbb{R}$ and $u$ is the physical quantity of interest. 
Here we use a single-layer neural network of width $K$ as our ansatz, i.e., $\hat{u}=\vc^\top \sigma(\vw x+\vb)$, where $\vc,\vw,\vb\in\mathbb{R}^K$, $\sigma$ is an element-wise activation function (for simplicity, we take $\sigma$ as $\tanh$). To study the impact of the \textit{extra fields} alone, we train $\hat{u}$ in a soft-constrained manner. For ease of discussion, we consider the loss function in continuous form
\begin{equation}
    \mathcal{L}(\bm{\theta})=\mathcal{L}_{\mathcal{F}}(\bm{\theta}) + \mathcal{L}_{\mathcal{B}}(\bm{\theta})\approx\frac{1}{2\pi}\int_0^{2\pi}\big(\Delta \hat{u}(x) + a^2\sin(ax)\big)^2 \diff{x} + \big(\hat{u}(0)\big)^2 + \big(\hat{u}(2\pi)\big)^2,
\end{equation}
where $\bm{\theta}=(\vc,\vw,\vb)$ is a set of trainable parameters. 

Let $p=\nabla u = \diff{u}/\diff{x}$. We reformulate Eq.~\eqref{eq:theo_possion} via the \textit{extra fields} to obtain
\begin{subequations}
\label{eq:theo_possion_after_transformation}
\begin{align}
    \nabla p(x)&=-a^2\sin{ax},&&x\in(0,2\pi),\\
    p(x)&=\nabla u(x),&&x\in(0,2\pi),\\
    u(x)&=0,&&x=0\lor x=2\pi.
\end{align}
\end{subequations}
Our ansatz becomes $\hat{u}=\vc^\top \sigma(\vw x+\vb)$ and $\hat{p}=\vc_{p}^\top \sigma(\vw x+\vb)$, where $\vc_{p}\in\mathbb{R}^K$ is a weight vector with respect to the output $\hat{p}$. We can see that the loss term of the BC does not change while that of the PDE becomes
\begin{equation}
    \tilde{\mathcal{L}}_{\mathcal{F}}(\tilde{\bm{\theta}})\approx\frac{1}{2\pi}\int_0^{2\pi}\Big[\big(\nabla \hat{p}(x) + a^2\sin(ax)\big)^2 + \big(\hat{p}(x)-\nabla \hat{u}(x)\big)^2\Big] \diff{x},
\end{equation}
where $\tilde{\bm{\theta}}=(\vc,\vw,\vb,\vc_{p})$ is a set of trainable parameters.

\subsubsection{Proof of Theorem~4.1}\label{sec_a7_2}
In this part, we provide detailed proof of Theorem 4.1.

We first derive the derivatives of the ansatz for the original PDEs (we recall that $\sigma$ is $\tanh$ and we have $\sigma' = 1 - \sigma^2$) 
\begin{subequations}
\begin{align}
    \frac{\diff{\hat{u}}}{\diff{x}} &= \vc^\top \Big[\big(\bm{1}-\sigma^2(\vw x+\vb)\big)\circ\vw\Big],\\
    \frac{\Diff2{\hat{u}}}{\diff{x}^2} &= -2\vc^\top \Big[\sigma(\vw x+\vb)\circ\big(\bm{1}-\sigma^2(\vw x+\vb)\big)\circ\vw^2\Big].
\end{align}
\end{subequations}
We will abbreviate $\sigma(\vw x+\vb)$ as $\vg$, and then we have
\begin{subequations}
\begin{align}
    \frac{\diff{\hat{u}}}{\diff{x}} &= \vc^\top \big[(\bm{1}-\vg^2)\circ\vw\big],\\
    \frac{\Diff2{\hat{u}}}{\diff{x}^2} &= -2\vc^\top \big[\vg\circ(\bm{1}-\vg^2)\circ\vw^2\big].
\end{align}
\end{subequations}
Now, we can provide a bound for $(\partial \mathcal{L}_{\mathcal{F}} / \partial \vc)^\top$ as
\begin{equation}
\begin{aligned}
\bigg\lvert \Big(\frac{\partial \mathcal{L}_{\mathcal{F}}}{\partial \vc}\Big)^\top \bigg\rvert &= \frac{1}{2\pi}\Bigg\lvert \int_0^{2\pi}2\Big(\frac{\Diff2{\hat{u}}}{\diff{x}^2} + a^2\sin(ax)\Big) \cdot \bigg(\partial \Big( \frac{\Diff2{\hat{u}}}{\diff{x}^2} \Big) \Big/ \partial \vc \bigg) \diff{x}\Bigg\rvert\\
&= \frac{2}{\pi}\Bigg\lvert\int_0^{2\pi}\Big(-2\vc^\top \big[\vg\circ(\bm{1}-\vg^2)\circ\vw^2\big] + a^2\sin(ax)\Big)\\
&\qquad\cdot \big[\vg\circ(\bm{1}-\vg^2)\circ\vw^2\big] \diff{x}\Bigg\rvert\\
&\le \frac{2}{\pi} \Bigg(\int_0^{2\pi}\Big(-2\vc^\top \big[\vg\circ(\bm{1}-\vg^2)\circ\vw^2\big] + a^2\sin(ax)\Big)^2\diff{x} \Bigg)^{\frac{1}{2}}\\
&\qquad\cdot \Bigg(\int_0^{2\pi}\big[\vg\circ(\bm{1}-\vg^2)\circ\vw^2\big]^2\diff{x} \Bigg)^{\frac{1}{2}}\qquad (\mathrm{Cauchy-Schwarz})\\
&\le \frac{2}{\pi} \Bigg(\int_0^{2\pi}\big(2|\vc|^\top\vw^2 + a^2\big)^2\diff{x} \Bigg)^{\frac{1}{2}}\cdot \Bigg(\int_0^{2\pi}\vw^4\diff{x} \Bigg)^{\frac{1}{2}}\\
&= 4 \big(2|\vc|^\top\vw^2 + a^2\big) \vw^2\\
&\le 8 \big(|\vc|^\top\vw^2 + a^2\big) \vw^2,
\end{aligned}
\end{equation}
where $\le$ between two vectors is an element-wise comparison. Thus, $(\partial \mathcal{L}_{\mathcal{F}} / \partial \vc)^\top$ can be bounded by 
\begin{equation} \label{eq:bound_c}
    \bigg\lvert \Big(\frac{\partial \mathcal{L}_{\mathcal{F}}}{\partial \vc}\Big)^\top \bigg\rvert =  \mathcal{O}\big(|\vc|^\top\vw^2+a^2\big)\cdot\vw^2.
\end{equation}
Similarly, for $(\partial \mathcal{L}_{\mathcal{F}} / \partial \vw)^\top$, we have
\begin{equation}
\begin{aligned}
\bigg\lvert \Big(\frac{\partial \mathcal{L}_{\mathcal{F}}}{\partial \vw}\Big)^\top \bigg\rvert &= \frac{1}{2\pi}\Bigg\lvert \int_0^{2\pi}2\Big(\frac{\Diff2{\hat{u}}}{\diff{x}^2} + a^2\sin(ax)\Big) \cdot \bigg(\partial \Big( \frac{\Diff2{\hat{u}}}{\diff{x}^2} \Big) \Big/ \partial \vw \bigg) \diff{x}\Bigg\rvert\\
&= \frac{2}{\pi}\Bigg\lvert\int_0^{2\pi}\Big(-2\vc^\top \big[\vg\circ(\bm{1}-\vg^2)\circ\vw^2\big] + a^2\sin(ax)\Big)\\ 
&\qquad\cdot \vc\circ\big[2\vg\circ(\bm{1}-\vg^2)\circ\vw + x(\bm{1}-3\vg^2)\circ(\bm{1}-\vg^2)\circ\vw^2\big] \diff{x}\Bigg\rvert\\
&\le \frac{2}{\pi} \Bigg(\int_0^{2\pi}\Big(-2\vc^\top \big[\vg\circ(\bm{1}-\vg^2)\circ\vw^2\big] + a^2\sin(ax)\Big)^2\diff{x} \Bigg)^{\frac{1}{2}}\\
&\qquad\cdot \Bigg(\int_0^{2\pi}\vc^2\circ\big[2\vg\circ(\bm{1}-\vg^2)\circ\vw \\
&\qquad\qquad+ x(\bm{1}-3\vg^2)\circ(\bm{1}-\vg^2)\circ\vw^2\big]^2\diff{x} \Bigg)^{\frac{1}{2}}\\
&\le \frac{2}{\pi} \Bigg(\int_0^{2\pi}\big(2|\vc|^\top\vw^2 + a^2\big)^2\diff{x} \Bigg)^{\frac{1}{2}}\cdot \Bigg(\int_0^{2\pi}\vc^2\circ(\vw+2\pi\vw^2)^2\diff{x} \Bigg)^{\frac{1}{2}}\\
&= 4 \big(2|\vc|^\top\vw^2 + a^2\big) \cdot |\vc| \circ (|\vw|+2\pi\vw^2)\\
&\le 16\pi \big(|\vc|^\top\vw^2 + a^2\big) \cdot |\vc| \circ (|\vw|+\vw^2).
\end{aligned}
\end{equation}
Thus, the bound for $(\partial \mathcal{L}_{\mathcal{F}} / \partial \vw)^\top$ is given by 
\begin{equation} \label{eq:bound_w}
    \bigg\lvert \Big(\frac{\partial \mathcal{L}_{\mathcal{F}}}{\partial \vw}\Big)^\top \bigg\rvert =  \mathcal{O}\big(|\vc|^\top\vw^2 + a^2\big)\cdot\big(|\vc| \circ |\vw|\circ( |\vw| + \bm{1} )\big).
\end{equation}

And for $(\partial \mathcal{L}_{\mathcal{F}} / \partial \vb)^\top$, we have
\begin{equation}
\begin{aligned}
\bigg\lvert \Big(\frac{\partial \mathcal{L}_{\mathcal{F}}}{\partial \vb}\Big)^\top \bigg\rvert &= \frac{1}{2\pi}\Bigg\lvert \int_0^{2\pi}2\Big(\frac{\Diff2{\hat{u}}}{\diff{x}^2} + a^2\sin(ax)\Big) \cdot \bigg(\partial \Big( \frac{\Diff2{\hat{u}}}{\diff{x}^2} \Big) \Big/ \partial \vb \bigg) \diff{x}\Bigg\rvert\\
&= \frac{2}{\pi}\Bigg\lvert\int_0^{2\pi}\Big(-2\vc^\top \big[\vg\circ(\bm{1}-\vg^2)\circ\vw^2\big] + a^2\sin(ax)\Big)\\ 
&\qquad\cdot \big[\vc\circ (\bm{1}-3\vg^2)\circ(\bm{1}-\vg^2)\circ\vw^2\big] \diff{x}\Bigg\rvert\\
&\le \frac{2}{\pi} \Bigg(\int_0^{2\pi}\Big(-2\vc^\top \big[\vg\circ(\bm{1}-\vg^2)\circ\vw^2\big] + a^2\sin(ax)\Big)^2\diff{x} \Bigg)^{\frac{1}{2}}\\
&\qquad\cdot \Bigg(\int_0^{2\pi}\big[\vc\circ (\bm{1}-3\vg^2)\circ(\bm{1}-\vg^2)\circ\vw^2\big]^2\diff{x} \Bigg)^{\frac{1}{2}}\\
&\le \frac{2}{\pi} \Bigg(\int_0^{2\pi}\big(2|\vc|^\top\vw^2 + a^2\big)^2\diff{x} \Bigg)^{\frac{1}{2}}\cdot \Bigg(\int_0^{2\pi}\vc^2\circ\vw^4\diff{x} \Bigg)^{\frac{1}{2}}\\
&= 4 \big(2|\vc|^\top\vw^2 + a^2\big) \cdot |\vc| \circ \vw^2\\
&\le 8 \big(|\vc|^\top\vw^2 + a^2\big) \cdot |\vc| \circ \vw^2.
\end{aligned}
\end{equation}
Thus, the bound for $(\partial \mathcal{L}_{\mathcal{F}} / \partial \vb)^\top$ is given by 
\begin{equation} \label{eq:bound_b}
    \bigg\lvert \Big(\frac{\partial \mathcal{L}_{\mathcal{F}}}{\partial \vb}\Big)^\top \bigg\rvert =  \mathcal{O}\big(|\vc|^\top\vw^2 + a^2\big)\cdot\big(|\vc| \circ \vw^2\big).
\end{equation}

Recalling that $\bm{\theta} = (\vc, \vw, \vb)$, from Eq.~\eqref{eq:bound_c}, Eq.~\eqref{eq:bound_w}, and Eq.~\eqref{eq:bound_b}, we have
\begin{equation} \label{eq:bound_origin}
    \Big|\big(\nabla_{\bm{\theta}}\mathcal{L}_{\mathcal{F}}\big)^\top\Big|=\bigg\lvert \Big(\frac{\partial \mathcal{L}_{\mathcal{F}}}{\partial \bm{\theta}}\Big)^\top \bigg\rvert =  \mathcal{O}\big(|\vc|^\top\vw^2 + a^2\big)\cdot\big(\vw^2, |\vc| \circ |\vw|\circ( |\vw| + \bm{1} ), |\vc| \circ \vw^2\big).
\end{equation}

In contrast, for the transformed PDE,  we first derive the derivatives of the ansatz (i.e., $\hat{u}=\vc^\top \sigma(\vw x+\vb), \hat{p}=\vc_p^\top \sigma(\vw x+\vb)$)
\begin{subequations}
\begin{align}
    \frac{\diff{\hat{u}}}{\diff{x}} &= \vc^\top \big[(\bm{1}-\sigma^2(\vw x+\vb))\circ\vw\big],\\
    \frac{\diff{\hat{p}}}{\diff{x}} &= \vc_p^\top \big[(\bm{1}-\sigma^2(\vw x+\vb))\circ\vw\big].
\end{align}
\end{subequations}
We again abbreviate $\sigma(\vw x+\vb)$ as $\vg$ to obtain
\begin{subequations}
\begin{align}
    \frac{\diff{\hat{u}}}{\diff{x}} &= \vc^\top \big[(\bm{1}-\vg^2)\circ\vw\big],\\
    \frac{\diff{\hat{p}}}{\diff{x}} &= \vc_p^\top \big[(\bm{1}-\vg^2)\circ\vw\big].
\end{align}
\end{subequations}

We now compute a bound for $(\partial \tilde{\mathcal{L}}_{\mathcal{F}} / \partial \vc)^\top$
\begin{equation}
\begin{aligned}
\bigg\lvert \Big(\frac{\partial \tilde{\mathcal{L}}_{\mathcal{F}}}{\partial \vc}\Big)^\top \bigg\rvert &= \frac{1}{2\pi}\Bigg\lvert \int_0^{2\pi}2\Big(\hat{p} - \frac{\diff{\hat{u}}}{\diff{x}}\Big) \cdot \bigg(\partial \Big( \frac{\diff{\hat{u}}}{\diff{x}} \Big) \Big/ \partial \vc \bigg) \diff{x}\Bigg\rvert\\
&= \frac{1}{\pi}\Bigg\lvert\int_0^{2\pi}\Big(\vc_p^\top \vg - \vc^\top \big[(\bm{1}-\vg^2)\circ\vw\big] \Big) \cdot \big[(\bm{1}-\vg^2)\circ\vw\big] \diff{x}\Bigg\rvert\\
&\le \frac{1}{\pi} \Bigg(\int_0^{2\pi}\Big(\vc_p^\top \vg - \vc^\top \big[(\bm{1}-\vg^2)\circ\vw\big]\Big)^2\diff{x} \Bigg)^{\frac{1}{2}}\cdot \Bigg(\int_0^{2\pi}\big[(\bm{1}-\vg^2)\circ\vw\big]^2\diff{x} \Bigg)^{\frac{1}{2}}\\
&\le \frac{1}{\pi} \Bigg(\int_0^{2\pi}\big(\lVert \vc_p \rVert_1 + |\vc|^\top |\vw|\big)^2\diff{x} \Bigg)^{\frac{1}{2}}\cdot \Bigg(\int_0^{2\pi}\vw^2\diff{x} \Bigg)^{\frac{1}{2}}\\
&= 2 \big(\lVert \vc_p \rVert_1 + |\vc|^\top |\vw|\big) |\vw|,
\end{aligned}
\end{equation}
Thus, the bound for $(\partial \tilde{\mathcal{L}}_{\mathcal{F}} / \partial \vc)^\top$ is given by 
\begin{equation} \label{eq:bound_c_transformed}
    \bigg\lvert \Big(\frac{\partial \tilde{\mathcal{L}}_{\mathcal{F}}}{\partial \vc}\Big)^\top \bigg\rvert =  \mathcal{O}\big(\lVert \vc_p \rVert_1 + |\vc|^\top |\vw|\big)\cdot|\vw|.
\end{equation}

As for $(\partial \tilde{\mathcal{L}}_{\mathcal{F}} / \partial \vw)^\top$, we have
\begin{equation}
\begin{aligned}
&\bigg\lvert \Big(\frac{\partial \tilde{\mathcal{L}}_{\mathcal{F}}}{\partial \vw}\Big)^\top \bigg\rvert\\
=& \frac{1}{2\pi}\Bigg\lvert \int_0^{2\pi}2\Big(\frac{\diff{\hat{p}}}{\diff{x}} + a^2\sin(ax)\Big) \cdot \bigg(\partial \Big( \frac{\diff{\hat{p}}}{\diff{x}} \Big) \Big/ \partial \vw \bigg) \diff{x} + \int_0^{2\pi}2\Big(\hat{p} - \frac{\diff{\hat{u}}}{\diff{x}}\Big) \cdot \bigg(\frac{\partial \hat{p}}{\partial \vw} - \partial\Big( \frac{\diff{\hat{u}}}{\diff{x}} \Big) \Big/ \partial \vw \bigg) \diff{x}\Bigg\rvert\\
=& \frac{1}{\pi}\Bigg\lvert \int_0^{2\pi}\Big(\vc_p^\top \big[(\bm{1}-\vg^2)\circ\vw\big] + a^2\sin(ax)\Big)\cdot \big[ \vc_p\circ(\bm{1}-\vg^2)\circ(\bm{1}-2x\vg\circ\vw) \big]\diff{x}\\
& + \int_0^{2\pi}\Big(\vc_p^\top \vg - \vc^\top \big[(\bm{1}-\vg^2)\circ\vw\big]\Big) \cdot \big[x\vc_p\circ(\bm{1}-\vg^2) - \vc\circ(\bm{1}-\vg^2)\circ(\bm{1}-2x\vg\circ\vw) \big] \diff{x}\Bigg\rvert\\
\le& \frac{1}{\pi} \Bigg(\bigg(\int_0^{2\pi}\Big(\vc_p^\top \big[(\bm{1}-\vg^2)\circ\vw\big] + a^2\sin(ax)\Big)^2\diff{x} \bigg)^{\frac{1}{2}}\\
&\qquad\qquad\cdot \bigg(\int_0^{2\pi}\big[ \vc_p\circ(\bm{1}-\vg^2)\circ(\bm{1}-2x\vg\circ\vw)\big]^2\diff{x} \bigg)^{\frac{1}{2}}\\
&\qquad+\bigg(\int_0^{2\pi}\Big(\vc_p^\top \vg - \vc^\top \big[(\bm{1}-\vg^2)\circ\vw\big]\Big)^2\diff{x} \bigg)^{\frac{1}{2}}\\
&\qquad\qquad\cdot \bigg(\int_0^{2\pi}\big[x\vc_p\circ(\bm{1}-\vg^2) - \vc\circ(\bm{1}-\vg^2)\circ(\bm{1}-2x\vg\circ\vw)\big]^2\diff{x} \bigg)^{\frac{1}{2}}\Bigg)\\
&\le 4 \Bigg( \bigg(\int_0^{2\pi}\big(|\vc_p|^\top|\vw| + a^2\big)^2\diff{x} \bigg)^{\frac{1}{2}}\cdot \bigg(\int_0^{2\pi}\vc_p^2\circ(\bm{1}+|\vw|)^2\diff{x} \bigg)^{\frac{1}{2}}\\
&\qquad+\bigg(\int_0^{2\pi}\big(\lVert\vc_p\rVert_1+|\vc|^\top|\vw|)^2\diff{x} \bigg)^{\frac{1}{2}}\cdot \bigg(\int_0^{2\pi}\big[ |\vc_p| + |\vc|\circ(\bm{1}+|\vw|) \big]^2\diff{x} \bigg)^{\frac{1}{2}} \Bigg)\\
&= 8\pi \Big(\big(|\vc_p|^\top|\vw| + a^2\big) \cdot \big[|\vc_p| \circ (\bm{1}+|\vw|)\big] + \big(\lVert\vc_p\rVert_1+|\vc|^\top|\vw|\big) \cdot  \big[|\vc_p| + |\vc|\circ(\bm{1}+|\vw|)\big]\Big)\\
&\le 40\pi \big(\lVert\vc_p\rVert_1 + \max(|\vc|,|\vc_p|)^\top|\vw| + a^2\big) \cdot \big[ \max(|\vc|,|\vc_p|)\circ\max(|\vw|,\bm{1}) \big].
\end{aligned}
\end{equation}
Thus, we can bound $(\partial \tilde{\mathcal{L}}_{\mathcal{F}} / \partial \vw)^\top$ by 
\begin{equation}
\begin{split}
\label{eq:bound_w_transformed}
    \bigg\lvert \Big(\frac{\partial \tilde{\mathcal{L}}_{\mathcal{F}}}{\partial \vw}\Big)^\top \bigg\rvert 
    &\le 40\pi \big(\lVert\vc_p\rVert_1 + \max(|\vc|,|\vc_p|)^\top|\vw| + a^2\big) \cdot \big[ \max(|\vc|,|\vc_p|)\circ\max(|\vw|,\bm{1}) \big]\\
    &=  \mathcal{O}\big(\lVert\vc_p\rVert_1 + \max(|\vc|,|\vc_p|)^\top|\vw| + a^2\big)\cdot\big[ \max(|\vc|,|\vc_p|)\circ\max(|\vw|,\bm{1}) \big].
\end{split}
\end{equation}

And for $(\partial \tilde{\mathcal{L}}_{\mathcal{F}} / \partial \vb)^\top$, we have
\begin{equation}
\begin{aligned}
& \bigg\lvert \Big(\frac{\partial \tilde{\mathcal{L}}_{\mathcal{F}}}{\partial \vb}\Big)^\top \bigg\rvert\\
=& \frac{1}{2\pi}\Bigg\lvert \int_0^{2\pi}2\Big(\frac{\diff{\hat{p}}}{\diff{x}} + a^2\sin(ax)\Big) \cdot \bigg(\partial \Big( \frac{\diff{\hat{p}}}{\diff{x}} \Big) \Big/ \partial \vb \bigg) \diff{x}\\
&\qquad + \int_0^{2\pi}2\Big(\hat{p} - \frac{\diff{\hat{u}}}{\diff{x}}\Big) \cdot \bigg(\frac{\partial \hat{p}}{\partial \vb} - \partial\Big( \frac{\diff{\hat{u}}}{\diff{x}} \Big) \Big/ \partial \vb \bigg) \diff{x}\Bigg\rvert\\
=& \frac{1}{\pi}\Bigg\lvert \int_0^{2\pi}\Big(\vc_p^\top \big[(\bm{1}-\vg^2)\circ\vw\big] + a^2\sin(ax)\Big)\cdot \big[ -2\vc_p\circ(\bm{1}-\vg^2)\circ\vg\circ\vw \big]\diff{x}\\
&+ \int_0^{2\pi}\Big(\vc_p^\top \vg - \vc^\top \big[(\bm{1}-\vg^2)\circ\vw\big]\Big)\cdot \big[\vc_p\circ(\bm{1}-\vg^2) + 2\vc\circ(\bm{1}-\vg^2)\circ\vg\circ\vw \big] \diff{x}\Bigg\rvert\\
\le& \frac{1}{\pi} \Bigg(\bigg(\int_0^{2\pi}\Big(\vc_p^\top \big[(\bm{1}-\vg^2)\circ\vw\big] + a^2\sin(ax)\Big)^2\diff{x} \bigg)^{\frac{1}{2}}\\
&\qquad\qquad\cdot \bigg(\int_0^{2\pi} \big[ -2\vc_p\circ(\bm{1}-\vg^2)\circ\vg\circ\vw \big]^2\diff{x} \bigg)^{\frac{1}{2}}\\
&+\bigg(\int_0^{2\pi}\Big(\vc_p^\top \vg - \vc^\top \big[(\bm{1}-\vg^2)\circ\vw\big]\Big)^2\diff{x} \bigg)^{\frac{1}{2}}\\
&\qquad\qquad\cdot \bigg(\int_0^{2\pi}\big[\vc_p\circ(\bm{1}-\vg^2) + 2\vc\circ(\bm{1}-\vg^2)\circ\vg\circ\vw]^2\diff{x} \bigg)^{\frac{1}{2}}\Bigg)\\
\le& \frac{2}{\pi} \Bigg( \bigg(\int_0^{2\pi}\big(|\vc_p|^\top|\vw| + a^2\big)^2\diff{x} \bigg)^{\frac{1}{2}}\cdot \bigg(\int_0^{2\pi}\vc_p^2\circ\vw^2\diff{x} \bigg)^{\frac{1}{2}}\\
&+\bigg(\int_0^{2\pi}\big(\lVert\vc_p\rVert_1+|\vc|^\top|\vw|)^2\diff{x} \bigg)^{\frac{1}{2}}\cdot \bigg(\int_0^{2\pi}\big[ |\vc_p| + |\vc|\circ|\vw| \big]^2\diff{x} \bigg)^{\frac{1}{2}} \Bigg)\\
=& 4 \Big(\big(|\vc_p|^\top|\vw| + a^2\big) \cdot \big[|\vc_p| \circ |\vw|\big]+ \big(\lVert\vc_p\rVert_1+|\vc|^\top|\vw|\big) \cdot  \big[|\vc_p| + |\vc|\circ|\vw|\big]\Big)\\
\le& 12 \big(\lVert\vc_p\rVert_1 + \max(|\vc|,|\vc_p|)^\top|\vw| + a^2\big) \cdot \big[ \max(|\vc|,|\vc_p|)\circ\max(|\vw|,\bm{1}) \big].
\end{aligned}
\end{equation}
Thus, we can bound $(\partial \tilde{\mathcal{L}}_{\mathcal{F}} / \partial \vb)^\top$ by 
\begin{equation}
\begin{split}
\label{eq:bound_b_transformed}
    \bigg\lvert \Big(\frac{\partial \tilde{\mathcal{L}}_{\mathcal{F}}}{\partial \vb}\Big)^\top \bigg\rvert 
    \le& 12 \big(\lVert\vc_p\rVert_1 + \max(|\vc|,|\vc_p|)^\top|\vw| + a^2\big) \cdot \big[ \max(|\vc|,|\vc_p|)\circ\max(|\vw|,\bm{1}) \big]\\
    = & \mathcal{O}\big(\lVert\vc_p\rVert_1 + \max(|\vc|,|\vc_p|)^\top|\vw| + a^2\big) \cdot \big[ \max(|\vc|,|\vc_p|)\circ\max(|\vw|,\bm{1}) \big].
\end{split}
\end{equation}

For $(\partial \tilde{\mathcal{L}}_{\mathcal{F}} / \partial \vc_p)^\top$, we have
\begin{equation}
\begin{aligned}
\bigg\lvert &\Big(\frac{\partial \tilde{\mathcal{L}}_{\mathcal{F}}}{\partial \vc_p}\Big)^\top \bigg\rvert \\ 
=& \frac{1}{2\pi}\Bigg\lvert \int_0^{2\pi}2\Big(\frac{\diff{\hat{p}}}{\diff{x}} + a^2\sin(ax)\Big) \cdot \bigg(\partial \Big( \frac{\diff{\hat{p}}}{\diff{x}} \Big) \Big/ \partial \vc_p \bigg) \diff{x}
+ \int_0^{2\pi}2\Big(\hat{p} - \frac{\diff{\hat{u}}}{\diff{x}}\Big) \cdot \bigg(\frac{\partial \hat{p}}{\partial \vc_p} \bigg) \diff{x}\Bigg\rvert\\
=& \frac{1}{\pi}\Bigg\lvert \int_0^{2\pi}\Big(\vc_p^\top \big[(\bm{1}-\vg^2)\circ\vw\big] + a^2\sin(ax)\Big)\cdot \big[ (\bm{1}-\vg^2)\circ\vw \big]\diff{x}\\
&\qquad + \int_0^{2\pi}\Big(\vc_p^\top \vg - \vc^\top \big[(\bm{1}-\vg^2)\circ\vw\big]\Big)\cdot\vg \diff{x}\Bigg\rvert\\
\le& \frac{1}{\pi} \Bigg(\bigg(\int_0^{2\pi}\Big(\vc_p^\top \big[(\bm{1}-\vg^2)\circ\vw\big] + a^2\sin(ax)\Big)^2\diff{x} \bigg)^{\frac{1}{2}} 
\cdot \bigg(\int_0^{2\pi}\big[(\bm{1}-\vg^2)\circ\vw)\big]^2\diff{x} \bigg)^{\frac{1}{2}}\\
&\qquad+\bigg(\int_0^{2\pi}\Big(\vc_p^\top \vg - \vc^\top \big[(\bm{1}-\vg^2)\circ\vw\big]\Big)^2\diff{x} \bigg)^{\frac{1}{2}}\cdot \bigg(\int_0^{2\pi}\vg^2\diff{x} \bigg)^{\frac{1}{2}}\Bigg)\\
\le& \frac{1}{\pi} \Bigg( \bigg(\int_0^{2\pi}\big(|\vc_p|^\top|\vw| + a^2\big)^2\diff{x} \bigg)^{\frac{1}{2}}\cdot \bigg(\int_0^{2\pi}\vw^2\diff{x} \bigg)^{\frac{1}{2}}\\
&\qquad+\bigg(\int_0^{2\pi}\big(\lVert\vc_p\rVert_1+|\vc|^\top|\vw|)^2\diff{x} \bigg)^{\frac{1}{2}}\cdot \bigg(\int_0^{2\pi}\bm{1}\diff{x} \bigg)^{\frac{1}{2}} \Bigg)\\
=& 2 \Big(\big(|\vc_p|^\top|\vw| + a^2\big) \cdot |\vw| + \big(\lVert\vc_p\rVert_1+|\vc|^\top|\vw|\big) \cdot  \bm{1}\Big)\\
\le& 4 \big(\lVert\vc_p\rVert_1 + \max(|\vc|,|\vc_p|)^\top|\vw| + a^2\big) \cdot \max(|\vw|,\bm{1}).
\end{aligned}
\end{equation}
Thus, we can bound $(\partial \tilde{\mathcal{L}}_{\mathcal{F}} / \partial \vc_p)^\top$ by 
\begin{equation}
\begin{split}
\label{eq:bound_c_p_transformed}
    \left\lvert \Big(\frac{\partial \tilde{\mathcal{L}}_{\mathcal{F}}}{\partial \vc_p}\Big)^\top \right\rvert
    \le& 4 \left(\lVert\vc_p\rVert_1 + \max(|\vc|,|\vc_p|)^\top|\vw| + a^2\right) \cdot \max(|\vw|,\bm{1})\\
    = & \mathcal{O}\big(\lVert\vc_p\rVert_1 + \max(|\vc|,|\vc_p|)^\top|\vw| + a^2\big) \cdot \max(|\vw|,\bm{1}).
\end{split}
\end{equation}

Recalling that $\tilde{\bm{\theta}} = (\vc, \vw, \vb, \vc_p)$, from Eq.~\eqref{eq:bound_c_transformed}, Eq.~\eqref{eq:bound_w_transformed}, Eq.~\eqref{eq:bound_b_transformed}, and Eq.~\eqref{eq:bound_c_p_transformed}, noting that $\lVert\vc_p\rVert_1 + |\vc|^\top|\vw|\le \lVert\vc_p\rVert_1 + \max(|\vc|,|\vc_p|)^\top|\vw| + a^2$, we have
\begin{equation}\label{eq:bound_transformed}
\begin{aligned}
    \Big|\big({\nabla_{\tilde{\bm{\theta}}}\tilde{\mathcal{L}}_{\mathcal{F}}}\big)^\top\Big|&=\bigg\lvert \Big(\frac{\partial \tilde{\mathcal{L}}_{\mathcal{F}}}{\partial \tilde{\bm{\theta}}}\Big)^\top \bigg\rvert =  \mathcal{O}\big(\lVert\vc_p\rVert_1 + \max(|\vc|,|\vc_p|)^\top|\vw| + a^2\big)\cdot \big(
    |\vw|, \\
    &\max(|\vc|,|\vc_p|)\circ\max(|\vw|, \bm{1}),\max(|\vc|,|\vc_p|)\circ\max(|\vw|,\bm{1}), \max(|\vw|,\bm{1})
    \big).
\end{aligned}
\end{equation}

\subsubsection{Analysis via the Condition Number}\label{sec_a7_3}
In addition to providing bounds for the gradients of $\mathcal{L}_{\mathcal{F}}$ and $\tilde{\mathcal{L}}_{\mathcal{F}}$, we can also use the condition number to characterize the sensitivity of their gradients with respect to the parameters of the neural network. Let $\bm{\theta}^{(t)}=(\vc^{(t)},\vw^{(t)},\vb^{(t)})$ and $\tilde{\bm{\theta}}^{(t)}=(\vc^{(t)},\vw^{(t)},\vb^{(t)},\vc_p^{(t)})$ are the parameters of the neural network in the $t$th step (before and after the reformulation). For simplicity, we introduce the following notations
\begin{subequations}
\begin{align}
    \Delta \bm{\theta} &= \bm{\theta}^{(t+1)} - \bm{\theta}^{(t)},\\
    \Delta \tilde{\bm{\theta}} &= \tilde{\bm{\theta}}^{(t+1)} - \tilde{\bm{\theta}}^{(t)},\\
    \Delta \mathcal{L}_{\mathcal{F}} &= \mathcal{L}_{\mathcal{F}}(\bm{\theta}^{(t+1)}) - \mathcal{L}_{\mathcal{F}}(\bm{\theta}^{(t)}),\\
    \Delta \tilde{\mathcal{L}}_{\mathcal{F}} &= \tilde{\mathcal{L}}_{\mathcal{F}}(\tilde{\bm{\theta}}^{(t+1)}) - \tilde{\mathcal{L}}_{\mathcal{F}}(\tilde{\bm{\theta}}^{(t)}).
\end{align}
\end{subequations}
The condition numbers of $\mathcal{L}_{\mathcal{F}}$ and $\tilde{\mathcal{L}}_{\mathcal{F}}$ are defined as
\begin{equation}
    \mathrm{cond} = \frac{|\Delta \mathcal{L}_{\mathcal{F}}|}{ \lVert\Delta \bm{\theta}\rVert_2},\quad 
    \tilde{\mathrm{cond}} = \frac{|\Delta \tilde{\mathcal{L}}_{\mathcal{F}}|}{ \lVert\Delta \tilde{\bm{\theta}}\rVert_2}.
\end{equation}
Next we derive the bounds for $\mathrm{cond}$ and $\tilde{\mathrm{cond}}$, respectively. We first consider $\mathrm{cond}$
\begin{equation}
\begin{aligned}
     \mathrm{cond} &= \frac{|\Delta \mathcal{L}_{\mathcal{F}}|}{ \lVert\Delta \bm{\theta}\rVert_2}
     \approx\bigg\lvert\Big(\frac{\partial \mathcal{L}_{\mathcal{F}}}{\partial \bm{\theta}}\Big)^\top \cdot \Delta\bm{\theta} \bigg\rvert \bigg / \lVert\Delta \bm{\theta}\rVert_2
     \le \bigg\lVert\Big(\frac{\partial \mathcal{L}_{\mathcal{F}}}{\partial \bm{\theta}}\Big)^\top \bigg\rVert_2\\
     &= \mathcal{O}\Big(\big(|\vc|^\top\vw^2 + a^2\big)\cdot\big\lVert\vw^2, |\vc| \circ |\vw|\circ( |\vw| + \bm{1} ), |\vc| \circ \vw^2\big\rVert_2\Big)\quad\text{(Eq.~\eqref{eq:bound_origin})}\\
     &=\mathcal{O}\Big(\big(|\vc|^\top\vw^2 + a^2\big)\cdot\big\lVert\vw^2, |\vc| \circ |\vw|\circ( |\vw| + \bm{1} ), |\vc| \circ \vw^2\big\rVert_1\Big)\quad\text{(Equivalence of norms)}\\
     &= \mathcal{O}\Big(\big(|\vc|^\top\vw^2 + a^2\big)\cdot\big( \lVert\vw^2\rVert_1 + \lVert|\vc| \circ |\vw|\circ( |\vw| + \bm{1} )\rVert_1 + \lVert|\vc| \circ \vw^2\rVert_1\big)  \Big)\\
     &= \mathcal{O}\Big(\big(|\vc|^\top\vw^2 + a^2\big)\cdot\big\lVert \max(|\vc|,\bm{1})\circ|\vw|\circ\max(|\vw|,\bm{1}) \big\rVert_1  \Big).
\end{aligned}
\label{eq:cond_before}
\end{equation}
Similarly, for $\tilde{\mathrm{cond}}$, we have
\begin{equation}
\begin{aligned}
     \tilde{\mathrm{cond}} &= \frac{|\Delta \tilde{\mathcal{L}}_{\mathcal{F}}|}{ \lVert\Delta \tilde{\bm{\theta}}\rVert_2} 
     \approx\bigg\lvert\Big(\frac{\partial \tilde{\mathcal{L}}_{\mathcal{F}}}{\partial \tilde{\bm{\theta}}}\Big)^\top \cdot \Delta\tilde{\bm{\theta}} \bigg\rvert \bigg / \lVert\Delta \tilde{\bm{\theta}}\rVert_2
     \le \bigg\lVert\Big(\frac{\partial \tilde{\mathcal{L}}_{\mathcal{F}}}{\partial \tilde{\bm{\theta}}}\Big)^\top \bigg\rVert_2\\
     &= \mathcal{O}\Big(\big(\lVert\vc_p\rVert_1 + \max(|\vc|,|\vc_p|)^\top|\vw| + a^2\big)\cdot \big\lVert
    |\vw|, \max(|\vc|,|\vc_p|)\circ\max(|\vw|, \bm{1}),\\
    &\qquad\max(|\vc|,|\vc_p|)\circ\max(|\vw|,\bm{1}), \max(|\vw|,\bm{1})
    \big\rVert_2\Big)\qquad\text{(Eq.~\eqref{eq:bound_transformed})}\\
     &=\mathcal{O}\Big(\big(\lVert\vc_p\rVert_1 + \max(|\vc|,|\vc_p|)^\top|\vw| + a^2\big)\cdot \big\lVert
    |\vw|, \max(|\vc|,|\vc_p|)\circ\max(|\vw|, \bm{1}),\\
    &\qquad\max(|\vc|,|\vc_p|)\circ\max(|\vw|,\bm{1}), \max(|\vw|,\bm{1})
    \big\rVert_1\Big)\qquad\text{(Equivalence of norms)}\\
     &=\mathcal{O}\Big(\big(\lVert\vc_p\rVert_1 + \max(|\vc|,|\vc_p|)^\top|\vw| + a^2\big)\cdot \big(
    \lVert\vw\rVert_1\\
    &\qquad+ \lVert\max(|\vc|,|\vc_p|)\circ\max(|\vw|, \bm{1})\rVert_1+\lVert\max(|\vc|,|\vc_p|)\circ\max(|\vw|,\bm{1})\rVert_1\\
    &\qquad+\lVert\max(|\vw|,\bm{1})
    \rVert_1\big)\Big)\\
     &= \mathcal{O}\Big(\big(\lVert\vc_p\rVert_1 + \max(|\vc|,|\vc_p|)^\top|\vw| + a^2\big)\cdot\big\lVert \max(|\vc|,|\vc_p|,\bm{1})\circ\max(|\vw|,\bm{1}) \big\rVert_1  \Big).
\end{aligned}
\label{eq:cond_after}
\end{equation}

From Eq.~\eqref{eq:cond_before} and Eq.~\eqref{eq:cond_after}, we find that for the original PDEs, the condition number has a higher order relationship with respect to $\bm{\theta}$. If $\bm{\theta}$ is large, the condition number can be very large, leading to oscillations in training. However, if $\bm{\theta}$ is small, the condition number will also be very small, resulting in smaller changes of $\bm{\theta}$ between adjacent iterations and therefore slower convergence. In contrast, after the reformulation, the condition number has a lower order relationship with respect to $\tilde{\bm{\theta}}$, which keeps the condition number more stable during training and alleviates this problem. 

\subsection{Experimental details}\label{sec_a8}
In the following, we will briefly introduce some essential details of our experiments, including the experimental environment, hyper-parameters, construction of the ansatz, and details of the governing PDEs in each experiment. We first introduce the experimental environment, while other details are put into the subsections corresponding to the experiments.

\paragraph{Experimental environment}
We use PyTorch \citep{pytorch2019} as our deep learning library. And our codes for the physics-informed learning are based on DeepXDE \citep{lu2021deepxde}. We train all the models except domain decomposition based baselines (i.e., xPINN, FBPINN, and PFNN-2) on one NVIDIA TITAN Xp 12GB GPU, while the other three are trained on eight NVIDIA GeForce RTX 3090 24GB GPUs (since domain decomposition based models consist of several sub-networks and require more memory to be stored). The operating system is Ubuntu 18.04.5 LTS. If the analytical solution is unavailable, the ground truth solutions to the PDEs (i.e., the testing data) will be generated by COMSOL Multiphysics, a FEM commercial software. And we have put the generated testing data into the zip file.

\subsubsection{Simulation of a 2D battery pack (Heat Equation)}\label{sec_a8_1}
\paragraph{Governing PDEs}
The governing PDEs (along with boundary/initial conditions) are given by
\begin{subequations}
\label{eq:pde_battery}
\begin{align}
    \frac{\partial T}{\partial t}&=k\Delta T(\vx,t),&&\boldsymbol x\in \Omega,t\in(0,1], \\
    k\big(\vn(\vx) \cdot \nabla T(\vx,t)\big) &=h\big(T_a - T(\vx,t)\big),&&\boldsymbol x\in \gamma_{\mathrm{outer}},t\in(0,1], \\
    k\big(\vn(\vx) \cdot \nabla T(\vx,t)\big) &=h\big(T_c - T(\vx,t)\big),&&\boldsymbol x\in \gamma_{\mathrm{cell},i},t\in(0,1],&i=1,\dots,n_c, \\
    k\big(\vn(\vx) \cdot \nabla T(\vx,t)\big) &=h\big(T_w - T(\vx,t)\big),&&\boldsymbol x\in \gamma_{\mathrm{pipe},i},t\in(0,1],&i=1,\dots,n_w, \\
    T(\boldsymbol x,0)&=T_0,&&\boldsymbol x\in \Omega, 
\end{align}
\end{subequations}
where $\vx=(x_1,x_2)$, $t$ are the spatial and temporal coordinates, respectively, $T(\vx,t)$ is the temperature over time, $k=1$ is the thermal conductivity, $\Delta T=\partial^2 T / \partial x_1^2 + \partial^2 T / \partial x_2^2$, $h=1$ is the heat transfer coefficient, $\nabla T=(\partial T / \partial x_1, \partial T / \partial x_2)$, $T_a=0.1$, $T_c=5$, $T_w=1$ are, respectively, the temperature of the air, the cells ($n_c=11$ cells of radius $r_c=1$), the cooling pipes ($n_w=6$ pipes of radius $r_w=0.4$), $T_0=0.1$ is the initial temperature, and the geometry (i.e., $\Omega$, $\gamma_{\mathrm{outer}}$, etc) is shown in Figure~3(a).

And the reformulated PDEs are (which is used by the proposed model, HC)
\begin{subequations}
\begin{align}
    \frac{\partial T}{\partial t}&=k\big(\nabla \cdot \vp (\vx, t)\big),&&\boldsymbol x\in \Omega,t\in(0,1], \\
    \vp(\vx, t) &= \nabla T,&&\boldsymbol x\in \Omega\cup\partial\Omega,t\in(0,1],\\
    k\big(\vn(\vx)\cdot\vp(\vx, t)\big) &=h\big(T_a - T(\vx,t)\big),&&\boldsymbol x\in \gamma_{\mathrm{outer}},t\in(0,1],\label{eq:robin_1}\\
    k\big(\vn(\vx)\cdot\vp(\vx,t)\big) &=h\big(T_c - T(\vx,t)\big),&&\boldsymbol x\in \gamma_{\mathrm{cell},i},t\in(0,1],&i=1,\dots,n_c, \label{eq:robin_2}\\
    k\big(\vn(\vx)\cdot\vp(\vx,t)\big) &=h\big(T_w - T(\vx,t)\big),&&\boldsymbol x\in \gamma_{\mathrm{pipe},i},t\in(0,1],&i=1,\dots,n_w, \label{eq:robin_3}\\
    T(\boldsymbol x,0)&=T_0,&&\boldsymbol x\in\Omega, 
\end{align}
\end{subequations}
where $\vp(\vx,t)$ is the introduced extra field.

\paragraph{Construction of the Ansatz}
Since the solution is a scalar function (i.e., $T(\vx,t)$), we directly denote the solution by $u(\vx, t)=T(\vx,t)$. Let $\tilde{\vp}$ denote $(u, \vp)$. We first derive the general solutions at $\gamma_{\mathrm{outer}}, \gamma_{\mathrm{cell},i}, \gamma_{\mathrm{pipe},i}$, respectively.

For $\vx\in \gamma_{\mathrm{outer}}$, we have $a(\vx) = h$, $b(\vx)=k$, $g(\vx)=hT_a$. According to Eq.~(10), the general solution $\tilde{\vp}^{\gamma_{\mathrm{outer}}}$ is given by
\begin{equation}
    \tilde{\vp}^{\gamma_{\mathrm{outer}}}=\bm{B}(\vx)\mathrm{NN}^{\gamma_{\mathrm{outer}}}(\vx, t) + \frac{(h,k\bm{n})}{\sqrt{h^2 + k^2}}\frac{hT_a}{\sqrt{h^2 + k^2}},
\end{equation}
where $\bm{B}(\vx)$ is computed in Eq.~\eqref{eq:accept_example} (with $\tilde{\vn} = (\tilde{n}_1,\tilde{n}_2,\tilde{n}_3)=(h,k\bm{n})/{\sqrt{h^2 + k^2}}$). And for $\vx \in \gamma_{\mathrm{cell},i}$ and $\gamma_{\mathrm{pipe},i}$, the derivation is similar, where we only need to change $T_a$ to $T_c$ and $T_w$, respectively.

Then, we gather all the general solutions computed to form our ansatz $(\hat{u}, \hat{\vp})$ according to Eq.~\eqref{eq_time_ansatz}, where $\{ \gamma_{\mathrm{outer}}, \gamma_{\mathrm{cell},1},\dots,\gamma_{\mathrm{cell},n_c},\gamma_{\mathrm{pipe},1},\dots,\gamma_{\mathrm{pipe},n_w} \}$ are reordered as $\{ \gamma_{i} \}_{i=1}^{1+n_c+n_w}$ and $f(\vx) = T_0$.

\paragraph{Choices of Extended Distance Functions}
For $\gamma_{\mathrm{cell},i}$ and $\gamma_{\mathrm{pipe},i}$, since they are 2D circles, we can directly choose the extended distance functions $l^{\gamma_{\mathrm{cell},i}}(\vx)$ and $l^{\gamma_{\mathrm{pipe},i}}(\vx)$ as the distance between $\vx$ and the center minus the radius. For the rectangular $\gamma_{\mathrm{outer}}$, supposing that it is given by $[a_1, a_2]\times[b_1,b_2]$, we construct the extended distance function $l^{\gamma_{\mathrm{outer}}}$ as follows
\begin{equation}\label{eq_ext_dist_rec}
    l^{\gamma_{\mathrm{outer}}}(\vx) = \mathrm{SoftMin} (x_1 - a_1, a_2 - x_1, x_2 - b_1, b_2 - x_2),
\end{equation}
where $\vx=(x_1,x_2)$, $\mathrm{SoftMin}$ is a differentiable version of min function which is implemented by \texttt{LogSumExp} in PyTorch, i.e., $\mathrm{SoftMin}(\bm{y}) = \mathrm{LogSumExp}(-\beta \bm{y}) / (-\beta)$, $\beta=4$. And the extended function $l^{\partial \Omega}(\vx)$ is computed by taking the $\mathrm{SoftMin}$ of the distances to all the boundaries
\begin{equation}
    l^{\partial \Omega}(\vx) = \mathrm{SoftMin}(l^{\gamma_{\mathrm{outer}}}(\vx), l^{\gamma_{\mathrm{cell},1}}(\vx),\dots,l^{\gamma_{\mathrm{cell},n_c}}(\vx),l^{\gamma_{\mathrm{pipe},1}}(\vx),\dots,l^{\gamma_{\mathrm{pipe},n_w}}(\vx)).
\end{equation}

\paragraph{Implementation} 
All the models are trained for 5000 Adam iterations (with a learning rate scheduler of \texttt{ReduceLROnPlateau} from PyTorch and an initial learning rate of 0.01), followed by a L-BFGS optimization until convergence. Unless otherwise specified, the mean squared error (MSE) is used for the loss function and $\tanh$ is used for the activation function. And the hyper-parameters of each model are listed as follow
\begin{itemize}
    \item \textbf{HC}: The main neural network is a multilayer perceptron (MLP) of size $[3] + 4\times[50] + [3]$ (which means 3 inputs, 4 hidden layers of width 50, and 3 outputs). The sub-networks (corresponding to Eq.~\eqref{eq:robin_1}, Eq.~\eqref{eq:robin_2}, and Eq.~\eqref{eq:robin_3}) are all MLPs of size $[3] + 3\times[20] + [3]$. And the hyper-parameters of ``hardness'' are $\beta_s=5$ and $\beta_t=10$.
    \item \textbf{PINN}: The ansatz is an MLP of size $[3] + 4\times[50] + [1]$.
    \item \textbf{PINN-LA}: The weights of the loss terms corresponding to the BCs are approximated by $\hat{\lambda}_i = \max_{\theta_n}\{ |\nabla_\theta \mathcal{L}_r(\theta_n)| \} \Big/ \overline{|\nabla_\theta \lambda_i \mathcal{L}_i(\theta_n)|}$. And the parameter of the moving average is $\alpha=0.1$, which is recommended by the paper \citep{wang2021understanding}. Besides, the parameters of the PINN are the same as above.
    \item \textbf{PINN-LA-2}: In our modified version, we approximate the weights of the loss terms as $\hat{\lambda}_i = \overline{ |\nabla_\theta \mathcal{L}_r(\theta_n)| } \Big/ \overline{|\nabla_\theta \lambda_i \mathcal{L}_i(\theta_n)|}$. And the parameter of the moving average is also $\alpha=0.1$. Here we replace the maximum with the mean to make the weights of the loss terms more stable during the training process.
    \item \textbf{FBPINN}: The domain of the problem is divided into $4\times 6=24$ subdomains by a regular grid. The size of the sub-network, an MLP, corresponding to each subdomain is $[3] + 3\times[30] + [1]$. And the scale factor is $\sigma=0.4$, chosen so that the window function is close to zero outside the overlapping region of the subdomains. 
    \item \textbf{xPINN}: The domain of the problem is divided into $4\times 6=24$ subdomains by a regular grid. The size of the sub-network corresponding to each subdomain is $[3] + 3\times[30] + [1]$. And the loss terms of the interface condition include average solution as well as residual continuity conditions.
    \item \textbf{PFNN}: The PFNN considers the variational formulation of the Eq.~\eqref{eq:pde_battery} (i.e., Ritz formulation), and embed the initial condition (IC) into its ansatz (similar to Eq.~(5)). And the size of the neural network (an MLP) is $[3] + 4\times[50] + [1]$.
    \item \textbf{PFNN-2}: The PFNN-2 replaces a single neural network with a domain decomposition based neural network on the basis of PFNN. In the original literature \citep{sheng2022pfnn}, the domain is decomposed in a hard way (like xPINN). However, in our experiments (see Table~1), we find that the performance of hard decomposition is relatively poor, which is because new loss terms are needed to maintain the continuity of the ansatz at the interfaces between the sub-domains, which further aggravates the unbalanced competition. To overcome this, we instead employ a soft domain decomposition, as in FBPINN. See the parts of PFNN and FBPINN for the values of hyper-parameters.
\end{itemize}

\subsubsection{Simulation of an Airfoil (Navier-Stokes Equations)}\label{sec_a8_2}
\paragraph{Governing PDEs}
The governing PDEs are given by
\begin{subequations}
\begin{align}
    \boldsymbol u(\vx) \cdot \nabla \boldsymbol u(\vx) &= -\nabla p(\vx)  + v\nabla^2\boldsymbol u(\vx) ,&&\boldsymbol x\in \Omega,\\
    \nabla \cdot \boldsymbol u(\vx) &= 0,&&\boldsymbol x\in \Omega,\\
    \boldsymbol u(\vx)&=\boldsymbol{u}_0(\vx),&&\boldsymbol x\in \gamma_{\mathrm{inlet}}\cup\gamma_{\mathrm{top}}\cup\gamma_{\mathrm{bottom}},\\
    p(\vx)&=1,&&\boldsymbol x\in \gamma_{\mathrm{outlet}},\\
    \vn(\vx) \cdot \vu(\vx) &=0,&&\boldsymbol x\in\gamma_{\mathrm{airfoil}},
\end{align}
\end{subequations}
where $\vu(\vx) = (u_1(\vx), u_2(\vx))$, $p(\vx)$, $v=1/50$ are the velocity, pressure, and viscosity of the fluid, respectively, $\vu_0(\vx) = (1, 0)$, and the geometry of the problem (i.e., $\Omega$, $\gamma_{\mathrm{inlet}}$, etc) is shown in Figure~3(b).

And the reformulated PDEs are (which is used by the proposed model, HC)
\begin{subequations}
\begin{align}
    \big(\boldsymbol u(\vx) - v\nabla\big) \cdot \big(\vp_1(\vx), \vp_2(\vx)\big) &= -\nabla p(\vx),&&\boldsymbol x\in \Omega,\\
    \nabla \cdot \boldsymbol u(\vx) &= 0,&&\boldsymbol x\in \Omega,\\
    \vp_1(\vx) &= \nabla u_1,&&\boldsymbol x\in \Omega\cup\partial\Omega,\\
    \vp_2(\vx) &= \nabla u_2,&&\boldsymbol x\in \Omega\cup\partial\Omega,\\
    \boldsymbol u(\vx)&=\boldsymbol{u}_0(\vx),&&\boldsymbol x\in \gamma_{\mathrm{inlet}}\cup\gamma_{\mathrm{top}}\cup\gamma_{\mathrm{bottom}},\\
    p(\vx)&=1,&&\boldsymbol x\in \gamma_{\mathrm{outlet}},\\
    \vn(\vx) \cdot \vu(\vx) &=0,&&\boldsymbol x\in\gamma_{\mathrm{airfoil}}, \label{eq:ns_bc}
\end{align}
\end{subequations}
where $\vp_1(\vx)$ and $\vp_2(\vx)$ are the introduced extra fields.

\paragraph{Construction of the Ansatz}
Here, the solution is $(\vu(\vx), p(\vx))$. For $p(\vx)$, the general solution in $\gamma_{\mathrm{outlet}}$ is exactly $p^{\gamma_{\mathrm{outlet}}}(\vx)=1$. And the ansatz for $p$ is given by
\begin{equation}
    \hat{p} = p^{\gamma_{\mathrm{outlet}}}(\vx) + l^{\gamma_{\mathrm{outlet}}}(\vx) \mathrm{NN}^{\mathrm{main}}_j(\vx)[3],
\end{equation}
where $[3]$ means taking the third elements of the output of $\mathrm{NN}^{\mathrm{main}}_j(\vx)$.

For $\vu(\vx)$, the general solution in $\gamma_{\mathrm{inlet}}\cup\gamma_{\mathrm{top}}\cup \gamma_{\mathrm{bottom}}$ is exactly $\vu^{\gamma_{*}}(\vx)=\vu_0(\vx)$, where we define an alias $\gamma_{*}$ for $\gamma_{\mathrm{inlet}}\cup\gamma_{\mathrm{top}}\cup \gamma_{\mathrm{bottom}}$. In $\gamma_{\mathrm{airfoil}}$, the general solution is given by
\begin{equation}
    \vu^{\gamma_{\mathrm{airfoil}}} = \bm{B}(\vx) \mathrm{NN}^{\gamma_{*}}(\vx),
\end{equation}
where $\bm{B}(\vx) = [n_2(\vx), -n_1(\vx)]^\top$ according to Eq.~\eqref{eq:accept_example_2} and the output of $\mathrm{NN}^{\gamma_{*}}(\vx)$ is a scalar. Gathering $\vu^{\gamma_{*}}$ and $\vu^{\gamma_{\mathrm{airfoil}}}$, we then follow Eq.~(11) to obtain the ansatz for $\hat{\vu}$
\begin{equation}
    \hat{\vu} = l^{\partial \Omega}(\vx) \mathrm{NN}^{\mathrm{main}}_j(\vx)[1:2] + \exp\big[-\alpha_{\gamma_{*}} l^{\gamma_{*}}(\vx)\big]\vu^{\gamma_{*}}(\vx) + \exp\big[-\alpha_{\gamma_{\mathrm{airfoil}}} l^{\gamma_{\mathrm{airfoil}}}(\vx)\big]\vu^{\gamma_{\mathrm{airfoil}}}(\vx),
\end{equation}
where $[1:2]$ means taking the first two elements of the output of $\mathrm{NN}^{\mathrm{main}}_j(\vx)$ and $\alpha_{\gamma_{*}}$ as well as $\alpha_{\gamma_{\mathrm{airfoil}}}$ are similarly defined as in Eq.~(12).

\paragraph{Choices of Extended Distance Functions}
For $l^{\gamma_{\mathrm{airfoil}}}(\vx)$, a direct way is to calculate the distance between $\vx$ and the airfoil $\gamma_{\mathrm{airfoil}}$. However, it may be very time-consuming since the $\gamma_{\mathrm{airfoil}}$ is highly complicated. So we prefer to approximate the true distance with an MLP with 3 hidden layers of width 30. We train the neural network before training our main model with $1024 \times 6$ points sampled in $\Omega$ ($5/6$ of them are sampled in the bounding box of the airfoil, and the rest are sampled in $\Omega$) along with their truth distances (which are computed by using the formula of the distance to a polygon) for $10,000$ Adam epochs (with a learning rate scheduler of \texttt{ReduceLROnPlateau} from PyTorch and an initial learning rate of 0.001). The loss function is a $\ell_1$ loss. A polar coordinate transformation trick is utilized as in Appendix~\ref{sec_a5}. 

And for $\gamma_{\mathrm{outlet}}$, since it is a vertical line, for example, $x_1=a$, we can compute the extended distance function as $l^{\gamma_{\mathrm{outlet}}}(\vx) = a - x_1$, where $\vx=(x_1, x_2)$. And $\gamma_*$ is an open rectangle, so we can compute $l^{\gamma_*}(\vx)$ similarly to the case of the rectangle (see Eq.~\eqref{eq_ext_dist_rec}) while ignoring the right side. Besides, $l^{\partial \Omega}(\vx)$ is still computed by taking the $\mathrm{SoftMin}$ of the distances to all the boundaries.

\paragraph{Implementation} 
All the models are trained for 5000 Adam iterations (with a learning rate scheduler of \texttt{ReduceLROnPlateau} from PyTorch and an initial learning rate of 0.001), followed by a L-BFGS optimization until convergence. And the hyper-parameters of each model are listed as follow
\begin{itemize}
    \item \textbf{HC}: The main neural network is an MLP of size $[2] + 6\times[50] + [7]$. The sub-network (corresponding to Eq.~\eqref{eq:ns_bc}) is an MLP of size $[2] + 4\times[40] + [1]$. And the hyper-parameters of ``hardness'' is $\beta_s=5$.
    \item \textbf{PINN}: The ansatz is an MLP of size $[2] + 6\times[50] + [3]$.
    \item \textbf{PINN-LA}: The parameter of the moving average is $\alpha=0.1$.
    \item \textbf{PINN-LA-2}: The parameter of the moving average is $\alpha=0.1$.
    \item \textbf{FBPINN}: The domain of the problem is divided into $3\times 6=18$ subdomains by a regular grid. The size of the sub-network, an MLP, corresponding to each subdomain is $[2] + 4\times[30] + [3]$. And the scale factor is $\sigma=0.2$, chosen so that the window function is close to zero outside the overlapping region of the subdomains. 
    \item \textbf{xPINN}: The domain of the problem is divided into $3\times 6=18$ subdomains by a regular grid. The size of the sub-network corresponding to each subdomain is $[2] + 4\times[30] + [3]$. And the loss terms of the interface condition include average solution as well as residual continuity conditions.
\end{itemize}

\subsubsection{High-dimensional Heat Equation}\label{sec_a8_3}
\paragraph{Governing PDEs}
The governing PDEs are given by
\begin{subequations}
\begin{align}
    \frac{\partial u}{\partial t}&=k\Delta u(\vx,t)+f(\vx, t),&&\boldsymbol x\in \Omega\subset \mathbb{R}^d,t\in(0,1], \\
    \vn(\vx) \cdot \nabla u(\vx,t) &=g(\vx,t),&&\boldsymbol x\in\partial\Omega,t\in(0,1],\\
    u(\vx,0)&=g(\vx, 0),&&\boldsymbol x\in \Omega,
\end{align}
\end{subequations}
where $u$ is the quantity of interest, $k=1/d$, $f(\vx, t)=-k|\vx|^2\exp{(0.5|\vx|^2+t)}$, $d=10$, $\Omega$ is a unit ball (i.e., $\Omega=\{|\vx|\le 1\}$), and $g(\vx, t)=\exp{(0.5|\vx|^2+t)}$ which is also the analytical solution to above PDEs.

And the reformulated PDEs are (which is used by the proposed model, HC)
\begin{subequations}
\begin{align}
    \frac{\partial u}{\partial t}&=k\big(\nabla\cdot \vp(\vx,t)\big)+f(\vx, t),&&\boldsymbol x\in \Omega\subset \mathbb{R}^d,t\in(0,1], \\
    \vp(\vx, t)&=\nabla u,&&\vx\in \Omega,t\in(0,1], \\
    \vn(\vx) \cdot \vp(\vx,t) &=g(\vx,t),&&\boldsymbol x\in\partial\Omega,t\in(0,1], \label{eq:high_bc} \\
    u(\vx,0)&=g(\vx, 0),&&\boldsymbol x\in \Omega,
\end{align}
\end{subequations}
where $\vp(\vx,t)$ is the introduced extra field.

\paragraph{Construction of the Ansatz}
The solution to the PDEs is a scalar function $u$ and there is only one boundary $\partial\Omega=\{ |\vx|=1 \}$. We now derive the general solution $\vp^{\partial\Omega}$ with respect to Eq.~\eqref{eq:high_bc}
\begin{equation}
    \vp^{\partial\Omega} = \bm{B}(\vx) \mathrm{NN}^{\partial \Omega} + \vn(\vx)g(\vx,t),
\end{equation}
where $\bm{B}(\vx) = \bm{I}_d -\vn(\vx)\vn(\vx)^\top$. And the ansatz $(\hat{u},\hat{\vp})$ is given by
\begin{subequations}
\begin{align}
    \hat{\vp} &= l^{\partial \Omega}(\vx) \mathrm{NN}^{\mathrm{main}}_j(\vx,t)[1:d] + \vp^{\partial\Omega}(\vx,t),\\
    \hat{u} &= \mathrm{NN}^{\mathrm{main}}_j(\vx,t)[d+1] \big(1-\exp{[-\beta_t t]}\big) + g(\vx, 0) \exp{[-\beta_t t]},
\end{align}
\end{subequations}
where $[1:d]$ means taking the first $d$ elements of the output of $\mathrm{NN}^{\mathrm{main}}_j(\vx,t)$ while $[d+1]$ means the last element.

\paragraph{Choices of Extended Distance Functions}
Since $\partial \Omega$ is a ND sphere, we can compute $l^{\partial \Omega}(\vx)$ by subtracting the distance between $\vx$ and the center from the radius (the symbol is different from the previous 2D circles, since $\partial \Omega$ is the outer boundary).

\paragraph{Implementation} 
All the models are trained for 5000 Adam iterations (with a learning rate scheduler of \texttt{ReduceLROnPlateau} from PyTorch and an initial learning rate of 0.01), followed by a L-BFGS optimization until convergence. And the hyper-parameters of each model are listed as follow
\begin{itemize}
    \item \textbf{HC}: The main neural network is an MLP of size $[11] + 4\times[50] + [11]$. The sub-network (corresponding to Eq.~\eqref{eq:high_bc}) is an MLP of size $[11] + 3\times[20] + [10]$. And the hyper-parameters of ``hardness'' is $\beta_t=10$ (here we only have one boundary, so we can construct our ansatz (in the spatial domain) as in Eq.~(3) instead of Eq.~\eqref{eq:hc_time} and $\beta_s$ is no longer needed).
    \item \textbf{PINN}: The ansatz is an MLP of size $[11] + 4 \times [50] + [1]$.
    \item \textbf{PINN-LA}: The parameter of the moving average is $\alpha=0.1$.
    \item \textbf{PINN-LA-2}: The parameter of the moving average is $\alpha=0.1$.
    \item \textbf{PFNN}: The size of the neural network (an MLP) is $[11] + 4\times[50] + [1]$.
\end{itemize}

\subsubsection{Ablation Study: Extra fields}\label{sec_a8_4}
Here, our experiment is divided into two parts where we consider the Poisson's equation and the nonlinear Schrödinger equation, respectively. The relevant details are as follows

\paragraph{Poisson's Equation}
The governing PDEs are described as
\begin{subequations}
\begin{align}
    \Delta u(x)&=-a^2\sin{ax},&&x\in(0,2\pi),\\
    u(x)&=0,&&x=0\lor x=2\pi,
\end{align}
\end{subequations}
where $a=2$ and $u(x)$ is the physical quantity of interest.

And the reformulated PDEs are (corresponding to the \textit{extra fields})
\begin{subequations}
\begin{align}
    \nabla p(x)&=-a^2\sin{ax},&&x\in(0,2\pi),\\
    p(x)&=\nabla u(x),&&x\in(0,2\pi),\\
    u(x)&=0,&&x=0\lor x=2\pi.
\end{align}
\end{subequations}
where $p(x)$ is the introduced extra field.

The two models (PINNs with and without the \textit{extra fields}) are trained with $N_f = 128$ collocation points and $N_b = 2$ boundary points for 10,000 Adam iterations (with a learning rate of 0.001). And we have tested different network architectures, including $[1] + 3\times[50] + [\cdot]$, $[1] + 3\times[100] + [\cdot]$, $[1] + 5\times[50] + [\cdot]$, $[1] + 5\times[100] + [\cdot]$, where for the PINN without the \textit{extra fields}, the number of outputs is 1, and for the PINN with the \textit{extra fields}, the number of outputs is 2. 

\begin{figure}[t]
    \centering
    \subfigure[Poisson's equation]{\label{fig:cv_poission}\includegraphics[width=.40\textwidth]{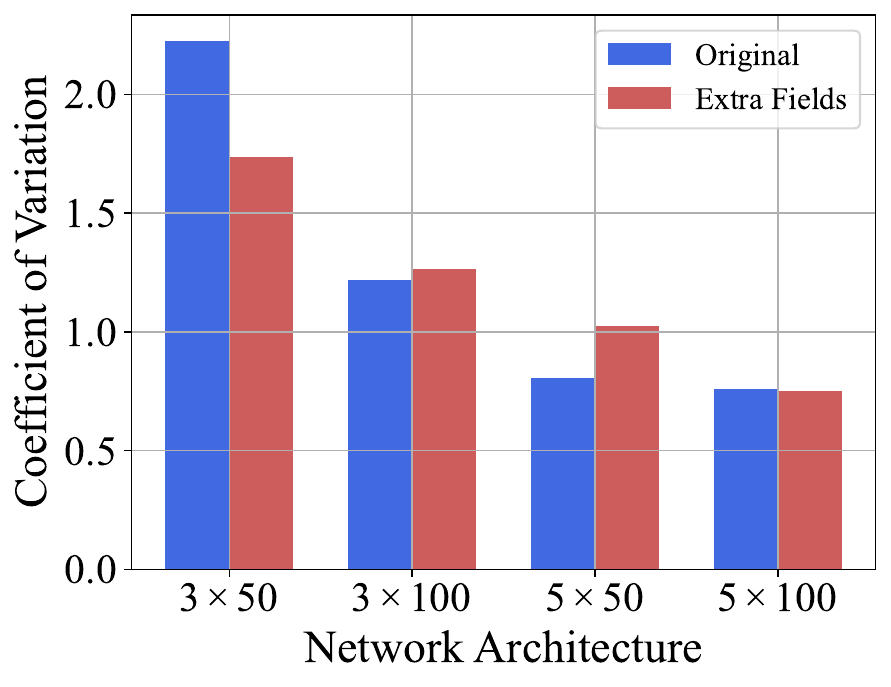}}
    \hspace{1em}
    \subfigure[Schrödinger equation]{\label{fig:cv_sh}\includegraphics[width=.40\textwidth]{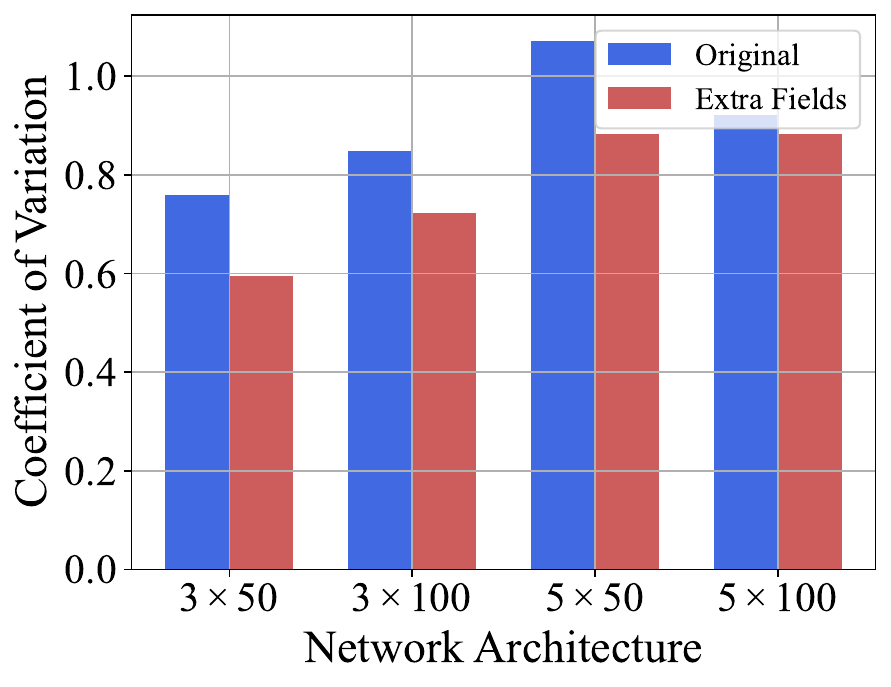}}
    \vspace{-0.1in}
    \caption{The CV of all the values of $\overline{|\nabla_{\bm{\theta}}\mathcal{L}_{\mathcal{F}}|}$ and $\overline{|{\nabla_{\tilde{\bm{\theta}}}\tilde{\mathcal{L}}_{\mathcal{F}}}|}$ during training.}
    \label{fig:abla_cv}
    \vspace{-0.1in}
\end{figure}

\paragraph{Schrödinger Equation}
The governing PDEs are described as
\begin{subequations}
\begin{align}
    i\frac{\partial h}{\partial t}+\frac{1}{2}\frac{\partial^2 h}{\partial x^2}+|h(x,t)|^2h(x,t)&=0,&&x\in(-5,5),t\in(0,\pi/2],\\
    h(t,-5)&=h(t,5),&&t\in(0,\pi/2],\\
    \frac{\partial h}{\partial x}(t,-5)&=\frac{\partial h}{\partial x}(t,5),&&t\in(0,\pi/2],\\
    h(0,x)&=2\sech (x),&&x\in(-5,5),
\end{align}
\end{subequations}
where $h(x,t)$ is the physical quantity of interest.

And the reformulated PDEs are (corresponding to the \textit{extra fields})
\begin{subequations}
\begin{align}
    i\frac{\partial h}{\partial t}+\frac{1}{2}\frac{\partial p}{\partial x}+|h(x,t)|^2h(x,t)&=0,&&x\in(-5,5),t\in(0,\pi/2],\\
    p(x,t)&=\frac{\partial h}{\partial x},&&x\in[-5,5],t\in(0,\pi/2],\\
    h(t,-5)&=h(t,5),&&t\in(0,\pi/2],\\
    \frac{\partial h}{\partial x}(t,-5)&=\frac{\partial h}{\partial x}(t,5),&&t\in(0,\pi/2],\\
    h(0,x)&=2\sech (x),&&x\in(-5,5),
\end{align}
\end{subequations}
where $p(x)$ is the introduced extra field.

The two models (PINNs with and without the \textit{extra fields}) are trained with $N_f = 1000$ collocation points, $N_b = 20$ boundary points, and $N_i = 200$ initial points for 10,000 Adam iterations (with a learning rate of 0.001). And we have tested different network architectures, including $[1] + 3\times[50] + [\cdot]$, $[1] + 3\times[100] + [\cdot]$, $[1] + 5\times[50] + [\cdot]$, $[1] + 5\times[100] + [\cdot]$, where for the PINN without the \textit{extra fields}, the number of outputs is 2, and for the PINN with the \textit{extra fields}, the number of outputs is 4.

\paragraph{Experimental Results}
We report the ratio of the the moving variance (MovVar) of $\overline{|\nabla_{\bm{\theta}}\mathcal{L}_{\mathcal{F}}|}$ 
to that of $\overline{|{\nabla_{\tilde{\bm{\theta}}}\tilde{\mathcal{L}}_{\mathcal{F}}}|}$ at each iteration during training, where the window size of the MovVar is 500 and after the MovVar, a moving average filter with a window size of 500 is applied. The results are shown in Figure~4. Besides, we also calculate the coefficient of variation (CV) of all the values of $\overline{|\nabla_{\bm{\theta}}\mathcal{L}_{\mathcal{F}}|}$ and $\overline{|{\nabla_{\tilde{\bm{\theta}}}\tilde{\mathcal{L}}_{\mathcal{F}}}|}$, respectively. And we give the results in Figure~\ref{fig:abla_cv}. Using the 
CV as a criterion, we also find that the \textit{extra fields} significantly reduces the gradient oscillations during training, especially for the complex nonlinear PDEs.

\subsection{Empirical Analysis of Convergence}\label{sec_a9}
In this subsection, we will empirically analyze the convergence of our method as well as some representative baselines in the context of the simulation of a 2D battery pack (see Section~5.2). We now report the training history with respect to iterations in Figure~\ref{fig:convergence}. The left axis shows the loss of PINNs (including PINN, PINN-LA, and PINN-LA-2) while the right axis shows the loss of our method, HC. The PINN loss is computed by adding up all the loss terms (including the losses of PDEs and BCs), where the loss weights are ignored for PINN-LA and PINN-LA-2. The first 5000 iterations are trained with Adam (separated by the gray dotted line), and the last 15000 are trained with L-BFGS. 

\begin{figure}[t!]
    \centering
    \includegraphics[width=0.6\linewidth]{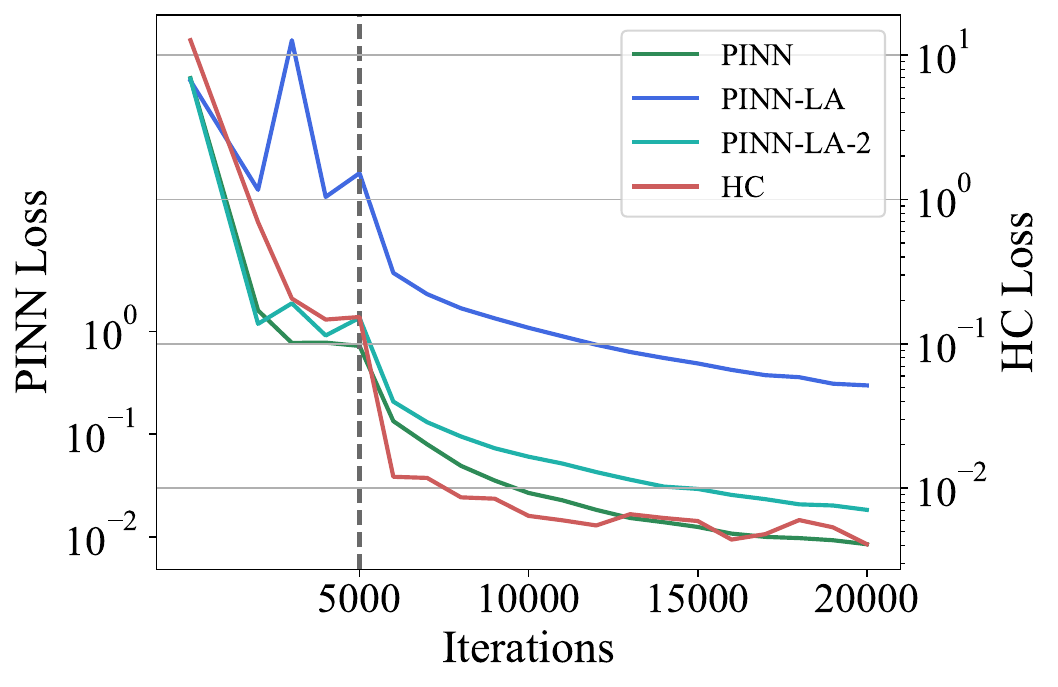}
    \vspace{-0.1in}
    \caption{The convergence history of the simulation of a 2D battery pack.}
    \label{fig:convergence}
    \vspace{-0.1in}
\end{figure}

\begin{table}[b!]
    \vspace{-0.1in}
    \caption{Parallel experimental results of the simulation of a 2D battery pack (MAE of $T$)}
    \vspace{0.1in}
    \centering
    \begin{small}
    \begin{tabular}{lllll}
    \toprule
    & $t=0$     & $t=0.5$     & $t=1$ & average\\
    \midrule
    PINN & $0.1232\pm 0.0219$     & $0.0417 \pm 0.0141$ & $0.0263 \pm 0.0078$ & $0.0499 \pm 0.0135$\\
    PINN-LA & $0.1083\pm 0.0266$     & $0.0927 \pm 0.0372$  & $0.1168\pm 0.0739$ & $0.0969\pm 0.0385$\\
    PINN-LA-2 & $0.1065\pm 0.0059$     & $0.0322\pm 0.0031$  & $\textbf{0.0200}$ $\pm \ 0.0020$ & $0.0400\pm 0.0031$\\
    FBPINN & $0.0763\pm 0.0071$     & $0.0258\pm 0.0037$& $0.0205\pm 0.0041$ & $0.0318\pm 0.0027$\\
    xPINN & $0.2085\pm 0.0252$     & $0.1144\pm 0.0194$  & $0.1352\pm 0.0241$ & $0.1310\pm 0.0194$\\
    PFNN & $\textbf{0.0000}$ $\pm \ 0.0000$  & $0.3769\pm 0.0974$  & $0.6012\pm 0.2274$ & $0.3522\pm 0.1019$\\
    PFNN-2 & $\textbf{0.0000}$ $\pm \ 0.0000$    & $0.3814\pm 0.0381$  & $0.5247\pm 0.0394$ & $0.3365\pm 0.0236$\\
    \midrule
    HC & $\textbf{0.0000}$ $\pm \ 0.0000$ & $\textbf{0.0244}$ $\pm \ 0.0010$  & $0.0226\pm 0.0012$ & $\textbf{0.0219}$ $\pm \ 0.0007$\\
    \bottomrule
    \end{tabular}
    \end{small}
    \label{parallel_tab_pack_1}
\end{table}

From the results in Figure~\ref{fig:convergence}, we can see that the loss functions of all models drop significantly after switching to L-BFGS. This shows that L-BFGS can further promote convergence through utilizing the information of the second derivatives of the loss function. However, we may not start with L-BFGS because it can easily lead to divergence. We consider Adam+L-BFGS to be a practical choice. Furthermore, we find that the convergence of PINNs is negatively affected by the tricks of learning rate annealing algorithm, especially the PINN-LA without our modification. The HC has the fastest convergence rate among all models. This means that the hard-constraint method or extra fields may be helpful in accelerating convergence.

\subsection{Parallel Experiments}\label{sec_a10}
In this subsection, we revisit the three experiments in Section~5.2$\sim$5.4 and perform parallel tests in 5 runs to assess the significance of the results. We report the testing results (along with the $95\%$ confidence intervals) in Table~\ref{parallel_tab_pack_1}$\sim$\ref{parallel_high_2}. From the results, we can see that our method, HC still outperforms all the other baselines. Besides, HC has the least variation, which shows that the hard-constraint methods can improve the stability of training.

\begin{table}[t]
    \vspace{-0.1in}
    \caption{Parallel experimental results of the simulation of a 2D battery pack (MAPE of $T$)}
    \vspace{0.1in}
    \centering
    \begin{small}
    \begin{tabular}{lllll}
    \toprule
    & $t=0$     & $t=0.5$     & $t=1$ & average\\
    \midrule
    PINN & $123.16 \pm 21.91\%$ & $10.97 \pm 3.57\%$  & $4.52\pm 0.98\%$ & $23.58 \pm 5.61\%$\\
    PINN-LA & $108.14 \pm 26.57\%$ & $24.15 \pm 8.07\%$  & $17.98\pm 8.97\%$ & $33.64 \pm 8.89\%$\\
    PINN-LA-2 & $106.39 \pm 5.86\%$ & $8.70 \pm 0.57\%$  & $\textbf{3.86}$ $\pm \ 0.36\%$ & $19.56 \pm 1.00\%$\\
    FBPINN & $76.25 \pm 7.06\%$ & $7.69 \pm 0.68\%$  & $5.26\pm 0.71\%$ & $15.07 \pm 0.57\%$\\
    xPINN & $208.36 \pm 25.21\%$ & $26.25 \pm 4.99\%$  & $18.15\pm 3.05\%$ & $49.60 \pm 7.37\%$\\
    PFNN & $\textbf{0.02}$ $ \pm \ 0.00\%$ & $94.63 \pm 17.68\%$  & $105.38\pm 27.16\%$ & $80.92 \pm 15.08\%$\\
    PFNN-2 & $\textbf{0.02}$ $ \pm \ 0.00\%$ & $71.39 \pm 6.98\%$  & $82.04\pm 8.90\%$ & $61.65 \pm 4.64\%$\\
    \midrule
    HC & $\textbf{0.02}$ $ \pm \ 0.00\%$ & $\textbf{5.29}$ $ \pm \ 0.16\%$  & $3.87\pm 0.14\%$ & $\textbf{5.22}$ $ \pm \ 0.17\%$\\
    \bottomrule
    \end{tabular}
    \end{small}
    \label{parallel_tab_pack_2}
\end{table}

\begin{table}[t]
    \caption{Parallel experimental results of the simulation of an airfoil (MAE)}
    \centering
    \begin{small}
    \begin{tabular}{llll}
    \toprule
    & $u_1$     & $u_2$     & $p$ \\
    \midrule
    PINN & $0.4234\pm 0.0809$     & $0.0681\pm 0.0162$ & $0.3204\pm 0.1404$\\
    PINN-LA & $0.4467\pm 0.0450$     & $0.0630\pm 0.0061$& $0.3028\pm 0.0480$ \\
    PINN-LA-2 & $0.4542\pm 0.0875$     & $0.0679\pm 0.0111$  & $0.3230\pm 0.1115$ \\
    FBPINN & $0.3975\pm 0.0221$     & $0.0544\pm 0.0030$& $0.2650\pm 0.0059$\\
    xPINN & $0.6942\pm 0.0432$     & $0.0581\pm 0.0013$& $1.1587\pm 0.1251$ \\
    \midrule
    HC & $\textbf{0.2824}$ $\pm\  0.0215$ & $\textbf{0.0435}$  $\pm\  0.0024$ & $\textbf{0.2144}$ $\pm\  0.0114$\\
    \bottomrule
    \end{tabular}
    \end{small}
    \label{tab_airfoil_parallel}
    \vspace{-0.1in}
\end{table}

\begin{table}[b]
    \vspace{0.1in}
    \caption{Parallel experimental results of the simulation of an airfoil (WMAPE)}
    \centering
    \begin{small}
    \begin{tabular}{llll}
    \toprule
    & $u_1$     & $u_2$     & $p$ \\
    \midrule
    PINN & $0.5358\pm 0.1024$  & $1.1709\pm 0.2778$ & $0.2921\pm 0.1279$\\
    PINN-LA & $0.5653\pm 0.0570$  & $1.0819\pm 0.1048$ & $0.2760\pm 0.0437$\\
    PINN-LA-2 & $0.5747\pm 0.1106$  & $1.1670\pm 0.1920$ & $0.2944\pm 0.1016$\\
    FBPINN & $0.5030\pm 0.0279$  & $0.9347\pm 0.0517$ & $0.2416\pm 0.0054$\\
    xPINN & $0.8784\pm 0.0546$  & $0.9986\pm 0.0225$ & $1.0562\pm 0.1140$\\
    \midrule
    HC & $\textbf{0.3573}$ $\pm\  0.0272$  & $\textbf{0.7472}$  $\pm\  0.0418$& $\textbf{0.1954}$ $\pm\  0.0104$\\
    \bottomrule
    \end{tabular}
    \end{small}
    \label{tab_airfoil_parallel_2}
    \vspace{-0.1in}
\end{table}

\begin{table}[htbp]
    \vspace{-0.1in}
    \caption{Parallel experimental results of the high-dimensional heat equation (MAE of $u$)}
    \vspace{0.1in}
    \centering
    \begin{small}
    \begin{tabular}{lllll}
    \toprule
    & $t=0$     & $t=0.5$     & $t=1$ & average\\
    \midrule
    PINN & $0.0204\pm 0.0148$     & $0.0357 \pm 0.0104$ & $0.1600 \pm 0.0600$ & $0.0525 \pm 0.0173$\\
    PINN-LA & $0.0430\pm 0.0751$     & $0.3039 \pm 0.6691$  & $0.8011\pm 1.7228$ & $0.3464\pm 0.7531$\\
    PINN-LA-2 & $0.0287\pm 0.0670$     & $0.2071\pm 0.6524$  & $0.5933\pm 1.7225$ & $0.2455\pm 0.7433$\\
    PFNN & $\textbf{0.0000}$ $\pm \ 0.0000$  & $0.0895\pm 0.0727$  & $0.2130\pm 0.1790$ & $0.0963\pm 0.0788$\\
    \midrule
    HC & $\textbf{0.0000}$ $\pm \ 0.0000$ & $\textbf{0.0028}$ $\pm \ 0.0006$  & $\textbf{0.0046}$ $\pm \ 0.0008$ & $\textbf{0.0027}$ $\pm \ 0.0006$\\
    \bottomrule
    \end{tabular}
    \end{small}
    \label{parallel_high_1}
\end{table}

\begin{table}[htbp]
    \caption{Parallel experimental results of the high-dimensional heat equation (MAPE of $u$)}
    \vspace{0.1in}
    \centering
    \begin{small}
    \begin{tabular}{lllll}
    \toprule
    & $t=0$     & $t=0.5$     & $t=1$ & average\\
    \midrule
    PINN & $1.15 \pm 0.51\%$ & $1.41 \pm 0.41\%$  & $3.83\pm 1.45\%$ & $1.75 \pm 0.58\%$\\
    PINN-LA & $2.86 \pm 5.07\%$ & $12.06 \pm 26.54\%$  & $19.35\pm 41.65\%$ & $11.46 \pm 24.75\%$\\
    PINN-LA-2 & $1.92 \pm 4.55\%$ & $8.19 \pm 25.80\%$  & $14.29\pm 41.54\%$ & $8.00 \pm 24.25\%$\\
    PFNN & $\textbf{0.00}$ $ \pm \ 0.00\%$ & $3.59 \pm 2.93\%$  & $5.19\pm 4.36\%$ & $3.20 \pm 2.60\%$\\
    \midrule
    HC & $\textbf{0.00}$ $ \pm \ 0.00\%$ & $\textbf{0.11}$ $ \pm \ 0.03\%$  & $\textbf{0.11}$ $\pm \ 0.02\%$ & $\textbf{0.10}$ $ \pm \ 0.02\%$\\
    \bottomrule
    \end{tabular}
    \end{small}
    \label{parallel_high_2}
\end{table}

\subsection{Ethics Statement}\label{sec_a11}
PDEs have important applications in many fields, including applied physics, automobile manufacturing, economics, and the aerospace industry. Solving PDE via neural networks has attracted much attention in recent years, and it may be applied to the above fields in the future.
Our method also belongs to this kind. However, the method for solving PDE based on neural networks has no theoretical explanation or safety guarantee for the time being. Applying such methods to security-sensitive domains may lead to unexpected incidents and the cause of the accident may be hard to diagnose. Possible solutions include developing alternatives with theoretical interpretability or using safeguards.

\bibliographystyle{plain}
\bibliography{neurips_2022}

\end{document}